\documentclass{article}

\usepackage{microtype}
\usepackage{graphicx}
\usepackage{subcaption}
\usepackage{booktabs} 
\usepackage{tikz}
\usepackage{amsmath}
\usepackage{amsmath}
\usepackage{amssymb}
\usepackage{mathtools}
\usepackage{amsthm}
\usepackage{algorithm}
\usepackage{algorithmic}
\usepackage{thm-restate}
\usepackage{bbm}
\usepackage{xspace}
\usepackage{enumitem}
\usepackage{multirow}

\usepackage{hyperref}

\allowdisplaybreaks[4]
\usepackage[preprint]{icml2026}

\theoremstyle{plain}

\theoremstyle{definition}

\theoremstyle{remark}

\usepackage[textsize=tiny]{todonotes}

\usepackage{amsmath,amsfonts,amsthm,bm}

\def\eqref#1{equation~\ref{#1}}
\def\Eqref#1{Equation~\ref{#1}}

\def\1{\bm{1}}

\DeclareMathAlphabet{\mathsfit}{\encodingdefault}{\sfdefault}{m}{sl}
\SetMathAlphabet{\mathsfit}{bold}{\encodingdefault}{\sfdefault}{bx}{n}

\newcommand{\E}{\mathbb{E}}
\newcommand{\R}{\mathbb{R}}

\DeclareMathOperator*{\argmax}{arg\,max}

\newtheorem{defin}{Definition}

\newtheorem{thm}{Theorem}
\newtheorem{lem}{Lemma}

\newtheorem{prop}[thm]{Proposition}

\newtheorem{ass}[thm]{Assumption}

\newcommand{\N}{\mathbb{N}}

\newcommand{\Prob}{\mathbb{P}}

\newcommand{\bigot}{\widetilde{\mathcal O}}

\newcommand{\As}{\mathcal A}

\newcommand{\Fs}{\mathcal F}

\newcommand{\Os}{\mathcal O}

\newcommand{\Ss}{\mathcal S}

\newcommand{\Zs}{\mathcal Z}

\DeclareRobustCommand{\exavi}{\textsc{ExAVI}\@\xspace}
\DeclareRobustCommand{\exaq}{\textsc{ExAQ}\@\xspace}

\icmltitlerunning{Learning in Markov Decision Processes with Exogenous Dynamics}

\begin{document}

\twocolumn[
  \icmltitle{Learning in Markov Decision Processes with Exogenous Dynamics}



  \icmlsetsymbol{equal}{*}

  \begin{icmlauthorlist}
    \icmlauthor{Davide Maran}{equal,uni}
    \icmlauthor{Davide Salaorni}{equal,uni}
    \icmlauthor{Marcello Restelli}{uni}
  \end{icmlauthorlist}

  \icmlaffiliation{uni}{Department of DEIB, Politecnico di Milano, Milan, Italy}

  \icmlcorrespondingauthor{Davide Maran}{davide.maran@polimi.it}
  \icmlcorrespondingauthor{Davide Salaorni}{davide.salaorni@polimi.it}

  \icmlkeywords{Machine Learning, ICML}

  \vskip 0.3in
]



\printAffiliationsAndNotice{}  

\begin{abstract}
Reinforcement learning algorithms are typically designed for generic Markov Decision Processes (MDPs), where any state-action pair can lead to an arbitrary transition distribution. In many practical systems, however, only a subset of the state variables is directly influenced by the agent’s actions, while the remaining components evolve according to exogenous dynamics and account for most of the stochasticity.
In this work, we study a structured class of MDPs characterized by exogenous state components whose transitions are independent of the agent’s actions. We show that exploiting this structure yields significantly improved learning guarantees, with only the size of the exogenous state space appearing in the leading terms of the regret bounds. We further establish a matching lower bound, showing that this dependence is information-theoretically optimal. Finally, we empirically validate our approach across classical toy settings and real-world-inspired environments, demonstrating substantial gains in sample efficiency compared to standard reinforcement learning methods.
\end{abstract}

\section{Introduction}

While reinforcement learning (RL)~\citep{sutton1998reinforcement} has demonstrated remarkable success across a wide range of applications, its deployment in real-world settings remains a significant challenge~\citep{dulacarnold2019challenges}. Typically, RL problems are framed as Markov Decision Processes (MDPs)~\citep{puterman2014markov}, where an action is modeled as inducing a transition on the entire state, often disregarding the inherent independence of certain state variables.

In many practical scenarios, however, agents observe dynamics that are not under their direct control and evolve independently of the chosen action. This distinction is evident across numerous real-world domains: in finance, stock prices fluctuate independently of a retail investor's decisions; in reservoir management, controllers cannot influence weather conditions; and in energy systems, renewable generation is driven by environmental factors rather than operator intent. 
Crucially, these \emph{exogenous} phenomena hinder effective exploration and exacerbate the \emph{temporal credit assignment problem}~\citep{minsky1961steps}. Since the reward signal is often influenced by stochastic fluctuations from uncontrollable factors, it becomes difficult for the agent to disentangle the marginal contribution of its own actions from environmental noise. This low signal-to-noise ratio leads to high-variance gradient estimates, requiring a significantly larger number of samples to distinguish optimal policies from suboptimal ones. Furthermore, adopting standard exploration mechanisms to identify spurious correlations between actions and exogenous signals is pointless, since these dependencies are, by definition, nonexistent.
Furthermore, while these variables are essential components of the observation space, including them significantly inflates the state space complexity. In this scenario, the standard MDP formulation fails to explicitly separate uncontrollable variables from the controllable state space, thereby contributing to increased problem complexity, suboptimal exploration, and sample inefficiency.  

In this work, we investigate whether explicitly modeling this distinction can be leveraged to enhance learning. Unlike exogenous signals, the remaining \emph{controllable} portion of the environment (comprising \emph{endogenous} variables) typically evolves deterministically---or even stochastically, but in a reasonably known manner---in response to control actions. For instance, in algorithmic trading, while the asset price is stochastic and exogenous, the remaining budget is a deterministic function of the previous portfolio and the executed trade. By isolating the stochastic exogenous dynamics from the (often deterministic) controllable dynamics, we can design algorithms that better exploit the exogenous dynamics of the system.

To this end, we propose a novel refinement of the classical MDP framework that explicitly incorporates \emph{partial controllability}. Our formulation relies on a factorization of the state space into \textit{controllable} and \textit{uncontrollable} components, where the agent's actions cannot affect the evolution of the latter. We describe a mathematical framework building on MDPs and explore it from theoretical, algorithmic, and experimental perspectives. Specifically, we analyze the performance guarantees of both \textit{model-based} and \textit{model-free} approaches designed for this formulation, comparing them against their respective MDP-based counterparts in the \textit{finite-horizon tabular} setting. Finally, we empirically validate these algorithms on various toy environments and a real-world application involving complex endogenous and exogenous signals.

The main contributions of this work are summarized as follows:
\setlist{nosep}
\begin{enumerate}[noitemsep, topsep=0pt]
\item We introduce the \textit{Partially Controllable Markov Decision Process} (PCMDP), a structured extension of the classical MDP framework that explicitly distinguishes controllable from uncontrollable variables within the state space. 
\item We propose two algorithms tailored to this framework in the finite-horizon setting: (i) \emph{Exogenous-Aware Value Iteration} (\exavi), an enhanced version of Value Iteration (\textsc{VI})~\cite{sutton1998reinforcement} as a \emph{model-based} approach; and (ii) \emph{Exogenous-Aware Q-Learning} (\exaq), an extension of the classical Q-Learning (\textsc{QL}) algorithm~\citep{watkins1992q} as a \textit{model-free} solution. For both methods, we provide theoretical guarantees and regret bounds. 
\item We empirically validate these algorithms through comparative experiments on classical toy environments and a real-world domain, demonstrating their superior sample efficiency compared to standard MDP baselines. 
\end{enumerate}

The paper is organized as follows. In Section~\ref{sec:pcmdp}, we introduce the PCMDP framework. In Section~\ref{sec:algos}, we present the tailored algorithms with pseudocode and theoretical guarantees. In Section~\ref{sec:experiments}, we validate the aforementioned methods. Finally, in Section~\ref{sec:relatedworks}, we review existing literature connected to our work.

\section{Framework Formulation}\label{sec:pcmdp}
We start by recalling a few definitions about standard MDPs. A finite-horizon MDP is a tuple $M=(\Ss, \As, p, r, H)$, where $\Ss$ is the set of states, $\As$ is the set of actions, with cardinality $S$ and $A$, respectively, $p=\{p_h\}_{h=1}^{H-1}$ is the sequence of transition functions mapping, for each step $h \in [H-1] := \{1,\dots,H-1\}$, a pair $(s,a) \in \Ss \times \As$ to a probability distribution $p_h(\cdot |s,a)$ over $\Ss$, while the initial state $s_1$ may be arbitrarily chosen by the environment at each episode; $r=\{r_h\}_{h=1}^H$ is the sequence of reward functions mapping, for each step $h \in [H]$, a pair $(s,a)$ to a real number $[0,1]$, and $H$ is the horizon. 
In most applications, as in many theoretical papers, the reward function is known. Conversely, the transition function, which accounts for most of the complexity of the problem, is unknown.
At each episode $k\in [K]$, the agent chooses a policy $\pi_k = \{\pi_{h}^k\}_{h=1}^H$, which is a sequence of step-dependent mappings from $\Ss$ to probability distributions over $\As$. For each step $h \in [H]$, the action is chosen as $a_h\sim \pi_{h}^k(\cdot |s_h)$ and the agent gains reward $r_h(s_h,a_h)$; hence, the environment transitions to the next state $s_{h+1}\sim p_h(\cdot |s_h,a_h)$.

\textbf{Value Functions and Bellman Operators.}~~
The state-action value function (or \emph{Q-function}) quantifies the expected sum of the rewards obtained under a policy $\pi$, starting from a state-step pair $(s,h)\in\Ss\times [H]$ and fixing the first action at step $h$ to some $a\in\As$. Formally:
\begin{equation}\label{eq:action_state_val}
    Q_h^{\pi}(s,a) := \mathbb{E}_{\pi} \left[ \sum_{\ell=h}^{H} r_\ell(s_\ell,a_\ell)\bigg | s_h=s,a_h=a  \right],
\end{equation}
where $\E_{\pi}$ denotes expectation with respect to the stochastic process $a_h \sim \pi_h(\cdot|s_h)$ and $s_{h+1} \sim p_h(\cdot|s_h,a_h)$ for all $h \in [H]$.
The state value function (or \emph{V-function}) is defined as $V_h^\pi(s):=\E_{a\sim\pi_h(\cdot\vert s)}[Q_h^\pi(s,a)]$, for all $s\in\Ss$. The supremum of the value functions across all the policies is referred to as the optimal value function: $Q_h^\star(s,a):=\sup_\pi Q_h^\pi(s,a)$ for the Q-function and $V_h^\star(s):=\sup_{\pi}V_h^\pi(s)$ for the V-function.

An explicit way to find the optimal value function is given by the \textit{Bellman optimality operator}, which is defined, for every function $f:\Zs\to \R$, as
$\mathcal T_h f(s,a):=r_h(s,a)+\E_{s'\sim p_h(\cdot|s,a)}\left[\sup_{a'\in \As}f(s',a')\right].$
In fact, it is easy to show that $Q_h^\star = \mathcal T_h Q_{h+1}^\star$ at every step, while the optimal state-value function is obtained simply as $V_h^\star(s) = \sup_{a\in \As}Q_h^\star(s,a)$.\footnote{The existence of optimal policies is more subtle than in the finite-action case~\citep{bertsekas1996stochastic}, but this does not prevent us from defining a meaningful notion of regret.}

\textbf{Agent's Regret.}~~We evaluate the performance of an agent, i.e., of a policy $\pi_k$ played at episode $k\in[K]$, with its expected total reward, namely the V-function evaluated in the initial state $V^{\pi_k}_1(s_1^k)$. The goal of the agent is to play a sequence of policies $\{\pi_k\}_{k=1}^K$ to minimize the cumulative difference between the optimal performance $V^\star_1(s_1^k)$ and its performance $V_1^{\pi_k}(s_1^k)$, given the initial state $s_1^k$ chosen by the environment. This quantity takes the name of \textit{(cumulative) regret}, indicated as $R_K:=\sum_{k=1}^K \left(V_1^\star(s_1^k) - V_1^{\pi_k}(s_1^k) \right).$
The regret is a non-negative quantity and, by the normalization condition, it cannot exceed $HK$ as every term in the sum is bounded by $H$.
Notice that if $R_K=o(K)$, then the average performance of the chosen policies will converge to optimal performance for the number of training episodes growing to $+\infty$.
An algorithm choosing a sequence of policies with this property is called \emph{no-regret}.

\subsection{The PCMDP Framework} 
Motivated by real-life applications of RL, we study a subclass of MDPs that is built on an additional assumption to account for exogenous variables in the state space. As follows, we provide the formal definition of PCMDP.

\begin{defin}\textit{(PCMDP)}\label{def:pcmdp}
    A \emph{Partially Controllable MDP} is a Markov Decision Process whose state space $\Ss=\Ss^\bullet\times \Ss^\diamond$ and, at any time step $h$, the transition function writes as:
    \begin{align*}
        p_h(s_{h+1}|s_h,a_h) &= p_h^\diamond(s_{h+1}^\diamond|s_h^\diamond,s_h^\bullet,a_h) p_h^\bullet(s_{h+1}^\bullet|s_h^\bullet)
        \\ \text {with }s_h&=[s_h^\diamond,s_h^\bullet]\in \Ss^\diamond\times \Ss^\bullet=\Ss.
    \end{align*}
\end{defin}
We call $\Ss^\diamond$ the endogenous \emph{controllable} part, $\Ss^\bullet$ the exogenous \textit{uncontrollable} part, and $S^\diamond,S^\bullet$ their cardinalities, respectively.

As the naming convention suggests, $s^\bullet$ represents the part of the state outside the agent's control, since its transition does not depend on the agent's actions. The dynamics of $s^\bullet$ are unknown and may be very complex. On the other side, $s^\diamond$ contains the variables that are influenced by everything else, including the actions. We assume it has a simple (even if possibly stochastic) dynamics that the learner knows. 
\begin{ass}\label{ass:pdiam}
    The agent has full knowledge of $p_h^\diamond$, for every $h\in [H]$.
\end{ass}
While this choice may seem restrictive, it reflects the dynamics of many real-world problems. Hereafter, we provide a few examples from real-world domains. Further examples are provided in Appendix~\ref{app:realworld}.

\paragraph{Trading.} Consider, for instance, a \emph{trading problem}. While we explicitly refer to the optimal execution setting~\citep{almgren2001optimal}, the interaction of a retail investor with the market can be naturally framed as a PCMDP. Suppose we trade a single stock at discrete time steps. 
At each time step $t$, the agent can decide whether to \textit{buy} or \textit{sell} a specific number of stock units. The state of the problem is given by $s_t = (\bm {\omega}_t, u_t, b_t)$, where $\bm \omega$ is some information vector that contains the current price $\omega_t$ plus some relevant features for its evolution (such as market volumes, volatility in the last five minutes, or past values for the price). $u_t$ is the number of stock units currently held by the agent, and $b_t$ is the agent's available budget for purchasing stocks. The agent selects an action $a_t \in \mathbb{Z}$, where $a_t > 0$ corresponds to buying  $a_t$ units, and $a_t < 0$ to selling $|a_t|$ units. The transitions for the agent's internal state variables are deterministic and governed by:
$$
u_{t+1}=u_t+a_t,\qquad c_{t+1}=c_t-a_tp_t.
$$
Conversely, the vector $\bm \omega_{t+1}$, that contains $\omega_{t+1}$ evolves exogenously,
depending solely on $\bm \omega_t$. 
At first glance, this decomposition assumes that the agent's individual trading behavior has no impact on market dynamics. Nonetheless, the work by \citep{almgren2001optimal} shows that, under mild assumptions, both permanent and temporary impact can be implicitly modeled by changing the reward function while keeping a price transition that is not influenced by the agent.
Accordingly, the state can be partitioned into a controllable component $s_t^\diamond = (u_t, c_t)$ and an uncontrollable component $s_t^\bullet = (\bm \omega_t)$. This decomposition aligns naturally with the PCMDP framework, as only a specific subset of the state space is influenced by the agent’s actions.

\paragraph{Reservoir Management.} A second example is \emph{water reservoir management}. In this control task, the goal is to regulate the outflow from a water reservoir (via a dam) to meet daily water demand while avoiding overflow when the reservoir's capacity is exceeded. The state of the system at time $t$ is represented as $s_t = (l_t, q_t)$, where $l_t$ is the current water level in the reservoir and $q_t$ is the net natural inflow of water, which includes rainfall, evaporation, and environmental effects. The agent selects an action  $a_t \geq 0$, representing the amount of water released. The system dynamics are then given by:
$$
l_{t+1}=l_t+q_t-a_t,
$$
while the inflow $q_{t+1}$ evolves according to an exogenous stochastic process dependent on environmental factors and weather conditions (not the agent's decisions).
Similarly to the trading case, this environment exhibits partial controllability: the agent can influence only the water level $l_t$, while the inflow $q_t$ evolves independently. Thus, the state can be decomposed into $s_t^\diamond = (l_t)$ and $s_t^\bullet = (q_t)$.\footnote{The net inflow could theoretically depend on the controllable state, since evaporation rates depend on surface area and water level $l_t$. However, for reservoirs with approximately vertical walls (constant surface area) and with negligible evaporation losses compared to tributary inflows, our exogenous assumption holds.}

\section{Algorithms}\label{sec:algos}
In this section, we present PCMDP-specific algorithms with theoretical regret guarantees, including a \emph{model-based} and a \emph{model-free} approach.

\subsection{Model-based Approach: \exavi}
Model-based RL algorithms estimate the environment's dynamics, i.e., the transition probabilities, and leverage this model to plan optimal actions, often via Value Iteration \cite{sutton1998reinforcement}. These methods are sample-efficient because they reuse experience to simulate possible futures. Among these methods, \textsc{UCBVI} (Upper Confidence Bound Value Iteration) by~\citet{azar2017minimax} adds optimistic bonuses to value estimates based on uncertainty, encouraging exploration in a principled way. It balances exploration and exploitation by favoring actions with either high estimated value or high uncertainty, and it achieves provably efficient regret bounds in tabular MDPs.

In this section, we present our \emph{Exogenous--Aware Value Iteration} (\exavi), a novel algorithm which improves the guarantees of \textsc{UCBVI} by exploiting the structure of the PCMDP setting. First, assuming to already know the endogenous dynamics of $p^\diamond$, \exavi needs to estimate only the part of the transition corresponding to $p^\bullet_h$, in contrast to \textsc{UCBVI}, which estimates $p_h$. Second, and more interesting, \exavi works without any bonus term: no \emph{optimism} is needed. Intuitively, this is because the epistemic uncertainty resides entirely within the exogenous dynamics $p^\bullet_h$. Since the evolution of $s^\bullet$ is independent of the agent's actions, "active" exploration is unnecessary; the agent simply observes the exogenous process unfold regardless of its policy.

\paragraph{Algorithm Structure.} Fix $h\le H, k\le K$. 
For any state $s^\bullet\in \Ss^\bullet$, let $n_h^k(s^\bullet)$ denote the visitation count up to episode $k$: 
$$
n_h^k(s^\bullet):=\sum_{\tau=1}^k \mathbbm{1}\{s_h^{\bullet,\tau}=s^\bullet\},
$$
\begin{equation} 
    \widehat{p}_h^{\bullet,k}(\bar{s}^{\bullet}|s^\bullet) := \frac{1}{n_h^k(s^\bullet)} \sum_{\tau=1}^{k} \mathbbm{1}\{s_h^{\bullet,\tau}=s^\bullet, s_{h+1}^{\bullet,\tau}=\bar{s}^{\bullet}\}. \label{eq:phat}
\end{equation}
This definition is central to our analysis: standard concentration arguments show that $\widehat p_h^{\bullet,k}(\bar s^{\bullet}|s^\bullet)$ converges to $p^\bullet_h$ at an optimal rate, the more samples we collect. In contrast, the standard \textsc{UCBVI} estimator of the whole $p_h$ in a general MDP is defined as follows:
\begin{equation} 
\widehat p_h^{k}(\bar s|s,a) := \frac{\sum_{\tau=1}^{k}\mathbbm{1}\{s_h^\tau=s, a_h^\tau=a,s_{h+1}^\tau=\bar s\}}{\max\{1, \sum_{\tau=1}^{k}\mathbbm{1}\{s_h^\tau=s, a_h^\tau=a\}\}}. 
\end{equation}
The denominator of the previous formula lies in sharp contrast with our formula in \Eqref{eq:phat}. Here, it depends on the agent's actions; therefore, exploration is needed for the estimator to converge. This is the reason why \textsc{UCBVI} needs optimism, while our \exavi does not.

\begin{algorithm}[tb]
\caption{Exogenous--Aware Value Iteration (\textsc{ExAVI})}
\label{alg:exavi}
\begin{algorithmic}[1]
\REQUIRE Controllable model $p^\diamond$, discrete spaces $\Ss^\diamond,\Ss^\bullet,\As$, reward function $r_h$ for all $h \in [H]$.
\STATE Initialize $\widehat{p}^{\bullet, 1}$ and $\pi^1$ arbitrarily.
\FOR {$k=1,\dots,K$}
    \FOR {$h=1,\dots H$}
        \STATE $a_h^k=\pi_h^k(s_h^k)$        
    \ENDFOR
    \STATE Update $\widehat p_h^{\bullet,k}$ with~\Eqref{eq:phat} using  $\{s_h^{\bullet,k}\}_{h=1}^H$
    \STATE $\widehat p_h^{k}(\bar s|s,a)\gets p_h^\diamond(\bar s^\diamond|s,a)\widehat p_h^{\bullet,k}(\bar s^\bullet|s^\bullet) $ 
    \STATE $\{\widehat Q_h^{k}\}_{h=1}^H\gets$ \textsc{ValueIteration}$(\widehat p_h^{k}, r_h)$
    \STATE $\pi_h^{k+1}(\cdot)=\argmax_{a\in \As}\widehat Q_h^{k}(\cdot,a)$
\ENDFOR
\end{algorithmic}
\end{algorithm}

\paragraph{Theoretical Guarantees.} As anticipated, \textsc{UCBVI} can achieve theoretical guarantees on the regret by facing, with optimism, the \emph{exploration-exploitation} dilemma. Its regret (using Hoeffding-type bonuses) scales as $\bigot(H^2\sqrt{SAK}+H^3S^2A)$, with high probability. In our case, where $S=S^\bullet S^\diamond$, this writes as $\bigot(H^2\sqrt{S^\bullet S^\diamond AK}+H^3(S^\bullet S^\diamond )^2A)$. In contrast, our algorithm enjoys the following regret bound.
\begin{restatable}{thm}{regretboundone}
    Under Definition \ref{def:pcmdp}, with probability at least $1-\delta$, the regret of \exavi~(Algorithm~\ref{alg:exavi}) satisfies:
    $$R_K\le \bigot\left(H^2\sqrt{S^\bullet K}+H^3{S^\bullet}^2\right).$$
\end{restatable}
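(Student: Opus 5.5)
The plan is a simulation-lemma argument with no optimism. Two facts make this possible: the learner knows $p^\diamond$, and every episode yields a sample of the exogenous chain at every step, whatever policy is played.

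\textbf{Step 1 (regret decomposition).} Fix episode $k$ and let $\widehat M^{k-1}$ be the model with kernel $\widehat p^{k-1}_h = p^\diamond_h\,\widehat p^{\bullet,k-1}_h$. Since $\pi^k$ is greedy for $\widehat Q^{k-1}$, we have $\widehat V^{k-1,\pi^k}_1 = \widehat V^{k-1,\star}_1 \ge \widehat V^{k-1,\pi^\star}_1$. (If only $\epsilon$-optimal policies exist, add an $\epsilon$ slack and let $\epsilon\to 0$.) Therefore
\begin{equation*}
V^\star_1(s_1^k)-V^{\pi^k}_1(s_1^k)\le \bigl(V^{\pi^\star}_1-\widehat V^{\pi^\star}_1\bigr)(s_1^k)+\bigl(\widehat V^{\pi^k}_1-V^{\pi^k}_1\bigr)(s_1^k).
\end{equation*}
We bound each term with the simulation lemma. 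For a policy $\pi$ it gives
\begin{equation*}
(\widehat V^{\pi}_1-V^{\pi}_1)(s_1)=\E_{\pi,p}\Bigl[\sum_{h}\bigl(\widehat p_h-p_h\bigr)\widehat V^{\pi}_{h+1}(s_h,a_h)\Bigr].
\end{equation*}
For the PCMDP kernel, the one-step error is
\begin{equation*}
(\widehat p_h-p_h)\widehat V_{h+1}(s,a)=\sum_{\bar s^\bullet}\bigl(\widehat p^\bullet_h-p^\bullet_h\bigr)(\bar s^\bullet|s^\bullet)\,g(s,a,\bar s^\bullet),
\qquad g(s,a,\bar s^\bullet):=\sum_{\bar s^\diamond}p^\diamond_h(\bar s^\diamond|s,a)\,\widehat V_{h+1}(\bar s^\diamond,\bar s^\bullet).
\end{equation*}
The error thus depends only on how well $p^\bullet_h(\cdot|s^\bullet)$ is estimated.

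\textbf{Step 2 (concentration).} For the term involving $\pi^\star$, I would write $\widehat V^{\pi^\star}=V^{\star}+(\widehat V^{\pi^\star}-V^\star)$. The fixed function $g$ built from $V^\star_{h+1}$ gets Hoeffding concentration of order $H\sqrt{\log(\cdot)/n_h^{k}(s^\bullet)}$, after a union bound over $(s,a,h)$ and counts; the count is over $s^\bullet$, but the union over $(s,a)$ enters only through the logarithm. The residual terms, and the $\pi^k$ term whose $\widehat V$ is data-dependent, I would control as in UCBVI's ``correction'' term. That is, use the Bernstein inequality on each entry,
\begin{equation*}
|\widehat p^\bullet-p^\bullet|(\bar s^\bullet|s^\bullet)\lesssim \sqrt{p^\bullet(\bar s^\bullet|s^\bullet)\,L/n}+L/n,
\end{equation*}
apply it to the difference $\widehat V-V$, and bound via AM-GM by $\frac1H p^\bullet|\widehat V-V| + H S^\bullet L/n$. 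Unrolling the recursion over $h$ with the $(1+1/H)^H\le e$ trick then yields the leading $H\sqrt{L/n}$ term plus lower-order terms $H^2S^\bullet L/n$. Here $\widehat V-V$ is the value gap at the next state, which the recursion controls. As a simpler fallback, the $L_1$ bound $\|\widehat p^\bullet-p^\bullet\|_1\lesssim\sqrt{S^\bullet L/n}$ works but loses a $\sqrt{S^\bullet}$ factor.

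\textbf{Step 3 (summation).} This is where exogeneity pays off. The expectation over the trajectory under $p$ involves $s^\bullet_h$ only through its marginal law, which is independent of the policy. Summed over $k$, the expectations of $1/\sqrt{n^{k-1}_h(s^\bullet_h)}$ are compared with the realized states via Azuma, at a cost of $\bigot(H\sqrt{HK})$ per step. Pigeonhole then gives $\sum_k 1/\sqrt{n_h^{k-1}(s_h^{\bullet,k})}\le 2\sqrt{S^\bullet K}+S^\bullet$, which matters because visits accrue for \emph{every} policy. Summing over $h$ and multiplying by the $H$ range gives $H^2\sqrt{S^\bullet K}$. The $1/n$ terms give $H\cdot H^2S^\bullet\cdot S^\bullet\log K$, i.e.\ $H^3{S^\bullet}^2$ up to logs; episodes with $n=0$ contribute at most $H S^\bullet$ per step.

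\textbf{Main obstacle.} I expect the main obstacle to be the data-dependence of $\widehat V^{\pi^k}$ in Step 2. The clean Hoeffding argument fails there, and getting only $\sqrt{S^\bullet}$ rather than $S^\bullet$ in the leading term requires the Bernstein-plus-recursion correction argument. I would first try to transplant UCBVI's Lemma on $(\widehat p-p)(\widehat V-V^\star)$ to the kernel $\widehat p^\bullet$, checking that no $S^\diamond$ or $A$ factor leaks outside the logarithms.
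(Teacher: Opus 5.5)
Your proposal is correct and is essentially the paper's proof. It uses the same optimism-free split into $V^\star_1-\widehat V^{\pi^\star}_1$ and $\widehat V^{k}_1-V^{\pi_k}_1$ via greediness, Hoeffding on the fixed $V^\star_{h+1}$ plus an entrywise Bernstein/AM--GM correction unrolled with $(1+1/H)^H\le e$ (Propositions~\ref{prop:optimalvaldiffer} and~\ref{prop:kvaldiffer}), and pigeonhole over exogenous counts only. The obstacle you flag is resolved there by exactly this recursion on $\widehat V^k-V^\star$. Your use of $|\widehat V-V^\star|$ (which needs the two-sided version, with $a=\pi^k_h(s)$ for the upper and $a=\pi^\star_h(s)$ for the lower direction, both closing because $\xi_h^k$ depends only on $s^\bullet$) and your explicit Azuma step from expected to realized exogenous trajectories are details the paper glosses over; that Azuma step costs only $\bigot(H\sqrt{K})$ per step, hence $\bigot(H^2\sqrt{K})$ overall, once per-step errors are capped at $H$.
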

As we can notice, many terms have disappeared. We reduced $H^2\sqrt{S^\bullet S^\diamond AK}$ to $H^2\sqrt{S^\bullet K}$ and $H^3(S^\bullet S^\diamond )^2A$ to $H^3{S^\bullet}^2$. The terms $S^\diamond$ and $A$ have not completely disappeared, but they only appear in logarithmic terms, which are not visible due to the $\bigot$ notation. This descends from a stronger assumption about the model and renders learning more efficient by considering only exogenous variables, as the algorithm's name suggests. The formal, detailed proof is provided in Appendix~\ref{app:proof-exavi}.

\subsection{Model-free Approach: \exaq}
A model-free algorithm learns to act directly from experience without explicitly estimating the environment's dynamics. Instead of building a model of transitions, it learns value functions or policies from sampled trajectories. Q-Learning (\textsc{QL})~\citep{watkins1992q} is one of the most relevant algorithms of this family. In the context of standard MDPs, its regret guarantee is similar to that of \textsc{UCBVI} that we mentioned in the previous section.

Adapting this algorithm to enjoy an improved regret bound in our setting is more difficult. While in model-based approaches we can fix $p_h^\diamond$, which is known, and use samples to learn only $p_h^\bullet$, here we do not want to estimate the transition functions at all. Therefore, leveraging the knowledge of $p_h^\diamond$ is more challenging. Still, we can bypass the transition function by working directly on the empirical Bellman operator. One crucial step in the proof of the regret bound of \textsc{QL} (provided by \citet{jin2018q}) is the definition of the empirical operator $P_h^{k}[f]:=f(s_{h+1}^k)$, mapping a function $f:\Ss\to \R$ to a real number. With $s,a=s_h^k,a_h^k$, this empirical operator is an unbiased estimate of the value of the next state, as $\E[f(s_{h+1}^k)|s_h^k,a_h^k]=\E_{\bar s\sim p_h(\cdot|s,a)}[f(\bar s)]$. In our setting, the former operator assumes a more complex form, which is now a function of $s,a$ (only the dependence in $s^\diamond,a$ appears, as $s^\bullet$ is fixed to $s_h^{\bullet,k}$ by definition).
\begin{equation}
    \textbf{P}_h^{k}[f](s,a):=\sum_{\bar s^\diamond\in \Ss^\diamond}f(\bar s^\diamond,s_{h+1}^{\bullet,k})p_h^\diamond(\bar s^\diamond|s,a)\label{eq:pk}.
\end{equation}
For $h=H$, when the next state is not defined, we put $\textbf{P}_h^{k}[f](s,a)=0$ by convention. To understand \Eqref{eq:pk}, let us take any pair $s,a$ such that $s^\bullet=s^{\bullet,k}$. The following equations hold:
\begin{align*}
    \E[f(\bar s)|s,a]&=\sum_{\bar s^\diamond\in \Ss^\diamond}\sum_{\bar s^\bullet\in \Ss^\bullet}f(\bar s^\diamond,\bar s^{\bullet})p_h^\diamond(\bar s^\diamond|s,a)p_h^\bullet(\bar s^\bullet|s^\bullet)\\
    =&\sum_{\bar s^\bullet\in \Ss^\bullet}\left[\sum_{\bar s^\diamond\in \Ss^\diamond}f(\bar s^\diamond,\bar s^{\bullet})p_h^\diamond(\bar s^\diamond|s,a)\right]p_h^\bullet(\bar s^\bullet|s^\bullet)\\
    =&\E\left[\textbf{P}_h^{k}[f](s,a)\right].
\end{align*}
Crucially, this time we do not need to fix both the state and the action, but just the uncontrollable part of the state, i.e., $s_h^{\bullet,k}=s^\bullet$. 

\paragraph{Algorithm Structure.} As happens in \exavi, also \exaq works without any explicit exploration bonus. In fact, the independence of the exogenous part allows us to always act with the estimated best policy. The central innovation of the algorithm, which enables significantly lower sample complexity than standard \textsc{QL}, is encapsulated in \Eqref{eq:qlj}. Unlike conventional \textsc{QL}, which executes a single update per temporal step, the proposed method updates the estimated $Q$-function across $S^\diamond A$ state-action pairs simultaneously.
While the exploration regards only the exogenous factors, we can reuse the same information to update every possible controllable configuration with the same $s^{\bullet,k}$. Consistently, the choice of the learning which achieves the regret bound is
\begin{equation}
    \alpha_t=\frac{H+1}{H+t}\qquad t=n_h^k(s_h^{\bullet,k})\label{eq:lr}.
\end{equation}
The former does not depend on the number of visits of the state-action pair, but only on that of $s^{\bullet}$.
It is worth noting that the algorithm involves a triple-nested loop over $\Ss, \As$ and $[H]$. This would scale poorly with the size of state and action spaces. However, this bottleneck can be mitigated by vectorizing the value updates. By processing the entire $Q$-table simultaneously, we can obtain direct access to all $(s^\diamond,a)$ pairs, effectively reducing the iterative structure to a single loop over the episode steps $[H]$.

\paragraph{Theoretical Guarantees.} As before, our approach allows us to erase the dependency on $\Ss^\diamond$ and $A$ in the regret bound (except for logarithmic terms). While the standard analysis of \textsc{QL} with Hoeffding's bonuses achieves a regret of order $\bigot(\sqrt{H^5SA K})=\bigot(\sqrt{H^5S^\bullet S^\diamond A K})$, our algorithm enjoys the following.
\begin{restatable}{thm}{regretboundtwo}
    Under Definition \ref{def:pcmdp}, with probability at least $1-\delta$, the regret of \exaq~(Algorithm~\ref{alg:exaq}) when run with the learning rate in \Eqref{eq:lr} is bounded by
    $$R_K\le \bigot\left(\sqrt{H^5S^\bullet K}\right).$$
\end{restatable}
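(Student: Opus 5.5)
The plan is to follow the \textsc{QL}-Hoeffding analysis of \citet{jin2018q}, with two substitutions. First, the per-pair visit count $n_h^k(s,a)$ becomes the exogenous count $t=n_h^k(s^\bullet)$. Second, the one-sample operator $f(s_{h+1}^k)$ becomes $\textbf{P}_h^{k}[f]$ from \Eqref{eq:pk}. I read the update (\Eqref{eq:qlj}) as follows: at episode $k$ and step $h$, for every $(s^\diamond,a)$ with $s^\bullet=s_h^{\bullet,k}$,
\[
Q_h(s,a)\leftarrow(1-\alpha_t)Q_h(s,a)+\alpha_t\big(r_h(s,a)+\textbf{P}_h^k[V_{h+1}](s,a)\big),
\]
where $V_{h+1}(\cdot)=\max_a Q_{h+1}(\cdot,a)$ and there is no bonus. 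Let $k_1<\dots<k_t$ be the episodes in which $s^\bullet$ was visited at step $h$. Unrolling the recursion with the weights $\alpha_t^i=\alpha_i\prod_{j>i}(1-\alpha_j)$ gives, for every controllable pair simultaneously,
\[
Q_h^k-Q_h^\star=\alpha_t^0(\cdot)+\sum_{i}\alpha_t^i\Big(\textbf{P}_h^{k_i}[V^{k_i}_{h+1}-V^\star_{h+1}]+(\textbf{P}_h^{k_i}-P_h)[V^\star_{h+1}]\Big).
\]
The standard properties of $\alpha_t^i$ are unchanged, namely $\sum_i\alpha_t^i=1$, $\sum_i(\alpha_t^i)^2\le 2H/t$ and $\sum_{t\ge i}\alpha_t^i\le 1+1/H$. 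The sequence $k_i$ is determined by the exogenous process alone, so it does not depend on the policy.

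\textbf{Step 1 (concentration).} For fixed $(s,a,h)$, the terms $(\textbf{P}_h^{k_i}-P_h)[V^\star_{h+1}](s,a)$ are bounded by $H$ and form a martingale difference sequence in $i$, by the unbiasedness computation given after \Eqref{eq:pk}. Azuma--Hoeffding combined with a union bound over $S^\bullet S^\diamond AHK$ gives a deviation of at most $c\sqrt{H^3\iota/t}$, with $\iota=\log(SAHK/\delta)$. The sizes $S^\diamond$ and $A$ therefore appear only inside $\iota$.

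\textbf{Step 2 (handling no optimism; main obstacle).} Jin et al.\ use bonuses to get $Q^k\ge Q^\star$ and then bound $(V^\star-V^{\pi_k})(s_1)\le(V^k-V^{\pi_k})(s_1)$. Without bonuses, I would instead prove a two-sided bound $\|Q_h^k-Q_h^\star\|_\infty$ on the visited exogenous slice. Since every controllable pair is updated, the error $\delta_h^k:=\max_{s^\diamond,a}|Q_h^k-Q_h^\star|(s^\diamond,s_h^{\bullet,k},a)$ obeys
\[
\delta_h^k\le\alpha_t^0H+\sum_i\alpha_t^i\,\delta_{h+1}^{k_i}+c\sqrt{H^3\iota/t}.
\]
This recursion holds because $\textbf{P}_h^{k_i}$ averages over $\bar s^\diamond$ with the exogenous part fixed at $s_{h+1}^{\bullet,k_i}$, so the sup over controllable states at step $h+1$ in episode $k_i$ dominates it. The recursion does not involve the policy at all. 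For the greedy policy I would use the decomposition
\[
V^\star_h-V^{\pi_k}_h\le 2\delta_h^k+\E\big[V^\star_{h+1}-V^{\pi_k}_{h+1}\big].
\]
The sampled trajectory then yields a martingale term of order $\bigot(H\sqrt{HK})$, and the regret is bounded by roughly $2\sum_k\sum_h\delta_h^k$ plus that term.

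\textbf{Step 3 (summation).} I would sum $\delta_h^k$ over $k$ exactly as in Jin et al. The $\alpha_t^0H$ term contributes $HS^\bullet$ per step. The propagated term is absorbed using $\sum_{t\ge i}\alpha_t^i\le1+1/H$, so the factor $(1+1/H)^H\le e$ is paid over the $H$ levels. The concentration term gives $\sum_k\sqrt{H^3\iota/n_h^k(s_h^{\bullet,k})}\le\sqrt{H^3\iota S^\bullet K}$ by pigeonhole over exogenous states only. Summing over the $H$ steps gives $\bigot(\sqrt{H^5S^\bullet K})$.

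The delicate point is Step 2. The sup-norm recursion must propagate correctly through episodes $k_i$ in which the next exogenous state differs. This forces the index bookkeeping to follow the pair $(h+1,s_{h+1}^{\bullet,k_i})$, which I would handle by reindexing sums over $k$, as Jin et al.\ do with $n_{h+1}$.
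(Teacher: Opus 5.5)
\textbf{The gap is in the final power of $H$.} Your plan follows the paper's proof almost line for line. It uses the same unrolling over the exogenous visit times $k_i$ and the same Azuma term $\sqrt{H^3\iota/t}$. Your $\delta_h^k$ is the paper's $\rho_h^k(s_h^{\bullet,k})$, and you use the same $1+1/H$ re-indexing and the same greedy-gap decomposition. But the $H^5$ exponent does not follow from these steps. Put $\Sigma_h:=\sum_k\delta_h^k$ and $C:=HS^\bullet+c\sqrt{H^3\iota S^\bullet K}$. Your recursion gives $\Sigma_h\le C+(1+1/H)\Sigma_{h+1}$, and unrolling it gives
\[
\Sigma_h\le C\sum_{j=0}^{H-h}(1+1/H)^j\le e\,(H-h+1)\,C ,
\]
not $eC$. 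Both the factor $e$ and a full sum over the remaining levels are already spent inside each $\Sigma_h$, so the bound on $\Sigma_1$ alone is of order $\sqrt{H^5S^\bullet K}$. Your Step 2 then bounds the regret by $2\sum_h\Sigma_h$ plus a martingale, which sums over levels a second time. Since $\sum_h(H-h+1)\le H^2$, Steps 1--3 carried out correctly give only $R_K\le\bigot(H^3S^\bullet+\sqrt{H^7S^\bullet K})$. The paper's write-up makes the same slip: it unrolls the identical recursion as $\sum_k\rho_1^k\le(1+1/H)^H C$ and then multiplies by $H$ in the final decomposition.

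\textbf{Why Jin et al.\ avoid the double sum.} They pay for only one sum over levels because optimism lets them bound the regret by $\sum_k(V_1^k-V_1^{\pi_k})(s_1^k)$ alone, and the recursion for that quantity telescopes. With $\phi_{h+1}^k:=(V_{h+1}^k-V_{h+1}^\star)(s_{h+1}^k)$, the propagated term $\sum_i\alpha_t^i\phi_{h+1}^{k_i}$ lives on states actually visited in episodes $k_i$. It re-indexes to $(1+1/H)\sum_k\phi_{h+1}^k$ and cancels against $-\phi_{h+1}^k$ up to a $1/H$ fraction.

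Neither ingredient is available here. The errors are signed, because there is no optimism. Moreover, $\textbf{P}_h^{k_i}[V_{h+1}^{k_i}-V_{h+1}^\star](s_h^k,a_h^k)$ is evaluated at a counterfactual controllable pair, not the one visited in episode $k_i$. That is exactly what forces you to the slice supremum, and the supremum carries every downstream error.

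So your argument does establish the structural point: $S^\diamond$ and $A$ appear only in logarithms and the leading term scales as $\sqrt{S^\bullet K}$. Reaching $H^5$ instead of $H^7$ needs an additional idea that charges each level's concentration error only once, and nothing in your Steps 2--3 supplies it.
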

As expected, the regret substantially improves if $\Ss^\diamond$ or $A$ are non-trivial. The extended proof of this theorem is provided in Appendix~\ref{app:proof-exaq}.

\paragraph{Lower Bound.} To conclude the theoretical analysis, we prove that the previous results constitute the best possible regret bound, in terms of dependence on $K$ and state space size.

\begin{restatable}{thm}{lowerbound}\label{thm:lowerbound}
    For any algorithm, there is a PCMDP instance (definition \ref{def:pcmdp}) where it suffers regret of
    $$\E[R_K]\ge \Omega(\sqrt{S^\bullet K}).$$
\end{restatable}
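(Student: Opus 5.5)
We prove the lower bound by reducing a $S^\bullet$-armed stochastic bandit problem with full-information-style structure to a PCMDP. The key difficulty is that in a PCMDP the learner observes the exogenous process regardless of its actions, so the hardness cannot come from exploration of $p^\bullet$ alone. It must come from the learner having to \emph{act} correctly in each exogenous state before enough samples about that state's transition have been collected. The plan is to build an instance with horizon $H=2$ where the agent's action at step $1$ is a bet on the next exogenous state.

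\textbf{Construction.} We take $\Ss^\bullet=\{1,\dots,S^\bullet\}\times\{0,1\}$ at step $2$ (or simply use $S^\bullet/2$ context states and two outcome states; constants do not matter). At step $1$ the environment draws the initial exogenous state $x\in[S^\bullet]$ uniformly; this is allowed since $s_1$ is arbitrary, or we can add one preliminary step with a known uniform transition. From context $x$, the next exogenous outcome $y\in\{0,1\}$ is Bernoulli with mean $\tfrac12+\sigma_x\varepsilon$, where $\sigma\in\{\pm1\}^{S^\bullet}$ is unknown. The actions are $\As=\{0,1\}$. The controllable part deterministically records the action, $s_2^\diamond=a_1$, and this $p^\diamond$ is known, consistent with Assumption~\ref{ass:pdiam}. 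The reward is $r_2=\mathbbm 1\{s_2^\diamond=y\}$ and all other rewards are zero. The optimal policy in context $x$ picks $a=\mathbbm 1\{\sigma_x=1\}$, and each wrong choice costs $2\varepsilon$ in expectation. Because $p^\bullet$ does not depend on actions, this instance is a valid PCMDP.

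\textbf{Reduction to independent two-point tests.} Since $x$ is drawn independently of the algorithm, context $x$ appears about $K/S^\bullet$ times. For each $x$, the learner's data about $\sigma_x$ before round $k$ consists only of the past Bernoulli outcomes drawn in context $x$; outcomes in other contexts are independent of $\sigma_x$. We put a uniform prior on $\sigma$ and apply a standard Le Cam / Bretagnolle--Huber argument per context. After $n$ samples of context $x$, the KL divergence between the outcome laws under $\sigma_x=\pm1$ is $O(n\varepsilon^2)$. So as long as $n\lesssim 1/\varepsilon^2$, any decision errs with constant probability. Summing over the $\Theta(K/S^\bullet)$ visits of each context and over the $S^\bullet$ contexts gives
$\E[R_K]\gtrsim S^\bullet\cdot \tfrac{K}{S^\bullet}\cdot\varepsilon\cdot\Pr[\text{error}]$,
which holds as long as $(K/S^\bullet)\varepsilon^2\lesssim 1$. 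Choosing $\varepsilon=\sqrt{S^\bullet/K}$ yields $\Omega(K\varepsilon)=\Omega(\sqrt{S^\bullet K})$, provided $K\ge S^\bullet$ so that $\varepsilon\le 1/4$. The visit counts concentrate around $K/S^\bullet$ by a Chernoff bound, or we can avoid this step by deterministically cycling through contexts.

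\textbf{Main obstacle.} The delicate step is making the per-context argument rigorous despite adaptivity and the information shared across episodes. Here the observations are not action-dependent: the learner sees $y$ every round, so this is a full-information prediction problem rather than a bandit problem. The bound $\sqrt{S^\bullet K}$ is therefore exactly the classical lower bound for $S^\bullet$ independent sequential binary prediction problems, each of length $K/S^\bullet$ with $\sqrt{n}$ regret. I would prove the per-round error bound by conditioning on the history of context $x$. Under the uniform prior, the Bayes-optimal prediction at its $j$-th visit errs with probability at least $\tfrac12(1-\mathrm{TV})\ge c$ whenever $j\varepsilon^2\le 1$. This avoids any change-of-measure over actions. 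Averaging over the prior then shows that some fixed $\sigma$, that is, some fixed PCMDP instance, achieves the bound.
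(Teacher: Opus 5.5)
Your proposal is correct and follows essentially the same route as the paper. The construction is the same: a uniform layer of context states, each leading to a Bernoulli binary outcome, with the action recorded in the known controllable component and rewarded for matching that outcome. The argument is also the same, a per-context two-point (Le Cam / Bretagnolle--Huber) bound summed over the $\Theta(S^\bullet)$ contexts, and your version is cleaner: the product prior with a fixed $\varepsilon \asymp \sqrt{S^\bullet/K}$ (with rewards kept in $[0,1]$ and the needed condition $K \gtrsim S^\bullet$ made explicit) avoids the paper's choice of a gap $\Delta = \tfrac14 K_i^{-1/2}$, which depends on the random visit count $K_i$.
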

Theorem \ref{thm:lowerbound} shows that a square-root scaling in the size of the uncontrollable states cannot be avoided. Formal proof of the lower bound is reported in Appendix~\ref{app:lowerbound}.

\begin{algorithm}[t]
\caption{Exogenous--Aware Q-Learning (\textsc{ExAQ})}
\label{alg:exaq}
\begin{algorithmic}[1]
\REQUIRE  Controllable model $p^\diamond$, discrete spaces $\Ss^\diamond,\Ss^\bullet,\As$, learning rates routine $\{\alpha_t\}_{t=1}^K$.
\FOR{$s\in \Ss,a\in \As,h\in [H]$}
    \STATE Initialize $Q_h^0(s,a)\gets(H-h)$
\ENDFOR
\FOR{$k=1,\dots,K$}
    \STATE $Q_h^k(s,a) \gets Q_h^{k-1}(s,a), \quad  \forall s \in \Ss, a \in \As$
    \STATE $V_h^k(s,a) \gets \max_{a\in \As}Q_h^{k}(s,a), \quad  \forall s \in \Ss$
    \STATE $\pi_h^k(s)=\argmax_{a\in \As}Q_h^k(s,a),  \quad \forall s \in \Ss, h \in [H]$
    \FOR{$h=1,\dots H$}
        \STATE $a_h^k\gets\pi_h^k(s_h^k)$        
    \ENDFOR
    \FOR{$h=1,\dots,H$}
        \STATE  Let $t = n_h^k(s^{\bullet,k}_h)$
        \FOR{$s^\diamond \in \Ss^\diamond, a\in \As$}
            \STATE Construct full state $s = (s^\diamond, s^{\bullet,k}_h)$
            \STATE Compute target $\bm w = r_h(s,a) + \textbf{P}_h^{k}[V_{h+1}^k](s,a)$
            \STATE Update $Q_h^{k}(s,a) \gets (1-\alpha_t)Q_h^k(s,a) + \alpha_t \bm w$ \label{eq:qlj}
        \ENDFOR
    \ENDFOR
\ENDFOR
\end{algorithmic}
\end{algorithm}

\section{Experiments}\label{sec:experiments}
To rigorously prove the capabilities of our novel algorithms in exploiting the PCMDP structure, we corroborate the theoretical analysis with an experimental campaign. In total, we compare \exavi and \exaq against their MDP-based versions, i.e., \textsc{UCBVI} and \textsc{QL}, across three environments. Hereafter, we will present only two scenarios, while the remaining one is reported in Appendix~\ref{app:elevator-details}.

\paragraph{Taxi with Traffic Environment.} The first environment we consider is a variation of a widely recognized benchmark originally proposed by~\citet{dietterich1999hierarchical} and included in the Gymnasium collection of toy text environments~\cite{towers2025gymnasium}: the \texttt{Taxi}. We selected this environment due to its tractable state-action space, which facilitates the evaluation of model-based algorithms such as \textsc{UCBVI} and \textsc{ExAVI}, whose space complexity scales very fast. The task involves a taxi agent navigating a $5\times 5$ grid world to pick up and deliver passengers at four designated locations. The agent has 6 actions: move in four directions, pick up, and drop off. Rewards are: $+20$ for a correct delivery, $-10$ for an invalid pickup or drop-off, and $-1$ for all other steps to encourage faster learning.

To align this scenario with the PCMDP framework, we introduced specific modifications to the environment's dynamics. Whereas the standard episode terminates upon a single successful delivery, we enforce a fixed time horizon, redefining the objective as maximizing the total number of deliveries within the allotted time. To incorporate an exogenous signal, we simulated stochastic traffic congestion at specific "choke points" on the grid, which impedes the taxi's movement. At each time step, the agent observes the binary traffic status of these locations, requiring it to adapt its routing strategy based on the learned traffic distribution. The controllable state at step $h$ of each episode is defined as $s^\diamond_h = (x_h, y_h, \psi_h, d_h)$, where $x_h$ and $y_h$ represent the row and column position of the taxi within the grid, $\psi_h$ the position of the passenger, and $d_h$ the current destination. Instead, the exogenous part is given by the boolean traffic vector at the $N_\text{tr}=3$ congested locations, such that $s^\bullet_h = (\boldsymbol{b}^\text{tr}_h)$ with $\boldsymbol{b}^\text{tr}_h \in \{0,1\}^{N_\text{tr}}$, where the $i$-th component $b^\text{tr}_{h,i}=1$ indicates that the $i$-th choke point is blocked at step $h$
The dynamics of the traffic vector are governed by an independent Bernoulli process: at each step $h$, every component $b_{h,i}$ is sampled independently according to a fixed congestion probability $p_{\text{tr}}$, such that $b_{h,i} \sim \text{Bernoulli}(p_{\text{tr}})$.The action space and reward functions remain the same of the original environment.
Figure~\ref{fig:taxi-screenshot} shows a screenshot of the modified environment. Additional details on \texttt{TaxiEnv} are provided in Appendix~\ref{app:env-details}.

\paragraph{Optimal Execution Environment.} This environment serves to validate our approach in a significant real-world scenario: the optimal execution of portfolio transactions. We designed this environment based on the seminal work of~\citet{almgren2001optimal}, introducing minor modifications to the reward function and price profile generation. The problem is defined as follows: an agent is tasked with liquidating a specific inventory of securities by the end of a trading day, which is discretized into $N$ fixed time intervals. At each step $h$, the agent observes the current inventory level and the market asset price, then determines the target inventory level for the subsequent step.
This task naturally aligns with the PCMDP framework, as the inventory level constitutes the controllable state component, whereas the asset price evolves as an exogenous process. Formally, we define the controllable state as $s^\diamond_h = (u_h)$, where $u_h \in \mathcal{U}$ denotes the current inventory, and the uncontrollable state as $s^\bullet_h = (\omega_h) \in \mathcal{P}_\omega$, representing the asset price. The action is defined as the target inventory for the next step, $a_h = u_{h+1}$. Consistent with the retail investor assumption, we posit that the agent's trading decisions do not impact the market price.

In our experimental setup, following~\citet{almgren2001optimal}, we define the inventory space $\mathcal{U}$ as the discrete set of integers $\{0, \dots,U_\text{max}\}$ with $U_\text{max} = 100$, resulting in a cardinality of $|\mathcal{U}|=101$. The action space $\mathcal{A}$ is isomorphic to $\mathcal{U}$, meaning the agent can choose to hold any valid amount of securities for the next step (though typically $a_h \leq u_h$ in liquidation tasks). The market price space $\mathcal{P}_\omega$ is discretized into a grid of $1000$ distinct values with a fixed granularity (tick size) of $0.02$. The instantaneous reward $r_h$ is composed of three terms: execution cost, holding risk penalty, and transaction revenue. First, the execution cost $c_{\text{ex},h}$, derived by the original paper, accounts for temporary market impact and fixed transaction fees. It is defined as:
\begin{equation*}
    c_{\text{ex},h} = \epsilon |n_h| + \frac{\tilde{\eta}}{\tau}n_h^2,
\end{equation*}
where $n_h = u_h - u_{h+1}$ is the number of shares traded, $\epsilon \in \mathbb{R}^+$ represents fixed transaction costs, $\tilde{\eta} \in \mathbb{R}^+$ is the adjusted temporary impact parameter, and $\tau \in \mathbb{R}^+$ is the time interval length.
Second, the holding risk penalty $c_{\text{hold},h}$ quantifies the risk of maintaining an inventory position in the face of market volatility. Following standard mean-variance optimization, this is defined as:
\begin{equation*}
    c_{\text{hold},h} = \lambda \tau \sigma^2 u_{h+1}^2,
\end{equation*}
where $\lambda \in \mathbb{R}^+$ is the risk aversion parameter and $\sigma \in \mathbb{R}_{\geq 0}$ is the asset volatility.
Finally, the transaction revenue (or cost) $\varrho_h$ is given by $\varrho_h = n_h \cdot p_h$. The total reward is then given by $r_h = \varrho_h - c_{\text{ex},h} - c_{\text{hold},h}$.

\begin{figure}[th]
    \centering
    \begin{subfigure}[b]{\linewidth}
        \centering
        \includegraphics[width=0.86\linewidth]{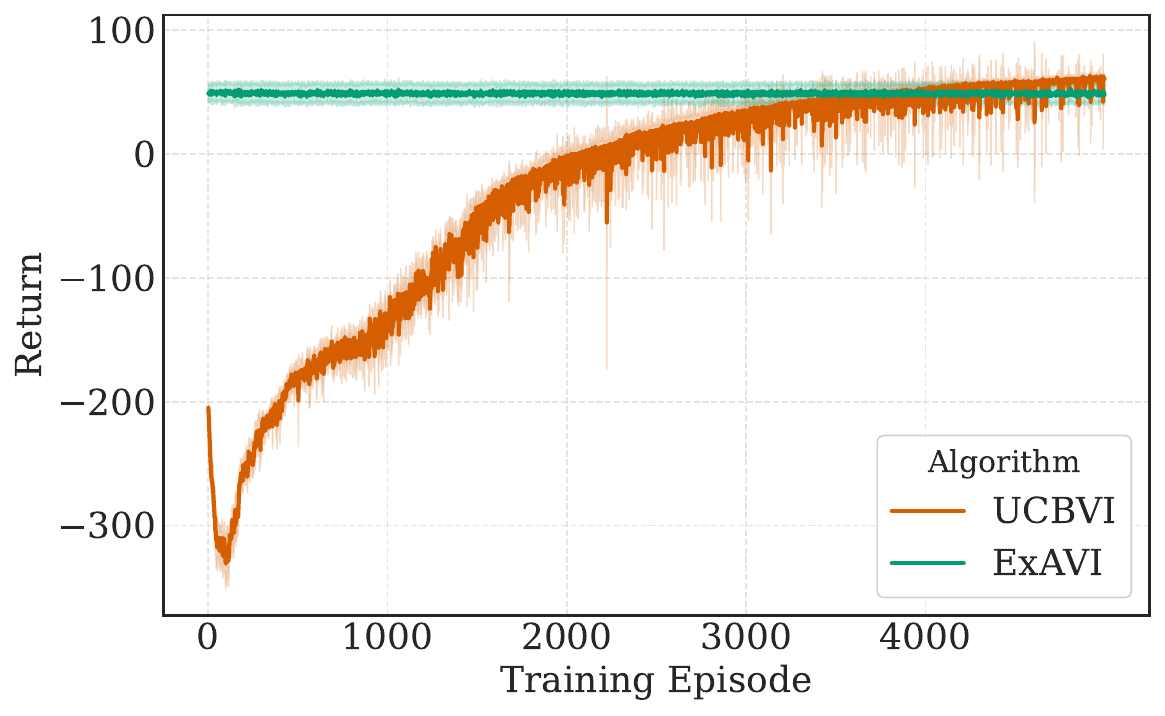}
        \caption{Model-Based Approaches}
        \label{fig:taxi-modelbased}
    \end{subfigure}
    
    
    \begin{subfigure}[b]{\linewidth}
        \centering
        \includegraphics[width=0.86\linewidth]{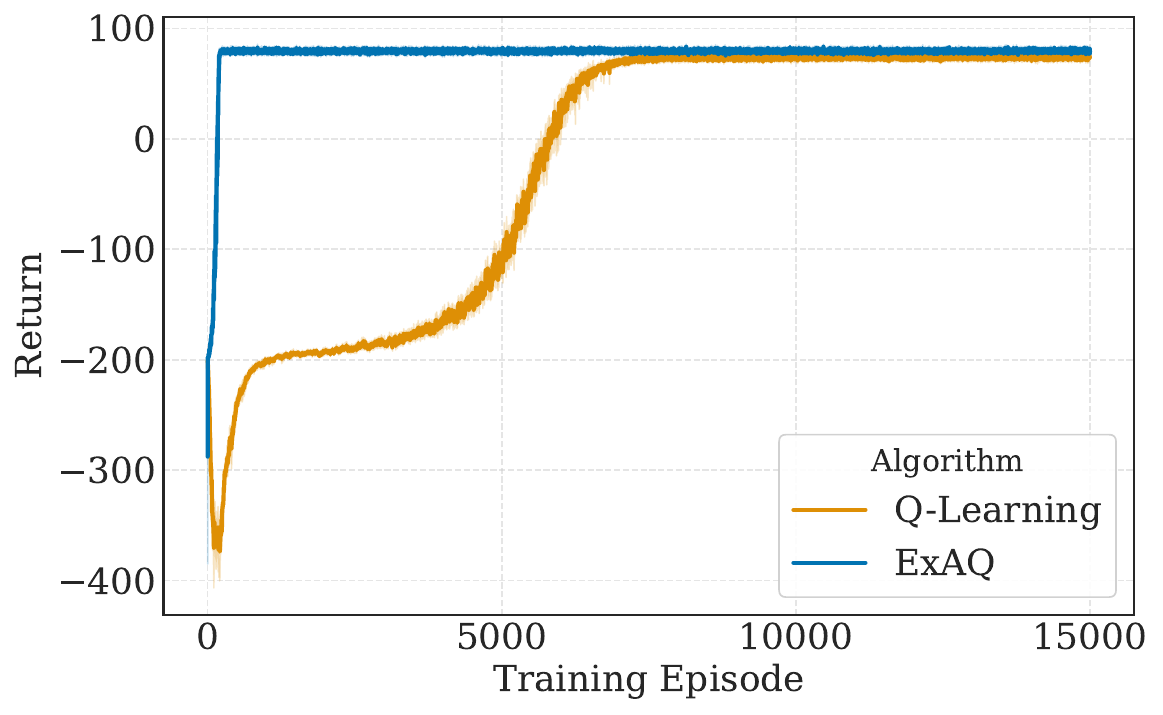}
        \caption{Model-Free Approaches}
        \label{fig:taxi-modelfree}
    \end{subfigure}
    
    \caption{Comparative learning curves for the \texttt{TaxiEnv}, averaged over $10$ training seeds with $95\%$ confidence intervals. Figure (a): Model-based algorithms (\exavi vs \textsc{UCBVI}). Figure (b): Model-free algorithms (\exaq vs \textsc{QL}).}
    \label{fig:taxi-comparison}
\end{figure}

\subsection{Experimental Results}
In this section, we will report the results and performance of the presented algorithms compared to their specific MDP counterparts, focusing on sample efficiency. We will take into account the previously presented environment, leaving the remaining results in Appendix~\ref{app:additional-results}.

\paragraph{Results on \texttt{TaxiEnv}.} In Figure~\ref{fig:taxi-comparison}, we report comparative learning curves on the first scenario for both model-based and model-free approaches. In particular, in Figure~\ref{fig:taxi-modelbased} we compare our model-based method, \exavi, against the \textsc{UCBVI} baseline. As can be noticed, 
\exavi demonstrates significantly superior sample efficiency, converging to the optimal policy almost immediately (within the first few episodes), whereas \textsc{UCBVI} requires thousands of episodes to reach comparable performance. This disparity derives from the structural advantage of the PCMDP formulation: while \textsc{UCBVI} must estimate the full transition kernel $p(\bar s|s,a)$ from scratch, \exavi isolates the estimation problem to the lower-dimensional exogenous component $s^\bullet$. By exploiting the known deterministic transitions of the controllable variables $s^\diamond$, \exavi effectively starts with a partial model of the environment, eliminating the need to explore the endogenous dynamics.

In Figures~\ref{fig:taxi-modelfree}, we report the results for the model-free setting, comparing \exaq against standard \textsc{QL}. Consistent with the model-based case, \exaq exhibits a dramatic improvement in convergence speed. This efficiency gain is attributable to the algorithm's ability to extract the most information from each sample. While \textsc{QL} updates the $Q$-value only for the specific visited state-action pair $(s,a)$, \exaq leverages the independence of the exogenous signal to perform synchronous updates across the entire controllable subspace for the observed exogenous transitions. This effectively acts as a form of \emph{counterfactual reasoning}, allowing the agent to learn the value of unvisited states that share the same exogenous context.
Although \texttt{TaxiEnv} is a simplified benchmark, the empirical results strongly support our theory that explicitly modeling partial controllability significantly improves sample efficiency.

\begin{figure}[t]
    \begin{subfigure}[b]{\linewidth}
        \centering
        \includegraphics[width=0.86\linewidth]{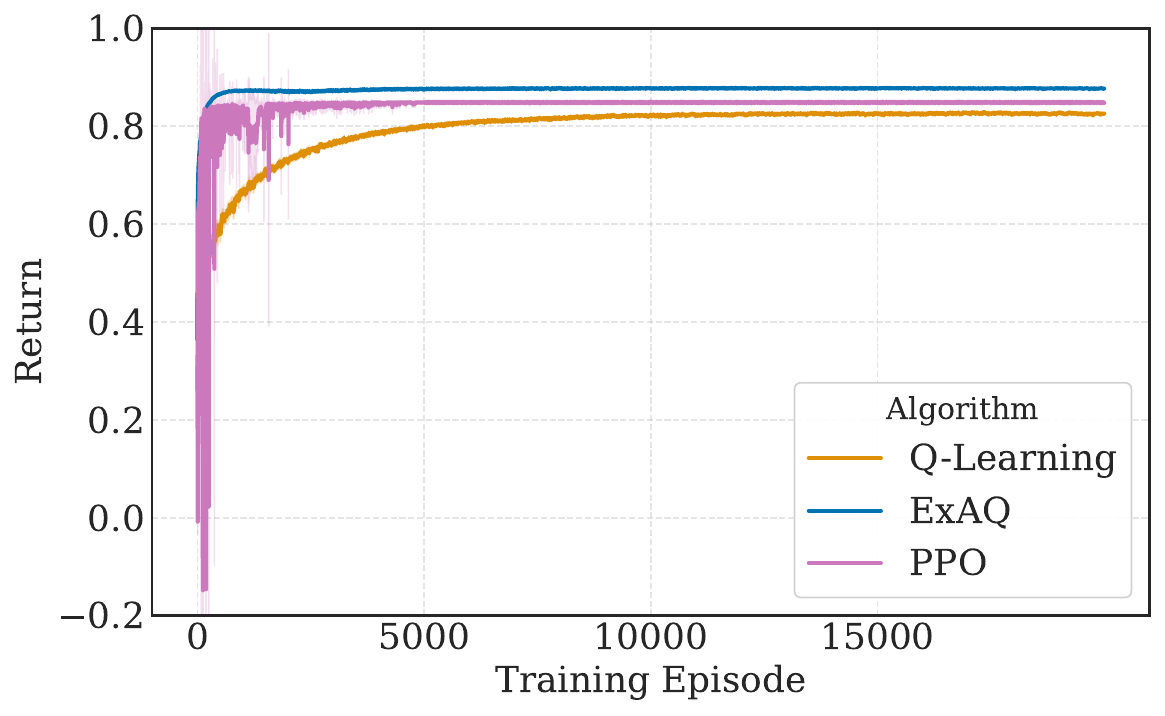}
        \caption{Linear Scale}
        \label{fig:trading}
    \end{subfigure}
    

    \begin{subfigure}[b]{\linewidth}
        \centering
        \includegraphics[width=0.86\linewidth]{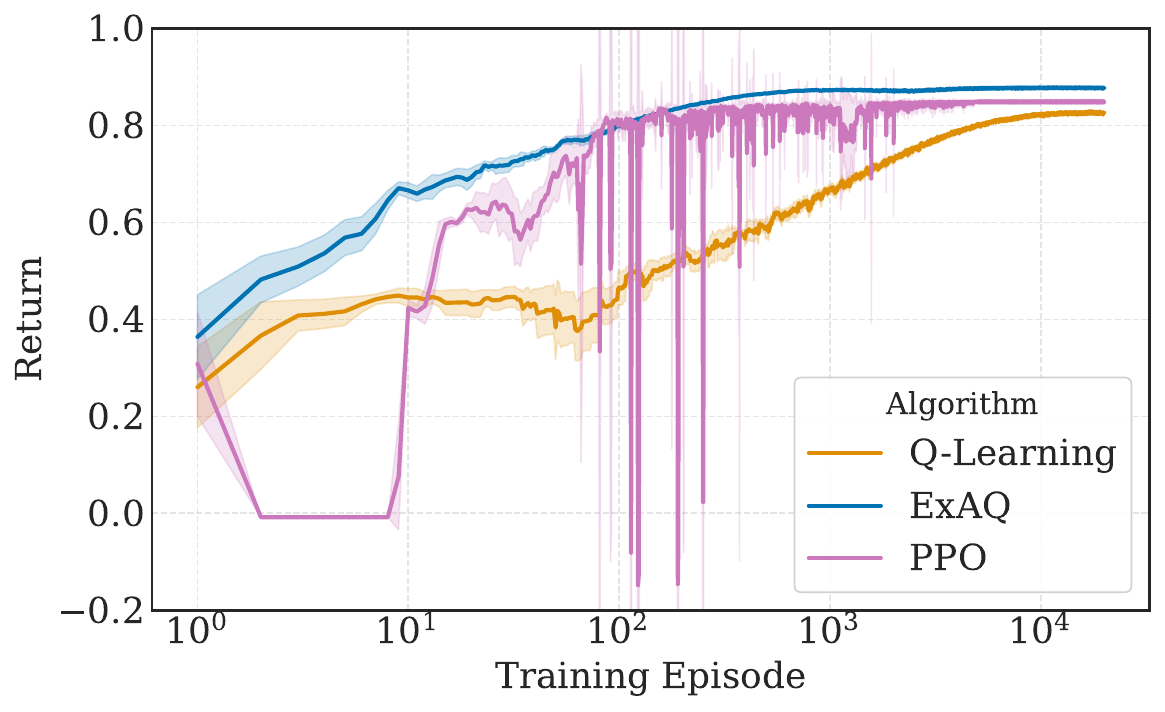}
        \caption{Logarithmic Scale}
        \label{fig:trading-logscale}
    \end{subfigure}

    \caption{Comparative learning curves of model-free algorithms for the \texttt{TradingEnv}, averaged over $10$ training seeds with $95\%$ confidence intervals. Figure (a): Performance on a linear x-axis, highlighting asymptotic convergence. Figure (b): The same experiment plotted on a logarithmic x-axis, highlighting the dramatic sample efficiency gap in the early stages of training ($10^0 - 10^2$ episodes).}
    \label{fig:trading-comparison}
    \label{fig:trading-comparison}
\end{figure}

\paragraph{Results on \texttt{TradingEnv}.} In the following, we present the results obtained on the optimal execution task. For this environment, we restrict our evaluation to model-free approaches, as the high dimensionality of the state space renders tabular model-based planning (e.g., \textsc{UCBVI}) computationally intractable. We compare our PCMDP-based method, \exaq, against two baselines: the standard tabular \textsc{QL} and Proximal Policy Optimization (\textsc{PPO})~\citep{schulman2017proximal}, a state-of-the-art deep RL algorithm frequently applied to optimal execution problems~\citep{lin2021end}.
Figures~\ref{fig:trading-comparison} display the learning curves with the x-axis in linear scale and logarithmic scale. The results underscore the superior sample efficiency of \exaq compared to the baselines. As evident in the logarithmic plot (Figure~\ref{fig:trading-logscale}), \exaq achieves early convergence within the first few orders of magnitude of training episodes ($10^1 - 10^2$). This is remarkable considering the size of the uncontrollable partition of the state space, which is significantly larger than the taxi environment (we recall that $|\mathcal{P}_\omega| = 1000$ compared to $|\boldsymbol{b}^{\text{tr}}| = 2^{N_{\text{tr}}}$ with $N_{\text{tr}}=3$).
In contrast, tabular \textsc{QL} improves slowly and steadily, requiring many more interactions to visit and update the large state-action space. \textsc{PPO} can reach an asymptotic return comparable to \exaq but shows high variance and instability during learning.

\section{Related Works}\label{sec:relatedworks}






While our PCMDP formulation, to our knowledge, is completely novel, other works in the literature have already addressed problems related to partial controllability and the presence of exogenous state variables. For instance, \citet{trimponias2023exogenous} define the concept of exogenous variables within the state space and rewards using the concept of causality combined with the theory of graphs. They propose decomposing the MDP into an exogenous Markov Reward Process (MRP) and an endogenous MDP, optimizing the exogenous and endogenous reward signals, respectively. This allows them to design algorithms that RL methods can leverage to discover the exogenous space, thereby removing the associated reward component.

Some ideas from our work may resonate to other settings that were studied in the classical RL literature.
While \textit{Factored MDPs} \cite{boutilier2000stochastic, guestrin2003efficient} decompose states based on local dependencies, PCMDPs specifically partition them by controllability into endogenous and exogenous components. Unlike the general factorization where actions can affect any variable, PCMDPs assume actions only influence endogenous states, with the agent possessing prior knowledge of these dynamics. This brings heavily improved performance guarantees and, crucially, no need for exploration in algorithm design.

Another class of MDPs that share some aspect with our one is the Post-Decision State (PDS, \cite{powell2007approximate}), where the agent observes a deterministic state following the action before the injection of a random noise.
Both PDS and PCMDPs decompose transitions to isolate agent decisions from environmental noise. However, while PDS introduces a virtual intermediate state affected by subsequent global uncertainty, PCMDPs physically partition the state into independent endogenous and exogenous variables. This decoupling better fits environment like trading, where most of the effort must be put in learning the dynamics of the exogenous part.



In this paper, we also compare with the \textit{online RL} literature, which provides theoretically grounded algorithms both for \textit{model-based}~\cite{azar2017minimax, zhang2024settling, drago2025refined} and \textit{model-free}~\cite{jin2018q} algorithms. While these methods are designed for the classical MDP and do not distinguish between endogenous and exogenous signals, we build on them to obtain PCMDP-based solutions. By separating controllable and uncontrollable transitions, we achieve improved sample efficiency and tighter regret guarantees.

Algorithmic trading is one of the most prominent application domains for RL with exogenous uncertainty. A large body of literature studies RL-based approaches for autonomous trading and optimal execution under stochastic market dynamics \cite{hambly2023recent, ning2021double, bisi2019risk}. These works typically model market variables, such as asset prices, as stochastic processes that evolve independently of the agent’s decisions, while the agent controls internal state variables such as inventory or capital.
While the specific perspective adopted in this paper has not been explicitly formalized in this literature, related ideas appear implicitly in several prior works. In particular, the offline approach proposed by \citet{riva2021learning} leverages historical price trajectories to train a Q-learning–based agent. This methodology inherently assumes that market dynamics evolve exogenously and are independent of the agent’s actions, which aligns with the modeling assumptions underlying our framework.
Our contribution differs by explicitly formalizing this structural independence at the MDP level. We demonstrate how this property can be systematically exploited to derive tighter regret guarantees in an online learning setting. 

\section{Discussion and Conclusions}
In this work, we studied reinforcement learning in Markov Decision Processes with exogenous dynamics (PCMDP) and showed that explicitly distinguishing between controllable and uncontrollable state components leads to substantially improved learning guarantees. Our theoretical results establish regret bounds for both model based and model free algorithms that depend only on the size of the exogenous state space, and we proved that this dependence is information-theoretically optimal. 
We further demonstrated that this framework naturally encompasses a broad class of real-world control problems characterized by action-independent uncertainty, where na\"ive exploration is often costly and unnecessary. To validate the theoretical insights, we applied our approach to the optimal execution problem in algorithmic trading. Our experiments confirm that structure-aware algorithms achieve orders-of-magnitude improvements in sample efficiency compared to standard RL baselines.

\paragraph{Limitations and Future Work.} The primary limitations of this work lie in its restriction to the tabular setting and the assumptions of known controllable dynamics and full observability of the exogenous variables. While the tabular setting provides a rigorous theoretical testbed, the proposed counterfactual updates, although highly sample-efficient, incur a non-trivial computational burden. Future research should address these challenges, extending the framework to the continuous domain via function approximation, which would allow the agent to generalize counterfactual updates across continuous state regions
The assumption of known controllable dynamics can be relaxed by model-based reinforcement learning of $\hat p^\diamond$ alongside $p^\bullet$. Finally, to mitigate the computational complexity, exact planning can be replaced with \emph{Approximate Dynamic Programming} to prioritize updates in the most relevant regions of the state space.\footnote{In Appendix~\ref{app:complexity} we discuss time and space complexity of our methods, comparing them with baseline approaches.}

\section*{Impact Statement}
This paper presents work aimed at advancing the field of machine learning. There are many potential societal consequences of our work, none of which we feel must be specifically highlighted here.

\bibliography{refs}
\bibliographystyle{icml2026}

\newpage
\appendix
\onecolumn

\section{PCMDP in Real-World Scenarios}\label{app:realworld}
Hereafter, we provide additional examples of PCMDP for real-world problems.

\paragraph{Micro-Grid Energy Management.}~~In this task, the agent optimizes energy flow within a micro-grid to maximize revenue from energy trading while minimizing battery degradation. This problem can be viewed as a hybrid of the \emph{water reservoir} (storage management) and \emph{trading} (price arbitrage) tasks.
The system is composed of an energy storage system (ESS), a renewable generator (e.g., PV panels), and a load. The controllable state $s_t^\diamond = (l_t)$ represents the current energy level of the storage system. The exogenous state is defined as $s^\bullet_t = (d_t, g_t, \omega^{\text{buy}}_t, \omega^{\text{sell}}_t)$, where $d_t$ is the power demand, $g_t$ is the renewable generation, and $\omega^{\text{buy}}_t, \omega^{\text{sell}}_t$ denote the grid electricity prices.
The agent selects an action $a_t$ representing the power charged to (positive) or discharged from (negative) the storage. The storage dynamics follow a deterministic accumulation process:
\begin{equation}
    l_{t+1} = l_t + \eta \cdot a_t \Delta t,
\end{equation}
where $\eta$ represents the charging/discharging efficiency and $\Delta t$ the time interval. The exogenous variables evolve according to independent stochastic processes (e.g., weather and market dynamics):
\begin{equation}
    g_{t+1} \sim p(\cdot \mid g_t), \quad d_{t+1} \sim p(\cdot \mid d_t), \quad \omega_{t+1} \sim p(\cdot \mid \omega_t).
\end{equation}
Crucially, while the battery dynamics $l_{t+1}$ are controlled, the exogenous signals dictate the net power exchange with the main grid, defined as $\bar{n}_t = d_t - g_t + a_t$. This value $\bar{n}_t$ determines the immediate reward (cost of purchasing electricity or revenue from selling), making the reward function highly dependent on the exogenous state $s^\bullet_t$ even though the transition $p^\diamond$ is deterministic.

\paragraph{Data Center Thermal Control.}~~Another example is \emph{thermal management in data centers}. The objective is to regulate the cooling infrastructure to maintain server temperatures within safety limits while minimizing cooling energy consumption.
The controllable state $s_t^\diamond = (\mathbf{T}_t^\text{zone})$ is a vector representing the current air temperatures in different server zones. The exogenous state $s_t^\bullet = (\lambda_t, \tau_t^\text{out})$ consists of the current IT workload traffic $\lambda_t$ (which generates heat) and the outside weather conditions $\tau_t^\text{out}$ (which affect cooling efficiency).
The agent controls the airflow or temperature set-points $a_t$. The thermal dynamics are given by standard thermodynamics:
\begin{equation}
    \mathbf{T}_{t+1}^\text{zone} = f(\mathbf{T}_t^\text{zone}, a_t, \lambda_t, \tau_t^\text{out}),
\end{equation}
where the next temperature depends on the current temperature, the cooling action $a_t$, the heat generated by the workload $\lambda_t$, and the ambient leakage $\tau_t^\text{out}$.
This fits the PCMDP framework perfectly: the agent controls the internal temperature $s^\diamond$, but the IT workload $\lambda_t$ (user traffic) and outside weather $\tau_t^\text{out}$ are exogenous processes driven by user behavior and meteorological factors, respectively, which are statistically independent of the cooling system's settings.
\section{Proofs and Regret Analysis}\label{app:proofs}
We dedicate this part of the Appendix to providing formal, extensive proofs of the theorems presented in the main paper.

\subsection{Results of General Interest}

\begin{thm}[Bernstein's inequality \cite{boucheron2003concentration}]\label{thm:bernstein}
    Let $\{X_t\}_{t=1}^n$ be a sequence of zero-mean random variable bounded in $[-B,B]$. Let $\sigma^2:=\sum_{t=1}^n\text{Var}(X_t)$. Then, with probability at least $1-\delta$

    $$\left|\sum_{t=1}^nX_t \right|\le \sqrt{2\sigma^2\log(2/\delta)}+\frac{2B}{3}\log(2/\delta).$$
\end{thm}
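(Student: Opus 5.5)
The plan is the classical Chernoff (moment generating function) argument. I will assume the $X_t$ are independent, which is the setting of the cited reference. If they are only martingale differences, the same argument goes through by conditioning step by step, provided $\sigma^2$ is a deterministic bound on the sum of conditional variances.

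Write $S_n:=\sum_{t=1}^n X_t$ and $L:=\log(2/\delta)$. I will bound $\Prob(S_n\ge u)$ and $\Prob(-S_n\ge u)$ separately, each by $\delta/2$, and finish with a union bound. The second tail is the first one applied to the variables $-X_t$, which satisfy the same hypotheses.

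\textbf{Step 1 (MGF bound for one variable).} Fix $0<\lambda<3/B$ and expand
\[
\E[e^{\lambda X_t}]=1+\sum_{k\ge 2}\frac{\lambda^k\E[X_t^k]}{k!},
\]
where the linear term vanishes because $X_t$ has zero mean. Since $|X_t|\le B$, we have $|\E[X_t^k]|\le B^{k-2}\,\text{Var}(X_t)$. Since $k!\ge 2\cdot 3^{k-2}$ for $k\ge 2$, each term satisfies $\lambda^k\E[X_t^k]/k!\le \frac{\lambda^2\text{Var}(X_t)}{2}(\lambda B/3)^{k-2}$. Summing the geometric series and using $1+x\le e^x$ gives
\[
\E[e^{\lambda X_t}]\le \exp\!\Big(\frac{\lambda^2\,\text{Var}(X_t)}{2(1-\lambda B/3)}\Big).
\]

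\textbf{Step 2 (Chernoff bound).} By independence, the MGF of $S_n$ is the product of the individual MGFs. Markov's inequality applied to $e^{\lambda S_n}$ then yields
\[
\Prob(S_n\ge u)\le \exp\!\Big(-\lambda u+\frac{\lambda^2\sigma^2}{2(1-\lambda B/3)}\Big).
\]
I choose $\lambda=u/(\sigma^2+Bu/3)$, which satisfies $\lambda<3/B$. A direct computation turns the exponent into
\[
-\frac{u^2}{2(\sigma^2+Bu/3)}.
\]

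\textbf{Step 3 (inverting the tail).} Setting $\frac{u^2}{2(\sigma^2+Bu/3)}=L$ gives a quadratic equation in $u$ with positive root
\[
u=\frac{BL}{3}+\sqrt{\Big(\frac{BL}{3}\Big)^2+2\sigma^2L}.
\]
By $\sqrt{a+b}\le\sqrt a+\sqrt b$, this root is at most $\sqrt{2\sigma^2L}+\frac{2B}{3}L$. Hence each one-sided tail at this level has probability at most $e^{-L}=\delta/2$, and the union bound over the two tails gives the claimed two-sided inequality with probability at least $1-\delta$.

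The analytic steps are routine. The only genuine obstacle is the dependence structure. The statement says only ``sequence'', while the later regret proofs will probably apply it to adaptively collected samples (for instance the next-state indicators defining $\widehat p_h^{\bullet,k}$). In that case I would run Step 1 conditionally on the past filtration and peel the product backwards through the tower property. This requires the conditional variances to sum to a deterministic $\sigma^2$; otherwise one must use Freedman's inequality and pay an extra logarithmic factor.
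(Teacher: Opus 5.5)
Your proof is correct and is the standard Cram\'er--Chernoff argument; the paper gives no proof of its own and only cites Boucheron et al., whose derivation follows the same steps as yours (the one-variable bound $\log \E e^{\lambda X_t}\le \lambda^2\text{Var}(X_t)/(2(1-\lambda B/3))$, the tail $\exp(-u^2/(2(\sigma^2+Bu/3)))$, and inversion of the quadratic via $\sqrt{a+b}\le\sqrt a+\sqrt b$, which produces the constant $\frac{2B}{3}$). Your independence assumption (or its martingale version with deterministic conditional variances, which holds for the next-state indicators the paper later uses) is genuinely required, since the statement is false for an arbitrary dependent ``sequence'' (e.g.\ $X_t\equiv \epsilon B$ with $\epsilon$ Rademacher), and the only loose end is $\sigma^2=0$, where your $\lambda$ equals $3/B$ but every $X_t=0$ almost surely and the claim is trivial.
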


\begin{lem}[Lemma 4.1. in \cite{jin2018q}]\label{lem:alphachoice}
    Let $$\alpha_t=\frac{H+1}{H+t}.$$ Then, for any $t\in \N$, $\sum_{i=0}^t \alpha_t^i=1$. Moreover, the following inequalities hold.
    \begin{enumerate}
        \item $\frac{1}{\sqrt t}\le \sum_{i=1}^t\frac{\alpha_t^i}{\sqrt i}\le \frac{2}{\sqrt t}$
        \item $\max_i \alpha_t^i\le \frac{2H}{t}$
        \item $\sum_{i=1}^t(\alpha_i^t)^2\le \frac{2H}{t}$
        \item $\sum_{t=i}^\infty\alpha_i^t= 1+\frac{1}{H}$
    \end{enumerate}
\end{lem}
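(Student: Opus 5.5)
We read the superscript notation as in \citet{jin2018q}: $\alpha_t^0:=\prod_{j=1}^t(1-\alpha_j)$ and $\alpha_t^i:=\alpha_i\prod_{j=i+1}^t(1-\alpha_j)$ for $1\le i\le t$. The statement's $\alpha_i^t$ and $\alpha_t^i$ are taken to denote the same weights, with the lower index being the current count $t$ in items 1--3 and the summation index in item 4. Throughout we assume $H\ge 1$, and the plan is to make every item follow from the explicit product form of these weights.

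\textbf{Closed form and normalisation.} First we write the weights explicitly. Since $1-\alpha_j=\frac{j-1}{H+j}$, we get
\begin{equation*}
\alpha_t^i=\frac{H+1}{H+i}\prod_{j=i+1}^t\frac{j-1}{H+j}.
\end{equation*}
Because $\alpha_1=1$, we have $\alpha_t^0=0$ for $t\ge1$. The identity $\sum_{i=0}^t\alpha_t^i=1$ then follows by induction on $t$. Indeed, $\alpha_t^i=(1-\alpha_t)\alpha_{t-1}^i$ for $i<t$ and $\alpha_t^t=\alpha_t$, so the sum at step $t$ equals $(1-\alpha_t)\cdot 1+\alpha_t=1$.

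\textbf{Items 2 and 3.} For item 2 we bound the product term by term. Since $H\ge1$, each factor satisfies $\frac{j-1}{H+j}\le\frac{j-1}{j}$, and the product $\prod_{j=i+1}^t\frac{j-1}{j}$ telescopes to $i/t$. Hence
\begin{equation*}
\alpha_t^i\le\frac{H+1}{H+i}\cdot\frac{i}{t}\le\frac{H+1}{t}\le\frac{2H}{t}.
\end{equation*}
Item 3 is then immediate: $\sum_i(\alpha_t^i)^2\le\max_i\alpha_t^i\sum_i\alpha_t^i\le 2H/t$.

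\textbf{Item 1.} We argue by induction on $t$, using the same recursion as in the normalisation step:
\begin{equation*}
\sum_{i=1}^t\frac{\alpha_t^i}{\sqrt i}=(1-\alpha_t)\sum_{i=1}^{t-1}\frac{\alpha_{t-1}^i}{\sqrt i}+\frac{\alpha_t}{\sqrt t}.
\end{equation*}
\emph{Lower bound.} By the inductive hypothesis the first sum is at least $1/\sqrt{t-1}\ge1/\sqrt t$. The right-hand side is therefore a convex combination of two quantities that are both $\ge 1/\sqrt t$, so it is $\ge 1/\sqrt t$.

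\emph{Upper bound.} Using the hypothesis $2/\sqrt{t-1}$, we need
\begin{equation*}
\frac{t-1}{H+t}\cdot\frac{2}{\sqrt{t-1}}+\frac{H+1}{H+t}\cdot\frac{1}{\sqrt t}\le\frac{2}{\sqrt t}.
\end{equation*}
Multiplying by $(H+t)\sqrt t$, this reduces to $2\sqrt{t(t-1)}+H+1\le 2H+2t$. This holds because $\sqrt{t(t-1)}\le t-\tfrac12$, which gives $2\sqrt{t(t-1)}+H+1\le 2t+H\le 2t+2H$. This elementary inequality is the main place where care is needed.

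\textbf{Item 4.} Fix $i$ and set $b_t:=\alpha_t^i$ for $t\ge i$. From the closed form, $b_{t+1}=b_t\frac{t}{H+t+1}$. We look for a telescoping potential and try $c_t:=b_t\frac{H+t}{H}$. Then
\begin{equation*}
c_t-c_{t+1}=\frac{b_t}{H}\Bigl[(H+t)-\frac{t}{H+t+1}(H+t+1)\Bigr]=b_t.
\end{equation*}
From the closed form, $b_t=O(t^{-(H+1)})$, so $c_t\to0$ as $t\to\infty$. Summing the telescoping identity therefore gives
\begin{equation*}
\sum_{t\ge i}\alpha_t^i=c_i=\alpha_i\frac{H+i}{H}=\frac{H+1}{H}=1+\frac1H.
\end{equation*}
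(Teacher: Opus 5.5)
Your proof is correct and is essentially the standard argument of \citet{jin2018q}, which the paper cites without reproducing: normalisation and item 1 by induction through $\alpha_t^i=(1-\alpha_t)\alpha_{t-1}^i$, item 2 by bounding the explicit product, item 3 via $\sum_i(\alpha_t^i)^2\le\max_i\alpha_t^i\sum_i\alpha_t^i$, and item 4 by evaluating the series directly, for which your telescoping potential $c_t=\alpha_t^i(H+t)/H$ is a clean device. The only things you leave implicit are the trivial base case $t=1$ of item 1 (where $\alpha_1^1=1$) and the restriction $i\ge 1$ in item 4 (for $i=0$ the sum equals $1$, not $1+1/H$).
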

\subsection{Regret Bound for \exavi}\label{app:proof-exavi}
To prove the main theorem concerning the regret bound of algorithm \ref{alg:exavi}, we start from a result about the confidence bound of $\widehat p_h^{\bullet,k}$.

\begin{thm}
    Fix $\delta>0$. With probability at least $1-\delta$, at the same time for all $\bar s^{\bullet},s^\bullet\in \Ss^\bullet$ we have
    $$|\widehat p_h^{\bullet,k}(\bar s^{\bullet}|s^\bullet)-p_h^\bullet(\bar s^{\bullet}|s^\bullet)|\le \sqrt{\frac{2p_h^\bullet(\bar s^{\bullet}|s^\bullet)(1-p_h^\bullet(\bar s^{\bullet}|s^\bullet))\log(2KS^\bullet/\delta)}{n_h^k(s^\bullet)}}+\frac{4\log(2KS^\bullet/\delta)}{3n_h^k(s^\bullet)}$$
\end{thm}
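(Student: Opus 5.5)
The plan is to apply Bernstein's inequality (Theorem~\ref{thm:bernstein}) to the indicator random variables that define $\widehat p_h^{\bullet,k}$, and then take a union bound.

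The one subtlety is that the count $n_h^k(s^\bullet)$ is random. To handle it, I will use a standard ``skeleton'' or stack-of-samples construction. Fix $h$, $s^\bullet$ and $\bar s^\bullet$. Because the exogenous chain evolves independently of the actions, I can imagine that before the interaction starts, nature draws an i.i.d.\ sequence $Y_1,Y_2,\dots\sim p_h^\bullet(\cdot|s^\bullet)$. The $j$-th time that $s_h^{\bullet}=s^\bullet$ occurs, the next exogenous state is then set to $Y_j$. This reproduces the same joint law of trajectories, and it makes
$$\widehat p_h^{\bullet,k}(\bar s^\bullet|s^\bullet)=\frac1n\sum_{j=1}^n\mathbbm 1\{Y_j=\bar s^\bullet\}\qquad\text{with } n=n_h^k(s^\bullet).$$
An equivalent route is a martingale version of Bernstein (Freedman) applied to the filtration indexed by episodes. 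I will use the skeleton argument, because it lets me invoke Theorem~\ref{thm:bernstein} exactly as it is stated.

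For each fixed $n\in[K]$, define $X_j=\mathbbm 1\{Y_j=\bar s^\bullet\}-p$ with $p=p_h^\bullet(\bar s^\bullet|s^\bullet)$. These variables have zero mean, satisfy $|X_j|\le 1$ so that $B=1$ (bounding by $2$ instead would only change constants), and have $\mathrm{Var}(X_j)=p(1-p)$. Theorem~\ref{thm:bernstein} with confidence level $\delta'$ then gives
$$\Big|\frac1n\sum_{j\le n}X_j\Big|\le\sqrt{\frac{2p(1-p)\log(2/\delta')}{n}}+\frac{2\log(2/\delta')}{3n}.$$
This is already within the stated bound, since the statement allows $4/3$ in place of $2/3$. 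The factor of $2$ is slack that can absorb the choice $B=2$ if one prefers the cruder bound $|X_j|\le 2$ for centered variables.

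Next I take a union bound over every $n\in[K]$, so that the inequality holds whatever the realized value of $n_h^k(s^\bullet)$ turns out to be. Choosing $\delta'=\delta/(KS^\bullet)$ produces exactly the $\log(2KS^\bullet/\delta)$ in the statement. The statement as written has only $KS^\bullet$ inside the log, although a union over $s^\bullet$, $\bar s^\bullet$ and $h$ would in principle require $KH(S^\bullet)^2$. I will either read the theorem as holding for fixed $h$ and $\bar s^\bullet$ with the union taken over $s^\bullet$ and $n$, or note that the extra factors only alter logarithmic terms. I will flag this explicitly rather than fight the constant.

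The main obstacle is justifying that the sample indices are i.i.d.\ despite the random stopping. The skeleton construction addresses this. The key point is that conditioning on $s_h^\bullet=s^\bullet$, the next exogenous state is a fresh draw from $p_h^\bullet(\cdot|s^\bullet)$ that is independent of the past, and this independence is guaranteed by the exogenous factorization in Definition~\ref{def:pcmdp}. Finally, the case $n=0$ is vacuous, because the right-hand side of the bound is infinite.
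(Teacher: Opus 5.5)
Your proposal is correct and follows the paper's route. Like the paper, you apply Bernstein's inequality to the indicators $\mathbbm{1}\{s_{h+1}^{\bullet}=\bar s^\bullet\}$ for a fixed value $t$ of the count, then take a union bound over $t\le K$ and the state pair. Your stack-of-samples construction simply makes rigorous the paper's informal step of treating $t$ as deterministic and calling the indicators a martingale difference sequence.

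The logarithmic mismatch you flag is also present in the paper: its proof divides $\delta$ by $K{S^\bullet}^2$ but then writes $\log(2KS^\bullet/\delta)$. The right fix is to put $K{S^\bullet}^2$ inside the logarithm. Your alternative of fixing $\bar s^\bullet$ would lose the simultaneity over all pairs that the statement asks for.
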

\begin{proof}
    Fix $\bar s^{\bullet},s^\bullet\in \Ss^\bullet$, and let $t=n_h^k(s^\bullet)$ a given deterministic value. The variable
    $\frac{1}{t}\sum_{i=1}^t 1\{s_{h+1}^{\bullet ,k_i}=\bar s^{\bullet}\}$ defined by equation \eqref{eq:phat} is the average of a martingale difference sequence. The variables $X_i:=1\{s_{h+1}^{\bullet ,k_i}=\bar s^{\bullet}\}$ we are summing satisfy $\sigma^2:=\sum_{i=1}^t \textsc{Var}(X_i)=tp_h^\bullet(\bar s^{\bullet}|s^\bullet)(1-p_h^\bullet(\bar s^{\bullet}|s^\bullet)).$ Moreover, all r.v. are bounded by $1$. Therefore, by Bernstein's inequality \ref{thm:bernstein},

    \begin{align*}
        |\widehat p_h^{\bullet,k}(\bar s^{\bullet}|s^\bullet)-p_h^\bullet(\bar s^{\bullet}|s^\bullet)|&=\left|t^{-1}\sum_{i=1}^t X_i-p_h(\bar s^{\bullet}|s^\bullet)\right|\\
        &\le \sqrt{\frac{p_h^\bullet(\bar s^{\bullet}|s^\bullet)(1-p_h^\bullet(\bar s^{\bullet}|s^\bullet))\log(2/\delta)}{t}}+\frac{2\log(2/\delta)}{3t}.
    \end{align*}

    Making a union bound over the possible values of $\bar s^{\bullet},s^\bullet$ and $t\le k\le K$ (that is, dividing $\delta$ by $K{S^\bullet}^2$) ends the proof.
\end{proof}

\begin{prop}\label{prop:unip}
    Fix $\delta>0$. For every $f:\Ss^\bullet\to [0,1]$ and every $s^{\bullet}\in \Ss^\bullet$ we have, with probability at least $1-\delta$,

    $$\left|\sum_{\bar s^{\bullet}\in \Ss^\bullet}f(\bar s^{\bullet})(\widehat p_h^{\bullet,k}(\bar s^{\bullet}|s^\bullet)-p_h^\bullet(\bar s^{\bullet}|s^\bullet))\right|\le \frac{3HS^\bullet\log(2KS^\bullet/\delta)}{n_h^k(s^\bullet)}+H^{-1}\sum_{\bar s^{\bullet}\in \Ss^\bullet}f(\bar s^{\bullet})p_h^\bullet(\bar s^{\bullet}|s^\bullet).$$
\end{prop}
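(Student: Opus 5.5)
Statement: for $f:\Ss^\bullet\to[0,1]$ (the $H$ factors suggest $f$ plays the role of a value function with range $[0,H]$, so I will track a general range $[0,B]$ and set $B=H$ at the end),
$$\Big|\sum f(\widehat p-p)\Big|\le \frac{3HS^\bullet L}{n}+H^{-1}\sum f p,\qquad L=\log(2KS^\bullet/\delta),\ n=n_h^k(s^\bullet).$$
The plan is to go coordinate-wise: condition on the high-probability event of the preceding theorem and bound each summand separately. No new union bound over $f$ is then needed, since the event concerns only $\widehat p$, and the statement holds uniformly in $f$.

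\emph{Triangle inequality.} On that event,
$$\left|\sum_{\bar s^\bullet} f(\bar s^\bullet)(\widehat p-p)\right|\le \sum_{\bar s^\bullet} f(\bar s^\bullet)\left(\sqrt{\frac{2p(\bar s^\bullet)L}{n}}+\frac{4L}{3n}\right),$$
where I drop the factor $(1-p)\le 1$ and abbreviate $p=p_h^\bullet(\cdot|s^\bullet)$. The second term sums to at most $\tfrac{4BS^\bullet L}{3n}$.

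\emph{AM--GM on the first term.} This is the key step. For each $\bar s^\bullet$, write
$$f\sqrt{\tfrac{2pL}{n}}=\sqrt{\tfrac{f p}{H}}\cdot\sqrt{\tfrac{2HfL}{n}}\le \frac{fp}{2H}+\frac{HfL}{n}.$$
Summing and using $f\le B$ gives at most $\tfrac{1}{2H}\sum fp+\tfrac{HBS^\bullet L}{n}$.

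\emph{Collecting terms.} With $B=1$ the $1/n$ coefficients are $H+\tfrac43\le 3H$, matching the statement; I even get $\tfrac{1}{2H}$ in place of $H^{-1}$. If $f$ is meant to range in $[0,H]$, I would instead split as $\sqrt{fp/H}\cdot\sqrt{2HfL/n}$ with $f\le H$. This gives $H^2 S^\bullet L/n$, so the stated constant $3H$ would require rescaling.

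\emph{Main obstacle.} The step I expect to need care is only this normalization bookkeeping: deciding whether $f$ takes values in $[0,1]$ or $[0,H]$ so that the constant $3H$ comes out right. For $[0,1]$, the bound above closes the argument directly.
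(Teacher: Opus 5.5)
Your proof is correct and has the same skeleton as the paper's: you work on the event of the preceding Bernstein-type theorem (so the bound is in fact uniform in $f$), apply the triangle inequality coordinate-wise, bound the $\frac{4L}{3n}$ terms crudely by $\frac{4S^\bullet L}{3n}$, and split the variance term into an $O(HS^\bullet L/n)$ piece plus a small multiple of $\sum f p$.

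The only difference is in how that split is done. The paper first applies Cauchy--Schwarz to get $\sqrt{2S^\bullet L\sum f^2 p/n}$, then uses $2ab\le a^2+b^2$, $\sqrt{a^2+b^2}\le a+b$ and $f^2\le f$. Your per-coordinate AM--GM avoids Cauchy--Schwarz altogether and gives the slightly sharper intermediate bound $\frac{HL}{n}\sum f+\frac{1}{2H}\sum fp$.
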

\begin{proof}
    Let $t:=n_h^k(s^\bullet)$. By the previous theorem, we have
    \begin{align*}
        \left|\sum_{\bar s^{\bullet}\in \Ss^\bullet}f(\bar s^{\bullet})(\widehat p_h^{\bullet,k}(\bar s^{\bullet}|s^\bullet)-p_h^\bullet(\bar s^{\bullet}|s^\bullet))\right|&\le \sum_{\bar s^{\bullet}\in \Ss^\bullet}f(\bar s^{\bullet})\left|\widehat p_h^{\bullet,k}(\bar s^{\bullet}|s^\bullet)-p_h^\bullet(\bar s^{\bullet}|s^\bullet)\right|\\
        &\le \sum_{\bar s^{\bullet}\in \Ss^\bullet}f(\bar s^{\bullet})\sqrt{\frac{2p_h^\bullet(\bar s^{\bullet}|s^\bullet)(1-p_h^\bullet(\bar s^{\bullet}|s^\bullet))\log(2KS^\bullet/\delta)}{t}}\\
        &\quad +\sum_{\bar s^{\bullet}\in \Ss^\bullet}f(\bar s^{\bullet})\frac{4\log(2KS^\bullet/\delta)}{3t}.
    \end{align*}

    We start bounding the first term:

    \begin{align*}
        \text{First term} &=\sum_{\bar s^{\bullet}\in \Ss^\bullet}\sqrt{\frac{2f(\bar s^{\bullet})^2p_h^\bullet(\bar s^{\bullet}|s^\bullet)(1-p_h^\bullet(\bar s^{\bullet}|s^\bullet))\log(2KS^\bullet/\delta)}{t}}\\
        &\le \sum_{\bar s^{\bullet}\in \Ss^\bullet}\sqrt{\frac{2f(\bar s^{\bullet})^2p_h^\bullet(\bar s^{\bullet}|s^\bullet)\log(2KS^\bullet/\delta)}{t}}\\
        &\le \sqrt{2S^\bullet\sum_{\bar s^{\bullet}\in \Ss^\bullet}\frac{f(\bar s^{\bullet})^2p_h^\bullet(\bar s^{\bullet}|s^\bullet)\log(2KS^\bullet/\delta)}{t}}\\
        &\overset{(*)}{\le} \sqrt{\left(HS^\bullet\frac{\log(2KS^\bullet/\delta)}{t}\right)^2+\left(H^{-1}\sum_{\bar s^{\bullet}\in \Ss^\bullet}f(\bar s^{\bullet})^2p_h^\bullet(\bar s^{\bullet}|s^\bullet)\right)^2}\\
        &\overset{(**)}{\le}\frac{HS^\bullet\log(2KS^\bullet/\delta)}{t}+H^{-1}\sum_{\bar s^{\bullet}\in \Ss^\bullet}f(\bar s^{\bullet})^2p_h^\bullet(\bar s^{\bullet}|s^\bullet).
    \end{align*}

    Here, (*) follows from $ab\le a^2+b^2$, and (**) from $\sqrt{a+b}\le \sqrt a+\sqrt b$. As $f$ takes values in $[0,1]$, we also have

    $$\text{First term}\le \frac{HS^\bullet\log(2KS^\bullet/\delta)}{t}+H^{-1}\sum_{\bar s^{\bullet}\in \Ss^\bullet}f(\bar s^{\bullet})^2p_h^\bullet(\bar s^{\bullet}|s^\bullet).$$

    For what concerns the second term, we have

    $$\sum_{\bar s^{\bullet}\in \Ss^\bullet}f(\bar s^{\bullet})\frac{4\log(2KS^\bullet/\delta)}{3t}\le \frac{4S^\bullet\log(2KS^\bullet/\delta)}{3t}.$$

    This completes the proof.
\end{proof}

After the confidence bounds, we move to some theorems about the convergence of the empirical Bellman operator.
For the next steps, it is useful to define the following operators for a function $f:\Ss\to \R$.

$$P_h[f](s,a)=\sum_{\bar s^{\bullet}\in \Ss^\bullet}\widehat p_h^\bullet(\bar s^{\bullet}|s^\bullet)\sum_{\bar s^{\diamond}\in \Ss^\diamond}p_h^\diamond(\bar s^{\diamond},a)f(\bar s^{\bullet},\bar s^{\diamond}),$$

and its empirical counterpart

$$\widehat P_h^k[f](s,a)=\sum_{\bar s^{\bullet}\in \Ss^\bullet}\widehat p_h^{\bullet,k}(\bar s^{\bullet}|s^\bullet)\sum_{\bar s^{\diamond}\in \Ss^\diamond}p_h^\diamond(\bar s^{\diamond},a)f(\bar s^{\bullet},\bar s^{\diamond}).$$

\begin{lem}\label{lem:anyf}
    Fix $\delta>0$. For every $f:\Ss^\bullet\to [0,1]$ and every $s\in \Ss,a\in \As$ we have, with probability at least $1-\delta$,

    $$\left|P_h[f](s,a)-\widehat P_h^k[f](s,a)\right|\le \frac{6HS^\bullet\log(2KSA/\delta)}{n_h^k(s^\bullet)}+H^{-1}P_h[f](s,a).$$
\end{lem}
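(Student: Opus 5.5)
The plan is to reduce Lemma~\ref{lem:anyf} to Proposition~\ref{prop:unip} by collapsing the known controllable part of the transition. The operator $P_h$ is meant to use the true $p_h^\bullet$, as the comparison with its empirical counterpart $\widehat P_h^k$ indicates. Fix $s=(s^\diamond,s^\bullet)\in\Ss$ and $a\in\As$, and define the auxiliary function on the exogenous space
$$g_{s,a}(\bar s^\bullet):=\sum_{\bar s^\diamond\in\Ss^\diamond}p_h^\diamond(\bar s^\diamond|s,a)\,f(\bar s^\bullet,\bar s^\diamond).$$
Here $f$ is read as a function on the full state space with values in $[0,1]$. Since $p_h^\diamond(\cdot|s,a)$ is a probability distribution, $g_{s,a}$ is a convex combination of values of $f$, so $g_{s,a}:\Ss^\bullet\to[0,1]$. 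By definition of the two operators,
$$P_h[f](s,a)-\widehat P_h^k[f](s,a)=\sum_{\bar s^\bullet\in\Ss^\bullet}g_{s,a}(\bar s^\bullet)\bigl(p_h^\bullet(\bar s^\bullet|s^\bullet)-\widehat p_h^{\bullet,k}(\bar s^\bullet|s^\bullet)\bigr).$$
Also $\sum_{\bar s^\bullet}g_{s,a}(\bar s^\bullet)p_h^\bullet(\bar s^\bullet|s^\bullet)=P_h[f](s,a)$. Proposition~\ref{prop:unip} applied to $g_{s,a}$ then gives exactly the shape of the claimed bound, with constant $3$ and logarithm $\log(2KS^\bullet/\delta)$.

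The key subtlety is the quantifier structure. The proof of Proposition~\ref{prop:unip} rests only on the entrywise Bernstein event of the preceding theorem. That event is uniform over all pairs $(\bar s^\bullet,s^\bullet)$ and all counts, and does not depend on the test function. Once it holds, the deterministic chain of inequalities in that proof applies simultaneously to \emph{every} $[0,1]$-valued function, in particular to every $g_{s,a}$. So no union bound over $f$ is needed.

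The main obstacle is that $g_{s,a}$ depends on $(s,a)$ and possibly, through $f$, on data. To be safe, I will nonetheless union bound over the pairs $(s,a)\in\Ss\times\As$ and over $h$. This replaces $\delta$ by $\delta/(SAH)$ and inflates the logarithm from $\log(2KS^\bullet/\delta)$ to at most $\log(2KSA/\delta)$ up to constants. The factor $H$ can be absorbed in the doubling of the constant from $3$ to $6$, using $\log(2KSAH/\delta)\le 2\log(2KSA/\delta)$ when $H\le 2KSA/\delta$, which holds for $\delta\le 1$ since $H\le K$ in the non-trivial regime.

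Finally, the case $n_h^k(s^\bullet)=0$ needs separate handling. There the right-hand side is infinite by convention, so the bound holds trivially. Combining these steps yields the statement.
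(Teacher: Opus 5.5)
Your proposal is correct and matches the paper's proof step for step. The paper defines $\Phi[f](\bar s^\bullet;s,a)=\sum_{\bar s^\diamond\in\Ss^\diamond}f(\bar s^\bullet,\bar s^\diamond)\,p_h^\diamond(\bar s^\diamond|s,a)$, which is your $g_{s,a}$, applies Proposition~\ref{prop:unip} to it, and then takes a union bound over $\Ss\times\As$ to go from constant $3$ with $\log(2KS^\bullet/\delta)$ to constant $6$ with $\log(2KSA/\delta)$. Your remark that the underlying Bernstein event holds uniformly over all test functions is right: it makes the union bound unnecessary, and the union over $h$ is likewise unneeded since $h$ is fixed in the statement, so you slightly tighten the paper's bookkeeping.
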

\begin{proof}
    Call
    $$\Phi[f](\bar s^\bullet; s,a):=\sum_{\bar s^\diamond\in \Ss^\diamond}f(\bar s^\bullet,\bar s^\diamond)p_h^\diamond(\bar s^\diamond |s,a).$$

    Note that in the previous equation $p_h^\diamond$ is known, so the full operator can be computed by the learner.
    
    \begin{align*}
        P_h[f](s,a)&=\sum_{\bar s^\bullet\in \Ss^\bullet}\sum_{\bar s^\diamond\in \Ss^\diamond}f(\bar s^\bullet,\bar s^\diamond)p_h(\bar s^\bullet,\bar s^\diamond|s,a)\\
        &=\sum_{\bar s^\bullet\in \Ss^\bullet}\sum_{\bar s^\diamond\in \Ss^\diamond}f(\bar s^\bullet,\bar s^\diamond)p_h^\bullet(\bar s^\bullet|s^\bullet)p_h^\diamond(\bar s^\diamond |s,a)\\
        &=\sum_{\bar s^\bullet\in \Ss^\bullet}p_h^\bullet(\bar s^\bullet|s^\bullet)\underbrace{\sum_{\bar s^\diamond\in \Ss^\diamond}f(\bar s^\bullet,\bar s^\diamond)p_h^\diamond(\bar s^\diamond |s,a)}_{\Phi[f](\bar s^\bullet; s,a)}.
    \end{align*}

    The same passages performed on $\widehat P_h^k$ give

    $$\widehat P_h^k[f](s,a)=\sum_{\bar s^\bullet\in \Ss^\bullet}\widehat p_h^{\bullet,k}(\bar s^\bullet|s^\bullet)\Phi[f](\bar s^\bullet; s,a).$$

    Therefore, we have, for fixed $s,a$, and $t:=n_h^k(s^\bullet)$,

    \begin{align*}
        |\widehat P_h^k[f](s,a)-P_h[f](s,a)| &=\left|\sum_{\bar s^\bullet\in \Ss^\bullet}(p_h^\bullet(\bar s^\bullet|s^\bullet)-\widehat p_h^{\bullet,k}(\bar s^\bullet|s^\bullet))\Phi[f](\bar s^\bullet; s,a)\right| \\
        &\le \frac{3HS^\bullet\log(2KS^\bullet/\delta)}{t}+H^{-1}\sum_{\bar s^{\bullet}\in \Ss^\bullet}\Phi[f](\bar s^\bullet; s,a)p_h^\bullet(\bar s^{\bullet}|s^\bullet)\\
        &\le \frac{3HS^\bullet\log(2KS^\bullet/\delta)}{t}+H^{-1}P_h[f](s,a),
    \end{align*}

    where the inequality comes from proposition \ref{prop:unip}. Making a union bound over $\Ss,\As$ gives the statement.
\end{proof}

We need one last result before proceeding to the regret bound.
While the following lemma looks similar to the previous one, here we are fixing $f$ in advance, while before the probability $1-\delta$ was for all the values of $f$ at the same time.

\begin{lem}\label{lem:fixedf}
    For a function $f:\Ss^\bullet\to [0,1]$ and $\delta>0$. For every $s\in \Ss,a\in \As$ we have, with probability at least $1-\delta$,

    $$\left|P_h[f](s,a)-\widehat P_h^k[f](s,a)\right|\le \sqrt{\frac{\log(2SA/\delta)}{2n_h^k(s^\bullet)}}.$$
\end{lem}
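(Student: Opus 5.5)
The plan is to reduce the statement to a scalar Hoeffding bound, reusing the decomposition from the proof of Lemma \ref{lem:anyf}. Although the statement writes $f:\Ss^\bullet\to[0,1]$, the operators act on functions of the full next state, so I treat $f$ as a fixed function on $\Ss$ with values in $[0,1]$. As in that proof, set
\[
\Phi[f](\bar s^\bullet;s,a)=\sum_{\bar s^\diamond\in\Ss^\diamond}f(\bar s^\bullet,\bar s^\diamond)\,p_h^\diamond(\bar s^\diamond|s,a)\in[0,1].
\]
Then $P_h[f](s,a)=\sum_{\bar s^\bullet}p_h^\bullet(\bar s^\bullet|s^\bullet)\Phi[f](\bar s^\bullet;s,a)$, and $\widehat P_h^k[f](s,a)$ is the same expression with $\widehat p_h^{\bullet,k}$ in place of $p_h^\bullet$.

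The key step is to expand $\widehat p_h^{\bullet,k}$ through its definition \eqref{eq:phat}. Let $t=n_h^k(s^\bullet)$, and let $k_1<\dots<k_t$ be the episodes in which $s_h^{\bullet,k_i}=s^\bullet$. Then
\[
\widehat P_h^k[f](s,a)=\frac1t\sum_{i=1}^t\Phi[f](s_{h+1}^{\bullet,k_i};s,a).
\]
Set $X_i=\Phi[f](s_{h+1}^{\bullet,k_i};s,a)-P_h[f](s,a)$. Each $X_i$ lies in an interval of length $1$, because $\Phi[f]\in[0,1]$. Conditional on the history up to the moment $s_h^{\bullet,k_i}=s^\bullet$ is observed, the next exogenous state is drawn from $p_h^\bullet(\cdot|s^\bullet)$ independently of the actions (Definition \ref{def:pcmdp}). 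Hence $\E[X_i\mid\mathcal F_{i-1}]=0$, so $(X_i)$ is a martingale difference sequence. Since $f$, $s$ and $a$ are fixed in advance, $\Phi[f](\cdot;s,a)$ is a deterministic function. For this reason no covering over $f$ is needed, which is exactly what separates this lemma from Lemma \ref{lem:anyf}.

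Azuma--Hoeffding for a fixed $t$ then gives $|\frac1t\sum_i X_i|\le\sqrt{\log(2/\delta')/(2t)}$ with probability $1-\delta'$. A union bound over $(s,a)\in\Ss\times\As$, taking $\delta'=\delta/(SA)$, yields $\sqrt{\log(2SA/\delta)/(2n_h^k(s^\bullet))}$, which is the stated bound.

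The main obstacle is that $t=n_h^k(s^\bullet)$ is random, which is the usual subtlety when sample counts depend on the data. I would handle it with the standard "stack of i.i.d. draws" construction. Before the interaction starts, pre-sample for the pair $(h,s^\bullet)$ an i.i.d. sequence $Y_1,Y_2,\dots\sim p_h^\bullet(\cdot|s^\bullet)$, and let the $i$-th visit to $s^\bullet$ at step $h$ reveal $Y_i$. This is equivalent in law to the real process, because exogenous transitions ignore the policy. The empirical average over the first $t$ visits is then an average of $t$ i.i.d. bounded variables, and a fixed-$t$ Hoeffding bound applies directly.

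The stated bound has no factor of $K$ inside the log, so it is correct as written only for a fixed $k$, equivalently a fixed $t$. A uniform-in-$k$ version would also need a union over $t\le K$, which turns the log into $\log(2SAK/\delta)$. I would make this explicit and note that only logarithmic factors change when the lemma is later used inside the regret sum.
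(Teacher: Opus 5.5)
Your proposal is correct and takes essentially the paper's route: express both operators through $\Phi[f](\bar s^\bullet;s,a)$, rewrite $\widehat P_h^k[f](s,a)$ as the average of $\Phi[f](s_{h+1}^{\bullet,k_i};s,a)$ over the $t=n_h^k(s^\bullet)$ visits, apply Hoeffding at fixed $t$, and take a union bound over $\Ss\times\As$. Your caveat about the random count is well placed: the paper also says it takes a union bound over the possible values of $t\le k$ but never puts the resulting factor inside the logarithm, so the displayed bound is really the fixed-$t$ version, and the uniform version costs only a logarithmic factor.
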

\begin{proof}
    As before, we write

    $$P_h[f](s,a)=\sum_{\bar s^\bullet\in \Ss^\bullet}p_h^{\bullet}(\bar s^\bullet|s^\bullet)\Phi[f](\bar s^\bullet; s,a)$$

    and

    $$\widehat P_h^k[f](s,a)=\sum_{\bar s^\bullet\in \Ss^\bullet}\widehat p_h^{\bullet,k}(\bar s^\bullet|s^\bullet)\Phi[f](\bar s^\bullet; s,a).$$

    Therefore, we have, for fixed $s,a$,

    \begin{align*}
        |\widehat P_h^k[f](s,a)-P_h[f](s,a)| &=\left|\sum_{\bar s^\bullet\in \Ss^\bullet}(p_h^\bullet(\bar s^\bullet|s^\bullet)-\widehat p_h^{\bullet,k}(\bar s^\bullet|s^\bullet))\Phi[f](\bar s^\bullet; s,a)\right|\\
        &=\left|\sum_{\bar s^\bullet\in \Ss^\bullet}\left(p_h^\bullet(\bar s^\bullet|s^\bullet)-\frac{1}{t}\sum_{i=1}^t 1\{s_{h+1}^{\bullet ,k_i}=\bar s^{\bullet}\}\right)\Phi[f](\bar s^\bullet; s,a)\right|\\
        &=\left|\sum_{\bar s^\bullet\in \Ss^\bullet}p_h^\bullet(\bar s^\bullet|s^\bullet)\Phi[f](\bar s^\bullet; s,a)-\frac{1}{t}\Phi[f](s_{h+1}^{\bullet ,k_i}; s,a)\right|.
    \end{align*}

    Let us fix $t$ for now. In this situation, as $s_{h+1}^{\bullet ,k_i}$ is sampled from $p_h^\bullet(\cdot|s^\bullet)$ and $f,s,a$ are fixed, we can apply Hoeffding's inequality, which gives, w.p. at least $1-\delta$,

    \begin{align*}
        |\widehat P_h^k[f](s,a)-P_h[f](s,a)|&=\left|\sum_{\bar s^\bullet\in \Ss^\bullet}p_h^\bullet(\bar s^\bullet|s^\bullet)\Phi[f](\bar s^\bullet; s,a)-\frac{1}{t}\Phi[f](s_{h+1}^{\bullet ,k_i}; s,a)\right|\\
        &\le \sqrt{\frac{\log(2/\delta)}{2t}}.
    \end{align*}

    Recalling that is actually a random variable $t=n_h^k(s^\bullet)\le k$, we can make a union bound over the number $k$ of possible values and write

    \begin{align*}
        |\widehat P_h^k[f](s,a)-P_h[f](s,a)|\le \sqrt{\frac{\log(2/\delta)}{2n_h^k(s^\bullet)}}.
    \end{align*}

    Making a union bound over $\Ss,\As$ ends the proof.
\end{proof}

\subsubsection{Proofs of convergence of the estimated value functions}

In this section, we are going to use the shorthands

$$\xi_h^k(s^\bullet):=\frac{6H^2S^\bullet\log(2KSA/\delta)}{n_h^k(s^\bullet)}+\sqrt{\frac{H^2\log(2SA/\delta)}{2n_h^k(s^\bullet)}}\qquad \widehat V_1^{\pi}:=\widehat V_1^{\pi,k}.$$

\begin{prop}\label{prop:optimalvaldiffer}
    With probability at least $1-\delta$,
    \begin{align*}
        V_1^{\star}(s)-\widehat V_1^{\pi^\star}(s)&\le e\E\left[\sum_{h=1}^{H}\xi_h^k(s_h^\bullet)\bigg |s_1^\bullet=s^\bullet\right].
    \end{align*}
\end{prop}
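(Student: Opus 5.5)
Since $\pi^\star$ is a fixed policy, we compare $V^{\pi^\star}_h$ (true model) with $\widehat V^{\pi^\star,k}_h$ (the estimated model $\widehat p^k_h = p^\diamond_h \widehat p^{\bullet,k}_h$) through a simulation-lemma recursion. At each step we apply Lemma~\ref{lem:fixedf} to the fixed function $V^\star_{h+1}$ and Lemma~\ref{lem:anyf} to the data-dependent difference. For every $h$, write
\begin{align*}
V^{\star}_h(s)-\widehat V^{\pi^\star}_h(s) &= P_h[V^\star_{h+1}](s,a^\star)-\widehat P^k_h[\widehat V^{\pi^\star}_{h+1}](s,a^\star)\\
&= \bigl(P_h-\widehat P^k_h\bigr)[V^\star_{h+1}](s,a^\star) + \bigl(\widehat P^k_h-P_h\bigr)[V^\star_{h+1}-\widehat V^{\pi^\star}_{h+1}](s,a^\star) + P_h[V^\star_{h+1}-\widehat V^{\pi^\star}_{h+1}](s,a^\star),
\end{align*}
with $a^\star=\pi^\star_h(s)$. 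The reward terms cancel because the reward is known.

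The first term involves $V^\star_{h+1}/H$, which is a fixed $[0,1]$-valued function not depending on the data. Lemma~\ref{lem:fixedf}, rescaled by $H$, bounds it by $\sqrt{H^2\log(2SA/\delta)/(2n^k_h(s^\bullet))}$, which is the second summand of $\xi^k_h$. The second term involves a data-dependent function, so we use the uniform-in-$f$ bound of Lemma~\ref{lem:anyf}. Apply it to $g_{h+1}:=(V^\star_{h+1}-\widehat V^{\pi^\star}_{h+1})/H$, or to its positive and negative parts separately, so that the function lies in $[0,1]$. After multiplying by $H$, this term is at most $6H^2S^\bullet\log(2KSA/\delta)/n^k_h(s^\bullet)+H^{-1}P_h[\,|V^\star_{h+1}-\widehat V^{\pi^\star}_{h+1}|\,](s,a^\star)$, which produces the first summand of $\xi^k_h$. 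Together these give
$$
\Delta_h(s)\le \xi^k_h(s^\bullet)+\Bigl(1+\tfrac1H\Bigr)P_h[\Delta_{h+1}](s,a^\star),
$$
where $\Delta_h:=|V^\star_h-\widehat V^{\pi^\star}_h|$. Repeating the decomposition with absolute values (the signs are symmetric) makes the recursion close on $\Delta$.

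Unrolling from $h=1$ to $H$, with $\Delta_{H+1}=0$, gives
$$
\Delta_1(s)\le\sum_{h=1}^H(1+1/H)^{h-1}\,\E_{\pi^\star}\bigl[\xi^k_h(s^\bullet_h)\mid s_1=s\bigr].
$$
Each coefficient satisfies $(1+1/H)^{h-1}\le(1+1/H)^H\le e$. The expectation is over trajectories of the true dynamics under $\pi^\star$. Since $\xi^k_h$ depends only on $s^\bullet_h$, whose law is independent of the policy, the expectation can be written as $\E[\cdot\mid s_1^\bullet=s^\bullet]$ as in the statement. Finally, a union bound over $h\in[H]$ and over the two lemmas, absorbed into the logarithms, gives probability $1-\delta$.

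The main obstacle is the second term. Lemma~\ref{lem:anyf} is stated for nonnegative $[0,1]$-valued $f$, while the value difference is signed and scaled by $H$. I would first split it into positive and negative parts, each in $[0,H]$, and apply the lemma to each after rescaling. This doubles constants, which are absorbed into $\xi$, and it yields the multiplicative $(1+1/H)$ factor. That factor is what makes the constant $e$ appear, rather than exponential blow-up.
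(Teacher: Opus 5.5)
Your proposal is correct and follows the paper's proof essentially step for step. It uses the same three-term decomposition at $a^\star=\pi^\star_h(s)$, Lemma~\ref{lem:fixedf} for the fixed function $V^\star_{h+1}$, the uniform Lemma~\ref{lem:anyf} for the data-dependent difference, and the same unrolling of the $(1+1/H)$ recursion, which gives the factor $e$ and an expectation over $s^\bullet_h$ that does not depend on the policy.

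Running the recursion on $\Delta_h=|V^\star_h-\widehat V^{\pi^\star}_h|$ with the positive/negative split handles the sign more cleanly than the paper does. The paper applies Lemma~\ref{lem:anyf} directly to the signed difference and keeps $H^{-1}P_h[V^\star_{h+1}-\widehat V^{\pi^\star}_{h+1}]$, whereas the lemma strictly yields $H^{-1}P_h[|V^\star_{h+1}-\widehat V^{\pi^\star}_{h+1}|]$.
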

\begin{proof}
    At any $h$, it holds

    \begin{align*}
        V_h^{\star}(s)-\widehat V_h^{\pi^\star}(s) &=P_h[V_{h+1}^{\star}](s,\pi_h^\star(s))-\widehat P_h^k[\widehat V_{h+1}^{\pi^\star}](s,\pi_h^\star(s))\\
        &=P_h[V_{h+1}^{\star}](s,\pi_h^\star(s))-\widehat P_h^k[V_{h+1}^{\star}](s,\pi_h^\star(s))\\
        &\quad + \widehat P_h^k[V_{h+1}^{\star}-\widehat V_{h+1}^{\pi^\star}](s,\pi_h^\star(s))\\
        &=\underbrace{(\widehat P_h^k-P_h)[V_{h+1}^{\star}-\widehat V_{h+1}^{\pi^\star}](s,\pi_h^\star(s))}_{T1}\\
        &\quad + \underbrace{P_h[V_{h+1}^{\star}](s,\pi_h^\star(s))-\widehat P_h^k[V_{h+1}^{\star}](s,\pi_h^\star(s))}_{T2}\\
        &\quad +P_h[V_{h+1}^{\star}-\widehat V_{h+1}^{\pi^\star}](s,\pi_h^\star(s)).
    \end{align*}

    We can bound $T1$ and $T2$ as follows.

    \begin{enumerate}
        \item By lemma \ref{lem:anyf}, for $f=V_{h+1}^{\star}-\widehat V_{h+1}^{\pi^\star}$ (this function has an output in $[-H,H]$ instead of $[0,1]$, so we have to add a $2H$ constant to the first term), $s=s$ and $a=\pi_h^\star(s)$,
        \begin{align*}
            T1 &=\widehat P_h^k[f](s,a)-P_h[f](s,a)\\
            &\le 
            \frac{6H^2S^\bullet\log(2KSA/\delta)}{t}+H^{-1}P_h[V_{h+1}^{\star}-\widehat V_{h+1}^{\pi^\star}](s,a).
        \end{align*}

        \item By lemma \ref{lem:fixedf}, for $f=V_{h+1}^{*}$, which is a deterministic function (even if bounded in $[0,H]$ instead of $[0,1]$, which forces to add an $H$ factor), and $s=s,a=\pi_h^k(s)$
        \begin{align*}
            T2 &= (\widehat P_h^k-P_h)[V_{h+1}^\star](s,\pi_h^\star(s))\le H\sqrt{\frac{\log(2SA/\delta)}{2t}}.
        \end{align*}
    \end{enumerate}

    Replacing $t=n_h^k(s^\bullet)$ we have

    $$V_h^{\star}(s)-\widehat V_h^{\pi^\star}(s)\le \underbrace{\frac{6H^2S^\bullet\log(2KSA/\delta)}{n_h^k(s^\bullet)}+\sqrt{\frac{H^2\log(2SA/\delta)}{2n_h^k(s^\bullet)}}}_{\xi_h^k(s^\bullet)}+(1+H^{-1})P_h[V_{h+1}^{\star}-\widehat V_{h+1}^{\pi^\star}](s,\pi_h^\star(s)).$$

    This implies, by recursion

    \begin{align*}
        V_h^{\star}(s)-\widehat V_h^{\pi^\star}(s)&\le \E\left[\sum_{\tau=h}^{H}\left(1+1/H\right)^{\tau-h}\xi_\tau^k(s_\tau^\bullet)\bigg |s_h=s, \pi^*\right]\\
        &\le e\E\left[\sum_{\tau=h}^{H}\xi_\tau^k(s_\tau^\bullet)\bigg |s_h^\bullet=s^\bullet\right].
    \end{align*}

    In the special case $h=1$, this is the thesis. Note that at the last step we have erased the dependence on $\pi^\star$ and $s^\diamond$, as from the structure of the process, the number of times $s^\bullet$ is visited does not depend on the actions.
\end{proof}

A similar result holds for the estimated optimal policy.

\begin{prop}\label{prop:kvaldiffer}
    The difference between the estimated value function of policy $\pi^k$ and its true value satisfies 

    $$\widehat V_1^{k}(s_0)-V_1^{\pi_k}(s_0)\le (1+e)\E\left[\sum_{h=1}^{H}\xi_h^k(s_h^\bullet)\bigg |s_1^\bullet=s^\bullet\right].$$
\end{prop}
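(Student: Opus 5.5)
We follow the template of Proposition~\ref{prop:optimalvaldiffer}, now along the trajectory distribution induced by $\pi^k$ rather than $\pi^\star$. Since $\widehat V_h^k$ is the value of $\pi^k$ in the estimated model (the output of \textsc{ValueIteration} with kernel $\widehat p^k$), at every step $h$ and state $s$, writing $a=\pi_h^k(s)$, the reward terms cancel and
\begin{align*}
\widehat V_h^{k}(s)-V_h^{\pi_k}(s)&=\widehat P_h^k[\widehat V_{h+1}^k](s,a)-P_h[V_{h+1}^{\pi_k}](s,a)\\
&=\underbrace{(\widehat P_h^k-P_h)[\widehat V_{h+1}^k-V_{h+1}^\star](s,a)}_{T1}+\underbrace{(\widehat P_h^k-P_h)[V_{h+1}^\star](s,a)}_{T2}+P_h[\widehat V_{h+1}^k-V_{h+1}^{\pi_k}](s,a).
\end{align*}
The reason for inserting $V_{h+1}^\star$ is that $\widehat V_{h+1}^k$ and $V^{\pi_k}_{h+1}$ depend on the data, so Lemma~\ref{lem:fixedf} cannot be applied to them directly. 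Instead, $T2$ involves only the deterministic function $V_{h+1}^\star$, and Lemma~\ref{lem:fixedf} (rescaled by $H$, with $a=\pi_h^k(s)$ covered by the union bound over $\As$) gives $T2\le H\sqrt{\log(2SA/\delta)/(2n_h^k(s^\bullet))}$.

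For $T1$ I use Lemma~\ref{lem:anyf}, which holds uniformly over $f$, rescaled to functions in $[-H,H]$. This gives
\[
T1\le \frac{6H^2S^\bullet\log(2KSA/\delta)}{n_h^k(s^\bullet)}+H^{-1}P_h\big[|\widehat V_{h+1}^k-V_{h+1}^\star|\big](s,a).
\]
The main obstacle is that this remainder is not yet in the recursive quantity $\widehat V_{h+1}^k-V_{h+1}^{\pi_k}$. I would split it as
\[
|\widehat V^k_{h+1}-V^\star_{h+1}|\le (\widehat V^k_{h+1}-V^{\pi_k}_{h+1})+(V^\star_{h+1}-V^{\pi_k}_{h+1})+2(V^\star_{h+1}-\widehat V^k_{h+1})^+ .
\]
\begin{itemize}
\item The first piece feeds the $(1+1/H)$ recursion, exactly as in Proposition~\ref{prop:optimalvaldiffer}.
\item For the last piece, $\widehat V^k_{h+1}\ge \widehat V^{\pi^\star}_{h+1}$ because $\pi^k$ is greedy for the estimated model. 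Hence $(V^\star-\widehat V^k)^+\le (V^\star-\widehat V^{\pi^\star})^+$, which Proposition~\ref{prop:optimalvaldiffer} bounds by $e\,\E[\sum_\tau \xi_\tau^k]$. This is the source of the extra $e$ in the constant $1+e$.
\item The regret-like middle piece $V^\star-V^{\pi_k}$ is the most delicate. I would handle it either by a joint induction over $h$ bounding both gaps simultaneously, or by noting $V^\star-V^{\pi_k}=(V^\star-\widehat V^{k})+(\widehat V^k-V^{\pi_k})$, so it reduces to the other two pieces, with the cross term only costing another $1/H$ factor.
\end{itemize}

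Combining, and absorbing the $H^{-1}$-weighted contributions into a $(1+O(1/H))$ multiplicative factor, gives
\[
\widehat V_h^k(s)-V_h^{\pi_k}(s)\le \xi_h^k(s^\bullet)+\tfrac{e}{H}\E\Big[\sum_{\tau>h}\xi^k_\tau\Big]+(1+\tfrac1H)P_h[\widehat V_{h+1}^k-V_{h+1}^{\pi_k}](s,\pi_h^k(s)).
\]
Unrolling over $h$, with $(1+1/H)^H\le e$, the $\xi$ terms contribute $e\,\E[\sum_h\xi_h^k]$ and the $e/H$ side terms summed over $H$ steps contribute at most another $e\,\E[\sum\xi]$. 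Some care with constants is needed to reach exactly $(1+e)$. Finally, as before, the expectation depends only on $s^\bullet$, because the law of the exogenous chain, and hence of $n_h^k(s_h^\bullet)$, does not depend on the policy or on $s^\diamond$. This yields the stated bound at $h=1$.
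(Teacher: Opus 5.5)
Your route is sound up to the absolute constant, but it departs from the paper's at one step, and as a result it cannot deliver the stated $(1+e)$.

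\textbf{Where the routes agree and where they part.} Your opening steps match the paper's. Your one-step recursion for $\widehat V_h^k-V_h^{\pi_k}$ is the simulation lemma unrolled, and you insert $V_{h+1}^\star$ so that Lemma~\ref{lem:fixedf} handles the fixed function and Lemma~\ref{lem:anyf} the data-dependent one. The routes part at the remainder $H^{-1}P_h[\cdot]$ produced by Lemma~\ref{lem:anyf}.

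The paper does not feed that remainder back into the recursion for $\widehat V^k-V^{\pi_k}$. Instead it runs a separate backward recursion on $\widehat V_h^k-V_h^\star$, using $V_h^\star(s)\ge r_h(s,\pi_h^k(s))+P_h[V_{h+1}^\star](s,\pi_h^k(s))$. This gives $\widehat V_h^k-V_h^\star\le e\,\E[\sum_{\tau\ge h}\xi_\tau^k]$, which it substitutes into the simulation-lemma sum; that sum carries no $(1+1/H)$ amplification.

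Your route has two merits. It is more economical: you never need an upper bound on $\widehat V^k-V^\star$, only the lower one from Proposition~\ref{prop:optimalvaldiffer} plus greediness. It is also more careful: you use $|\widehat V_{h+1}^k-V_{h+1}^\star|$, whereas the paper applies Lemma~\ref{lem:anyf} to a signed $f$ with remainder $H^{-1}P_h[f]$ instead of $H^{-1}P_h[|f|]$. Your observation $\widehat V^k\ge\widehat V^{\pi^\star}$ is exactly what repairs that step.

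\textbf{The shortfall is the constant.} First, the middle piece is not delicate. Exactly,
$|\widehat V^k-V^\star|=(\widehat V^k-V^{\pi_k})-(V^\star-V^{\pi_k})+2(V^\star-\widehat V^k)^+$, and since $V^\star\ge V^{\pi_k}$ you can simply drop it. Re-expanding it as you propose raises the recursion factor to $1+2/H$, i.e.\ an $e^2$ amplification.

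Second, even with it dropped you get
$\widehat V_h^k-V_h^{\pi_k}\le \xi_h^k+\frac{2e}{H}\E[\sum_{\tau>h}\xi_\tau^k]+(1+\frac1H)P_h[\widehat V_{h+1}^k-V_{h+1}^{\pi_k}]$.
Unrolling amplifies the side terms as well as the $\xi$ terms. Your count of ``another $e$'' overlooks this: in the worst case, with $\xi$ concentrated at late steps, you end near $(e+2e(e-1))\,\E[\sum_h\xi_h^k]$, well above $(1+e)$. No bookkeeping along this route removes that, because it is the feed-back structure itself that amplifies.

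This is harmless for the regret theorem, which only needs an absolute constant, but it is not the proposition as stated. To get $(1+e)$, keep the paper's structure:
\begin{enumerate}
\item Set $u_h:=\E[\sum_{\tau\ge h}(1+1/H)^{\tau-h}\xi_\tau^k]\le e\,\E[\sum_{\tau\ge h}\xi_\tau^k]$.
\item Show $|\widehat V_h^k-V_h^\star|\le u_h$ by backward induction. The upper side comes from the paper's auxiliary recursion, now with $|f|$. The lower side comes from your bound $V^\star-\widehat V^k\le V^\star-\widehat V^{\pi^\star}$, together with the unrelaxed form of Proposition~\ref{prop:optimalvaldiffer}, whose proof yields exactly $u_h$ before the final relaxation.
\item Insert this into the unamplified simulation-lemma sum, merging the two per-step contributions into a single $\xi_h^k$:
\[
\widehat V_1^k-V_1^{\pi_k}\le \sum_{h=1}^H\E\Big[\xi_h^k+\tfrac1H\,u_{h+1}\Big]\le (1+e)\,\E\Big[\sum_{h=1}^H\xi_h^k\Big].
\]
\end{enumerate}
In this accounting the $1$ comes from the $\xi$ terms and the $e$ from the $H^{-1}$-weighted remainder. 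It does not come from Proposition~\ref{prop:optimalvaldiffer}, as you suggested.
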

\begin{proof}
    By the simulation lemma,
    \begin{align}
        \widehat V_1^k(s_0)-V_1^{\pi_k}(s_0) &=\sum_{h=1}^H\E_{s_h,a_h\sim d_h^{\pi_h^k}}\left[(\widehat P_h^k-P_h)[\widehat V_{h+1}^k](s_h,\pi_h^k(s_h))\right] \label{eq:vhatk}\\
        &=\sum_{h=1}^H\E_{s_h,a_h\sim d_h^{\pi_h^k}}\left[(\widehat P_h^k-P_h)[\widehat V_{h+1}^k-V_{h+1}^\star](s_h,\pi_h^k(s_h))\right]\label{eq:firstpart}\\
        &\qquad +\sum_{h=1}^H\underbrace{\E_{s_h,a_h\sim d_h^{\pi_h^k}}\left[(\widehat P_h^k-P_h)[V_{h+1}^\star](s_h,\pi_h^k(s_h))\right]}_{\rho_h^k}\label{eq:rhohdef},
    \end{align}

    where the last line follows by subtracting and adding $(\widehat P_h^k-P_h)[V_h^*](s_h,\pi_h^k(s_h))$.
    By lemma \ref{lem:fixedf}, for $f=V_{h+1}^\star$, which is a deterministic function (even if bounded in $[0,H]$ instead of $[0,1]$, which forces to add an $H$ factor), and $s=s,a=\pi_h^k(s)$, the inner term of the second part is bounded by
    \begin{align}
        \rho_h^k &= (\widehat P_h^k-P_h)[V_{h+1}^\star](s,\pi_h^k(s))\\
        &=\left|\widehat P_h^k[f](s,a)-P_h[f](s,a)\right|\le H\sqrt{\frac{\log(2SA/\delta)}{2n_h^k(s^\bullet)}}\label{eq:rhoh}.
    \end{align}
    
    We now bound the first one. By definition of $V_h^\star$ and $\widehat V_h^k$ we have, for every $h\in [H]$ and $s'\in \Ss$,

    $\widehat V_h^k(s)-V_h^\star(s)=\max_{a\in \As} r_h(s,a)+\widehat P_h^k[\widehat V_{h+1}^k](s,a)-\max_{a\in \As} r_h(s,a)+P_h[V_{h+1}^\star](s,a)$. Therefore,

    \begin{align*}
        \widehat V_h^k(s)-V_h^\star(s) &\le \widehat V_h^k(s)-r_h(s,\pi_h^k(s))-\sum_{\bar s\in \Ss}p_h(\bar s|s,\pi_h^k(s))V_h^\star(\bar s)\\
        &=\widehat P_h^k[\widehat V_{h+1}^k](s,\pi_h^k(s))-P_h[V_{h+1}^\star](s,\pi_h^k(s))\\
        &=\widehat P_h^k[\widehat V_{h+1}^k](s,\pi_h^k(s))- P_h[\widehat V_{h+1}^k](s,\pi_h^k(s))\\
        &\ \ +P_h[\widehat V_{h+1}^k](s,\pi_h^k(s))-P_h[V_{h+1}^\star](s,\pi_h^k(s))\\
        &=(\widehat P_h^k-P_h)[\widehat V_{h+1}^k](s,\pi_h^k(s))\\
        &\ \ +P_h[\widehat V_{h+1}^k-V_{h+1}^\star](s,\pi_h^k(s))\\
        &=\underbrace{(\widehat P_h^k-P_h)[\widehat V_{h+1}^k-V_{h+1}^\star](s,\pi_h^k(s))}_{T1}\\
        &\ \ +\underbrace{(\widehat P_h^k-P_h)[V_{h+1}^\star](s,\pi_h^k(s))}_{T2}\\
        &\ \ +P_h[\widehat V_{h+1}^k-V_{h+1}^\star](s,\pi_h^k(s)).
    \end{align*}

    We can bound $T1$ and $T2$:

    \begin{enumerate}
        \item By lemma \ref{lem:anyf}, for $f=\widehat V_{h+1}^k-V_{h+1}^\star$ (this function has an output in $[-H,H]$ instead of $[0,1]$, so we have to add a $2H$ constant to the first term), $s=s$ and $a=\pi_h^k(s)$,
        \begin{align}
            T1 &=\widehat P_h^k[f](s,a)-P_h[f](s,a)\\
            &\le \frac{6H^2S^\bullet\log(2KSA/\delta)}{t}+H^{-1}P_h[\widehat V_{h+1}^{k}-V_{h+1}^{\star}](s,a).\label{eq:stragerico}
        \end{align}

        \item By lemma \ref{lem:fixedf}, for $f=V_{h+1}^\star$, which is a deterministic function (even if bounded in $[0,H]$ instead of $[0,1]$, which forces to add an $H$ factor), and $s=s,a=\pi_h^k(s)$
        \begin{align*}
            T2 &= (\widehat P_h^k-P_h)[V_{h+1}^\star](s,\pi_h^k(s))\le H\sqrt{\frac{\log(2SA/\delta)}{2t}}.
        \end{align*}
    \end{enumerate}

    Therefore, we get

    $$\widehat V_h^k(s)-V_h^\star(s)\le (1+H^{-1})P_h[\widehat V_{h+1}^k-V_{h+1}^\star](s,\pi_h^k(s))+\xi_h^k(s^\bullet),$$

    which, as seen in the proof of proposition \ref{prop:optimalvaldiffer}, implies

    \begin{align*}
        V_h^{k}(s)-\widehat V_h^\star(s)\le e\E\left[\sum_{\tau=h}^{H}\xi_\tau^k(s_\tau^\bullet)\bigg |s_h=s\right].
    \end{align*}

    Replacing the result of equation \eqref{eq:stragerico} back into the initial problem, we get

    \begin{align*}
        &\sum_{h=1}^H\E_{s_h,a_h\sim d_h^{\pi_h^k}}\left[(\widehat P_h^k-P_h)[\widehat V_{h+1}^k-V_{h+1}^*](s_h,\pi_h^k(s_h))\right]\\
        &\qquad \overset{\ref{eq:stragerico}}{\le} \sum_{h=1}^H \E_{s_h,a_h\sim d_h^{\pi_h^k}}\left[\frac{6H^2S^\bullet\log(2KSA/\delta)}{n_h^k(s_h^\bullet)}+H^{-1}e\E\left[\sum_{\tau=h+1}^{H}\xi_\tau^k(s_\tau)\bigg |s_h=s\right]\right]\\
        &\qquad \le \sum_{h=1}^H \E_{s_h,a_h\sim d_h^{\pi_h^k}}\left[\xi_h^k(s_h^\bullet)+H^{-1}e\E\left[\sum_{\tau=h+1}^{H}\xi_\tau^k(s_\tau)\bigg |s_h=s\right]\right]\\
        &\qquad \le e\sum_{h=1}^H \E_{s_h,a_h\sim d_h^{\pi_h^k}}\left[\xi_h^k(s_h^\bullet)\right]\\
        &\le \sum_{h=1}^H \E_{s_h,a_h\sim d_h^{\pi_h^k}}\left[\xi_h^k(s_h^\bullet)\right]=e\E\left[\sum_{h=1}^{H}\xi_h^k(s_h^\bullet)\right].
    \end{align*}

    We use these passages to upper bound the first term in equation \ref{eq:firstpart}. We have already proved (equation \ref{eq:rhoh}) that the other term is bounded by
    $$\sum_{h=1}^H \E_{s_h,a_h\sim d_h^{\pi_h^k}}\left[\sqrt{\frac{H^2\log(2SA/\delta)}{2n_h^k(s_h^\bullet)}}\right]\le  \sum_{h=1}^H \E_{s_h,a_h\sim d_h^{\pi_h^k}}\left[\xi_h^k(s_h^\bullet)\right]=\E\left[\sum_{h=1}^{H}\xi_h^k(s_h^\bullet)\right].$$

    The last passage holds since the trajectory $\{s_h^\bullet\}_h$ is not influenced by the actions.
    Having bound both parts of equation \ref{eq:vhatk}, this completes the proof.
    
\end{proof}

\subsubsection{Regret bound}

\begin{thm}
    With probability at least $1-\delta$, the regret of algorithm \ref{alg:exavi} is bounded by
    $$R_K\le \bigot\left(H\sqrt{S^\bullet K\log(1/\delta)}+H^2{S^\bullet}^2\log(1/\delta)\right).$$
\end{thm}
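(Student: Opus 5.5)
The plan is the standard three-way regret decomposition, combined with Propositions \ref{prop:optimalvaldiffer} and \ref{prop:kvaldiffer} and a pigeonhole argument over the exogenous visit counts. Since $\widehat V$ is only updated after episode $k$, I read $\widehat V^{\cdot,k}$ and $n_h^k$ in those propositions as the quantities used to act in episode $k$ (one-episode shift) and write $n_h^k$ for short. For each episode $k$ I split
\[
V_1^\star(s_1^k)-V_1^{\pi_k}(s_1^k)=\bigl(V_1^\star-\widehat V_1^{\pi^\star,k}\bigr)(s_1^k)+\bigl(\widehat V_1^{\pi^\star,k}-\widehat V_1^{k}\bigr)(s_1^k)+\bigl(\widehat V_1^{k}-V_1^{\pi_k}\bigr)(s_1^k).
\]
The middle term is $\le 0$ because $\pi_k$ is greedy with respect to $\widehat Q^{k}$, so $\widehat V^k_1$ is the optimal value in the empirical model. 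The first and last terms are controlled by Propositions \ref{prop:optimalvaldiffer} and \ref{prop:kvaldiffer}. This gives
\[
R_K\le (1+2e)\sum_{k=1}^K \E\Bigl[\textstyle\sum_{h=1}^H \xi_h^k(s_h^\bullet)\,\big|\, s_1^\bullet=s_1^{\bullet,k}\Bigr],
\]
on the good event of all earlier concentration results, after a union bound over $h$ and $k$ that only inflates the logarithms. The key structural point, already used in both propositions, is that the law of $\{s_h^\bullet\}_h$ does not depend on the policy. Hence the expectation is over the same exogenous chain that actually generates $s_h^{\bullet,k}$.

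The second step replaces expectations by realized counts. The differences $\E[\sum_h\xi_h^k(s_h^\bullet)\mid\mathcal F_{k-1}]-\sum_h\xi_h^k(s_h^{\bullet,k})$ form a martingale difference sequence in $k$. I will truncate $\xi$ at $H$ for episodes with $n_h^k=0$, which costs at most $H^2 S^\bullet$ in total. Azuma then adds a term of order $H\sqrt{K\log(1/\delta)}$ with a per-episode range of $H$; I will refine this with Freedman so that it stays below the leading term. Pigeonhole on the realized counts then gives
\[
\sum_{k}\sum_h \frac{1}{n_h^k(s_h^{\bullet,k})}\le HS^\bullet\log K,
\qquad
\sum_k \frac{1}{\sqrt{n_h^k(s_h^{\bullet,k})}}\le 2\sqrt{S^\bullet K}\ \text{ for each } h.
\]
The first bound handles the lower-order part $6H^2S^\bullet\log(\cdot)/n$ of $\xi$.

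The main obstacle is the horizon exponent. If each $h$ is bounded separately, the term $H\sqrt{\log/n}$ in $\xi$ summed over $h$ and $k$ gives $H\cdot H\sqrt{S^\bullet K}$, one $H$ too many. Similarly, the lower-order term naively gives $H^3 {S^\bullet}^2$. To remove the extra $H$ in the leading term, I will first re-derive Lemma \ref{lem:fixedf} with Bernstein (Theorem \ref{thm:bernstein}) instead of Hoeffding. The deviation of $(\widehat P_h^k-P_h)[V^\star_{h+1}]$ is then $\sqrt{\mathrm{Var}_h(V^\star_{h+1})\log/n}$ plus $H\log/n$. Next I apply Cauchy--Schwarz jointly over $(k,h)$ together with the law of total variance, $\E[\sum_h\mathrm{Var}_h(V^\star_{h+1})]\le H^2$. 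This yields the stated rate
\[
\sum_{k,h}\sqrt{\frac{\mathrm{Var}_h}{n_h^k}}\le\sqrt{\sum_{k,h}\mathrm{Var}_h}\cdot\sqrt{\sum_{k,h}\frac1{n_h^k}},
\]
whose leading term is $H\sqrt{S^\bullet K\log(1/\delta)}$ once the variance sum is bounded by $HK$, i.e.\ with rewards normalized so that the per-episode return variance is $O(H)$.

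I expect this variance-sum normalization to be exactly where the argument is tight. If it fails, I would fall back to accepting the $\sqrt H$ loss from the total-variance bound $H^2$ per episode. For the lower-order part, I will rerun the recursion of Proposition \ref{prop:kvaldiffer} with the $(1+1/H)$ amplification, so that the $6H^2 S^\bullet\log/n$ contributions are summed once, not once per step. The remaining ${S^\bullet}$ factor comes from the union over $\bar s^\bullet$ in Proposition \ref{prop:unip}, which should produce the $H^2{S^\bullet}^2\log(1/\delta)$ term.
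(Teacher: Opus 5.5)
Your first stage is exactly the paper's proof: the three-way split, Propositions~\ref{prop:optimalvaldiffer} and~\ref{prop:kvaldiffer}, and pigeonhole on exogenous counts. Your Freedman step, which turns $\E[\sum_h\xi_h^k(s_h^\bullet)]$ into realized counts, fills a step the paper skips, since it applies pigeonhole directly to $s_h^{\bullet,k}$. Note that after truncating each $\xi_h^k$ at $H$, the per-episode range is $H^2$, not $H$, so you do need the self-bounding Freedman argument there and not Azuma.

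That stage only yields $\bigot(H^2\sqrt{S^\bullet K}+H^3{S^\bullet}^2)$. The paper's own computation ends at $HC_1S^\bullet\log K+2HC_2\sqrt{S^\bullet K}$ with $C_1=6H^2S^\bullet\log(2KSA/\delta)$ and $C_2=H\sqrt{\log(2SA/\delta)/2}$. That is the rate of the main-text theorem, not the $H$ and $H^2$ exponents written in this restatement. So the statement as written rests entirely on your two $H$-shaving devices, and neither goes through.

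\textbf{Leading term.} Bernstein plus Cauchy--Schwarz gives $\sqrt{\sum_{k,h}\mathrm{Var}_h}\cdot\sqrt{\sum_{k,h}1/n_h^k}$.
\begin{itemize}
\item The second factor is of order $\sqrt{HS^\bullet\log K}$.
\item The law of total variance only gives $\sum_{k,h}\mathrm{Var}_h\lesssim H^2K$.
\end{itemize}
Together these give $H^{3/2}\sqrt{S^\bullet K}$, not $H\sqrt{S^\bullet K}$. The ``$O(H)$ return variance'' you need is an extra assumption, not a normalization. With rewards in $[0,1]$, a first exogenous transition into an absorbing exogenous state paying $1$ at every step, or one paying $0$, each with probability $1/2$, already gives return variance $\approx H^2/4$.

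There is a second problem. In Proposition~\ref{prop:kvaldiffer} the variances $\mathrm{Var}_h(V^\star_{h+1})$ are averaged over the occupancy of $\pi_k$, not $\pi^\star$. Also, $\Phi[V^\star_{h+1}](\cdot;s,a)$ depends on $(s^\diamond,a)$, so policy-independence of the exogenous chain does not let you invoke the law of total variance directly. You would need an Azar-style comparison of $\mathrm{Var}(V^\star_{h+1})$ with $\mathrm{Var}(V^{\pi_k}_{h+1})$, whose remainder feeds back into the regret.

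\textbf{Lower-order term.} The $(1+1/H)$ recursion does not make the $6H^2S^\bullet\log(2KSA/\delta)/n_h^k$ contributions count once per episode. Every step of the trajectory contributes its own term, and $(1+1/H)^H\le e$ only controls the multiplicative blow-up. Pigeonhole therefore still gives $\sum_h\sum_k 6H^2S^\bullet\log(2KSA/\delta)/n_h^k\lesssim H^3{S^\bullet}^2\log(2KSA/\delta)\log K$.

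The extra $H$ enters when Proposition~\ref{prop:unip}, stated for $f\in[0,1]$, is rescaled to $f=\widehat V^k_{h+1}-V^\star_{h+1}$, which has range $2H$. Removing it requires a variance-aware treatment of that correction term, not a rearrangement of the recursion. As it stands, your plan proves at most $\bigot(H^{3/2}\sqrt{S^\bullet K}+H^3{S^\bullet}^2)$, even with the variance comparison supplied. Without the Bernstein refinement it proves exactly the paper's $\bigot(H^2\sqrt{S^\bullet K}+H^3{S^\bullet}^2)$, which is the honest target for this argument.
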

\begin{proof}
    By design of the algorithm, we know that the policy $\pi_k$ we choose at each time step is such that
    $$\widehat V_1^{\pi_k,k}(s_0)=\widehat V_1^k(s_0)\ge \widehat V_1^{\pi^\star,k}(s_0),$$

    where $\widehat V_1^{\pi,k}(s_0)$ is the value function of a given policy if we replace the true transition $\{p_h\}_{h=1}^H$ with its estimation $\{\widehat p_h^k\}_{h=1}^H$ at episode $k$. Therefore, by the definition of regret,

    \begin{align*}
        R_K&=\sum_{k=1}^KV_1^{\star}(s_0)-V_1^{\pi_k}(s_0)\\
        &=\sum_{k=1}^KV_1^{\star}(s_0)-\widehat V_1^k(s_0)+\widehat V_1^k(s_0)-V_1^{\pi_k}(s_0)\\
        &=\sum_{k=1}^K\underbrace{V_1^{\star}(s_0)-\widehat V_1^{\pi^\star}(s_0)}_{\eqref{prop:optimalvaldiffer}}+\underbrace{\widehat V_1^{\pi^\star}(s_0)-\widehat V_1^k(s_0)}_{\le 0}+\underbrace{\widehat V_1^k(s_0)-V_1^{\pi_k}(s_0)}_{\eqref{prop:kvaldiffer}}\\
        &\le \sum_{k=1}^K \left((1+2e)\E\left[\sum_{h=1}^{H}\xi_h^k(s_h^{\bullet,k})\right]\right),
    \end{align*}

    where $\xi_h^k(s^\bullet)=\frac{6H^2S^\bullet\log(2KSA/\delta)}{n_h^k(s^\bullet)}+\sqrt{\frac{H^2\log(2SA/\delta)}{2n_h^k(s^\bullet)}}$. 
    Now, we only need to bound this quantity for any state-action sequence. Indeed, using the standard pigeonhole argument,

    \begin{align*}
        \sum_{k=1}^K\sum_{h=1}^{H}\xi_h^k(s_h^{\bullet,k})&=\sum_{k=1}^K\sum_{h=1}^{H}\frac{C_1}{n_h^k(s_h^{\bullet,k})}+\frac{C_2}{\sqrt{n_h^k(s_h^{\bullet,k})}}\\
        &=\sum_{h=1}^{H}\sum_{k=1}^K\frac{C_1}{n_h^k(s_h^{\bullet,k})}+\frac{C_2}{\sqrt{n_h^k(s_h^{\bullet,k})}}\\
        &=\sum_{h=1}^{H}\sum_{s^\bullet \in \Ss^\bullet}\sum_{i=1}^{n_h^K(s^\bullet)}\frac{C_1}{i}+\frac{C_2}{\sqrt{i}}\\
        &=\sum_{h=1}^{H}\sum_{s^\bullet \in \Ss^\bullet}C_1\log(n_h^K(s^\bullet))+2C_2\sqrt{n_h^K(s^\bullet)}\\
        &\le HC_1S^\bullet \log(K)+2HC_2\sqrt{S^\bullet K},
    \end{align*}

    where $C_1=6H^2S^\bullet\log(2KSA/\delta)$ and $C_2=\sqrt{H^2\log(2SA/\delta)/2}$. This completes the proof.
\end{proof}
\subsection{Regret Bound for \exaq}\label{app:proof-exaq}
In this section, we prove the regret bound for our value-based algorithm \ref{alg:exaq}. We start by recalling some definitions that help in lightening the notation.
Define, for $f:\Ss\times \As\to \R$ and $g:\Ss^\bullet\to \R$ the following operators:

$$\textbf{P}_h^{k}[f](s,a):=\sum_{\bar s^\diamond\in \Ss^\diamond}f(\bar s^\diamond,s_{h+1}^{\bullet,k})p_h^\diamond(\bar s^\diamond|s,a)\qquad \textbf{P}_h^{\bullet,k}[g](s^\bullet):=g(s_{h+1}^{\bullet,k}).$$

Fix $(s,a)$ and call $k_i$ the episodes where $s_h^{\bullet,k_i}=s^\bullet$, so that $t=n_h^k(s^\bullet)$.

\begin{lem}
    Let $h\le H-1$ and $\alpha_t^i$ be a learning rate schedule such that $\sum_{i=0}^t \alpha_t^i=1$. The following equation holds
    $$Q_h^k(s,a)-Q_h^\star(s,a)=\alpha_t^0(H-Q_h^\star(s,a))+\sum_{i=1}^t\alpha_t^i \Big(\textbf{P}_h^{k_i}[V_{h+1}^{k_i}-V_{h+1}^\star](s,a)+(\textbf{P}_h^{k_i}-P_h)[V_{h+1}^\star](s,a)\Big).$$
\end{lem}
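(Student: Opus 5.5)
This identity is the analogue of Lemma 4.2 in \citet{jin2018q}, and we prove it by unrolling the update rule of Algorithm~\ref{alg:exaq}. As usual, the weights are $\alpha_t^0:=\prod_{j=1}^t(1-\alpha_j)$ and $\alpha_t^i:=\alpha_i\prod_{j=i+1}^t(1-\alpha_j)$, so that $\sum_{i=0}^t\alpha_t^i=1$.

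\textbf{Step 1: the value $Q_h^k(s,a)$ changes only at the episodes $k_i$.} Fix $(s,a)$ with $s=(s^\diamond,s^\bullet)$. In episode $k$, the inner loop over $(s^\diamond,a)$ updates every pair whose exogenous component equals $s_h^{\bullet,k}$. So $Q_h(s,a)$ is modified exactly at the episodes $k_1<\dots<k_t$ with $s_h^{\bullet,k_i}=s^\bullet$. At the $i$-th such episode the learning rate is $\alpha_i$, because the counter $n_h^{k_i}(s^\bullet)$ equals $i$. This holds whatever $s^\diamond$ and $a$ are; it is the only place where the PCMDP structure enters. Unrolling the recursion $Q\gets(1-\alpha_i)Q+\alpha_i\big(r_h(s,a)+\textbf{P}_h^{k_i}[V_{h+1}^{k_i}](s,a)\big)$ from the initialization $Q_h^0$ gives
\[
Q_h^k(s,a)=\alpha_t^0 Q_h^0(s,a)+\sum_{i=1}^t\alpha_t^i\big(r_h(s,a)+\textbf{P}_h^{k_i}[V_{h+1}^{k_i}](s,a)\big).
\]
The initialization should be read as the optimistic value $H$, matching the statement; with $Q_h^0=H-h$ one instead gets $\alpha_t^0(H-h-Q_h^\star)$, and the argument is otherwise unchanged.

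\textbf{Step 2: rewrite $Q_h^\star$ in the same form.} The Bellman equation gives $Q_h^\star(s,a)=r_h(s,a)+P_h[V_{h+1}^\star](s,a)$, where $P_h$ is the true conditional-expectation operator. Since $\sum_{i=0}^t\alpha_t^i=1$,
\[
Q_h^\star=\alpha_t^0Q_h^\star+\sum_{i=1}^t\alpha_t^i\big(r_h+P_h[V_{h+1}^\star]\big).
\]

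\textbf{Step 3: subtract and split.} Subtracting the two expressions, the reward terms cancel. Inside the sum we add and subtract $\textbf{P}_h^{k_i}[V_{h+1}^\star](s,a)$. Because $\textbf{P}_h^{k_i}$ is linear in its argument, each summand becomes
\[
\textbf{P}_h^{k_i}[V_{h+1}^{k_i}-V_{h+1}^\star]+(\textbf{P}_h^{k_i}-P_h)[V_{h+1}^\star].
\]
The leftover initialization term is $\alpha_t^0(H-Q_h^\star(s,a))$, which is exactly the claimed identity.

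\textbf{Main obstacle: bookkeeping, not estimation.} Step 1 needs the target at episode $k_i$ to use $V_{h+1}^{k_i}$ and $\textbf{P}_h^{k_i}$ with $s_{h+1}^{\bullet,k_i}$. This is true because the targets in episode $k_i$ are built from $V^{k_i}$, which is computed at the start of the episode, before that episode's updates. One also has to check that no other episode touches $Q_h(s,a)$, which follows from the test $s=(s^\diamond,s_h^{\bullet,k})$. The restriction $h\le H-1$ only ensures that $\textbf{P}_h^{k}$ is the genuine operator rather than the convention $0$ used at $h=H$.
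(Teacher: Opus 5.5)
Your proof is correct and follows the same route as the paper's. Both write $Q_h^\star$ as the $\alpha_t^i$-weighted combination of $r_h+P_h[V_{h+1}^\star]$, add and subtract $\textbf{P}_h^{k_i}[V_{h+1}^\star](s,a)$, and subtract the result from the unrolled update. Your version is more careful than the paper's: it states the unrolling explicitly, keeps the correct index $k_i$ (the paper's last line writes $\textbf{P}_h^{k}$ and carries a stray bonus $b_i$), and correctly flags that the stated identity presumes initialization $H$ rather than the algorithm's $H-h$.
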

\begin{proof}
    Take $Q_h^\star(s,a)$, and let $i$ be an index enumerating all the times $s^\bullet$ have been visited at step $h$, so that at episode $k$ $i$ can range up to $t:=n_h^k(s^\bullet)$. As $\sum_{i=0}^t \alpha_t^i=1$,
    $$Q_h^\star(s,a)=\sum_{i=0}^t \alpha_t^iQ_h^\star(s,a).$$
    The following identities follow from the definition of the operator
    \begin{align*}
        Q_h^\star(s,a)&=\alpha_t^0Q_h^\star(s,a)+\sum_{i=1}^t \alpha_t^iQ_h^\star(s,a)\\
        &=\alpha_t^0Q_h^\star(s,a)+\sum_{i=1}^t \alpha_t^i\left(r_h(s,a)+P_h[V_{h+1}^\star](s,a)\right)\\
        &=\alpha_t^0Q_h^\star(s,a)+\sum_{i=1}^t \alpha_t^i\left(r_h(s,a)+(P_h-\textbf{P}_h^{k_i})[V_{h+1}^\star](s,a)+\textbf{P}_h^{k_i}[V_{h+1}^\star](s,a)\right).
    \end{align*}
    
    Subtracting this to equation \eqref{eq:qlj} we get
    \begin{align*}
        Q_h^k(s,a)-Q_h^\star(s,a)&=\alpha_t^0(H-Q_h^\star(s,a))+\sum_{i=1}^t\alpha_t^i \Big(\textbf{P}_h^{k}[V_{h+1}^{k_i}-V_{h+1}^\star](s,a)+(\textbf{P}_h^{k_i}-P_h)[V_{h+1}^\star](s,a)+b_i\Big).
    \end{align*}
\end{proof}
The former result shows that, under a mild condition on the learning rate, the result $Q-$function estimated at any stage of every episode in a state-action pair $(s,a)$ may be written as a combination of the $Q-$functions evaluated in the same $(s,a)$ corresponding to all the episodes $k_i$ where the given exogenous state $s^\bullet$ was seen. The former result does not concern the variance of the estimate, and is valid pointwise. The next step is to show convergence to the optimal $Q^\star$-function via a concentration argument.
Again, due to the problem structure,
$$\E\left[(\textbf{P}_h^{k_i}-P_h)[V_{h+1}^\star](s,a)\Big|\Fs_{k_i}\right]=0,$$

as $s_{h+1}^{\bullet,k_i}$ is sampled from $p_h^\bullet(\cdot|s^\bullet)$. Moreover, the full term is bounded:

$$|(\textbf{P}_h^{k_i}-P_h)[V_{h+1}^\star](s,a)|\le H\qquad a.s.$$
Azuma-Hoeffding's inequality ensures that, w.p. $1-\delta$
\begin{equation}
    \sum_{i=1}^{t}\alpha_t^i(\textbf{P}_h^{k_i}-P_h)[V_{h+1}^\star](s,a)\le 2H\sqrt{\sum_{i=1}^t(\alpha_t^i)^2\log(1/\delta)}.
\end{equation}

If we choose $\alpha_t$ according to \ref{lem:alphachoice}, the former gives

$$2H\sqrt{\sum_{i=1}^t(\alpha_t^i)^2\log(1/\delta)}\le \sqrt{\frac{8H^3\log(1/\delta)}{t}}:=b_t.$$

Therefore, we have $Q_h^k(s,a)-Q_h^\star(s,a)$

\begin{align*}
    &= \alpha_t^0(H-Q_h^\star(s,a))+\sum_{i=1}^t\alpha_t^i \Big(\textbf{P}_h^{k_i}[V_{h+1}^{k_i}-V_{h+1}^\star](s,a)+(\textbf{P}_h^{k_i}-P_h)[V_{h+1}^\star](s,a)\Big)\\
    &\le \alpha_t^0(H-Q_h^\star(s,a))+\sum_{i=1}^t\alpha_t^i \textbf{P}_h^{k_i}[V_{h+1}^{k_i}-V_{h+1}^\star](s,a)+b_t.
\end{align*}

In this way, we have proved that the following inequality,
\begin{equation}
    Q_h^k(s,a)-Q_h^\star(s,a)\le \alpha_t^0H+\sum_{i=1}^t\alpha_t^i \textbf{P}_h^{k_i}[V_{h+1}^{k_i}-V_{h+1}^\star](s,a)+b_t.\label{eq:q0}
\end{equation}

Let us call
$\rho_h^k(s^\bullet):=\sup_{s^\diamond,a}|Q_h^k(s^\bullet,s^\diamond,a)-Q_h^\star(s^\bullet,s^\diamond,a)|$.
From the previous equation \ref{eq:q0}, we get

\begin{align*}
    \sum_{k=1}^K\rho_h^k(s^{k,\bullet}) &\le \sum_{k=1}^K \sup_{s^\diamond,a} |Q_h^k(s_h^{\bullet,k},s^\diamond,a)-Q_h^\star(s_h^{\bullet,k},s^\diamond,a)| \\
    &\le \sum_{k=1}^K \alpha_{n_h^k}^0H+b_{n_h^k}\\
    &\qquad +\sum_{k=1}^K \sum_{i=1}^{n_h^k}\alpha_{n_h^k}^i\sup_{s^\diamond,a} |\textbf{P}_h^{k_i}[V_{h+1}^{k_i}-V_{h+1}^\star](s_h^{\bullet,k},s^\diamond,a)| \\
    &= \sum_{k=1}^K \alpha_{n_h^k}^0H+b_{n_h^k}+\sum_{k=1}^K \sum_{i=1}^{n_h^k}\alpha_{n_h^k}^i \textbf{P}_h^{\bullet,k_i}[\rho_{h+1}^{k_i}](s^{k,\bullet}).
\end{align*}

Here, we denote $n_h^k:=n_h^k(s_h^{\bullet,k})$, the number of times $s_h^{\bullet,k}$ has appeared before the current episode.
If $\alpha_t^i$ is chosen according to \eqref{eq:lr}, we can rearrange the last part of the sum as follows and apply Lemma \ref{lem:alphachoice}:

$$\sum_{k=1}^K \sum_{i=1}^{n_h^k}\alpha_{n_h^k}^i \textbf{P}_h^{\bullet,k_i}[\rho_{h+1}^{k_i}](s^{k,\bullet})\le \sum_{k=1}^K \textbf{P}_h^{\bullet,k}[\rho_{h+1}^{k}](s^{k,\bullet})\underbrace{\sum_{t=n_h^k(s^{k,\bullet})+1}^\infty \alpha_t^{n_h^k(s^{k,\bullet})}}_{\le 1+1/H}.$$

But, by definition of $\textbf{P}_h^{\bullet,k}$,

\begin{equation}
    \sum_{k=1}^K \textbf{P}_h^{\bullet,k}[\rho_{h+1}^{k}](s^{k,\bullet})=\sum_{k=1}^K\rho_{h+1}^k(s^{k,\bullet})\label{eq:incrementalrho}.
\end{equation}

For the other term, we have by the pigeonhole principle
\begin{align}
    \sum_{k=1}^K b_{n_h^k} &=\sum_{s^\bullet\in \Ss^\bullet} \sum_{t=1}^{n_h^K}\sqrt{\frac{8H^3\log(1/\delta)}{t}}\\
    &\lesssim \sqrt{H^3|\Ss^\bullet|K\log(1/\delta)}.\label{eq:bonusbound}
\end{align}

By \eqref{eq:lr}, $\alpha_t=\frac{H+1}{H+t}$ which entails 
$$\alpha_t^0=\prod_{j=1}^t (1-\alpha_j)=0\times \prod_{j=2}^t (1-\alpha_j)=0.$$

therefore, 

\begin{equation}
    \sum_{k=1}^K \alpha_{n_h^k}^0H\le H|\Ss^\bullet|\label{eq:constantterm}.
\end{equation}

So by equations \eqref{eq:incrementalrho},\eqref{eq:bonusbound},\eqref{eq:constantterm}, the recursion is

$$\sum_{k=1}^K\rho_h^k(s^{k,\bullet})\le H|\Ss^\bullet|+\sqrt{H^3|\Ss^\bullet|K\log(1/\delta)}+(1+1/H)\sum_{k=1}^K\rho_{h+1}^k(s^{k,\bullet}).$$

As $\rho_{H}^k(\cdot)=0$, this entails

\begin{align*}
    \sum_{k=1}^K\rho_1^k(s^{k,\bullet})&\le \left(1+1/H\right)^H\left(H|\Ss^\bullet|+\sqrt{H^3|\Ss^\bullet|K\log(1/\delta)}\right)\\
    &\le e\left(H|\Ss^\bullet|+\sqrt{H^3|\Ss^\bullet|K\log(1/\delta)}\right)=:\text{Reg}(H,\Ss^\bullet,K).
\end{align*}

Finally, we are able to bound the full regret. 

\regretboundtwo*
\begin{proof}
    For all $h\le H$ fix
    
    $$R_{h,K}:=\sum_{k=1}^K V_h^{\star}(s_h^k)-V_h^{\pi_k}(s_h^k)\qquad R_K=R_{1,K}\qquad R_{H,K}=0.$$
    
    we have for $h\le H-1$
    
    \begin{align*}
        \sum_{k=1}^K V_h^{\star}(s_h^k)-V_h^{\pi_k}(s_h^k)&=\sum_{k=1}^K Q_h^{\star}(s_h^k, \pi_h^\star(s_h^k))-Q_h^{\pi_k}(s_h^k,a_h^k)\\
        &= \sum_{k=1}^K Q_1^{\star}(s_h^k, \pi_h^\star(s_h^k))-Q_h^{\star}(s_h^k, a_h^k)+Q_h^{\star}(s_h^k,a_h^k)-Q_h^{\pi_k}(s_h^k,a_h^k)\\
        &= \sum_{k=1}^K Q_1^{\star}(s_h^k, \pi_h^\star(s_h^k))-Q_h^{\star}(s_h^k, a_h^k)+\sum_{k=1}^K P_hV_{h+1}^{\star}(s_h^k, a_h^k)-P_hV_{h+1}^{\pi_k}(s_h^k, a_h^k)\\
        &=\sum_{k=1}^K Q_h^{\star}(s_h^k, \pi_h^\star(s_h^k))-Q_h^{\star}(s_h^k, a_h^k)+\sum_{k=1}^K V_{h+1}^{\star}(s_{h+1}^k)-V_{h+1}^{\pi_k}(s_{h+1}^k)\\
        &\qquad +\sum_{k=1}^K \underbrace{\left(P_hV_{h+1}^{\star}(s_h^k, a_h^k)-P_hV_{h+1}^{\pi_k}(s_h^k, a_h^k)\right)-\left(V_{h+1}^{\star}(s_{h+1}^k)-V_{h+1}^{\pi_k}(s_{h+1}^k)\right)}_{\eta_h^k}\\
        &\le \underbrace{\sum_{k=1}^K Q_h^{k}(s_h^k, \pi_h^\star(s_h^k))-Q_h^{k}(s_h^k, a_h^k)}_{\le 0}+2\sum_{k=1}^K\rho_h^k(s^{k,\bullet})+\sum_{k=1}^K\eta_h^k+R_{h+1,K}\\
        &\lesssim 2\text{Reg}(H,\Ss^\bullet,K)+R_{h+1,K}.
    \end{align*}
    
    Here, the last step comes from the fact that $\sum_{k=1}^K\eta_h^k$ is martingale difference sequence with increments bounded by $2H$, so its sum is bounded by $2H\sqrt{K\log(1/\delta)}$ with probability at least $1-\delta$ (Azuma-Hoeffding), and the former is bounded by $\text{Reg}(H,\Ss^\bullet,K)$.
    
    Completing the recursion, we get
    $R_{h,K}\le 2\text{Reg}(H,\Ss^\bullet,K)+R_{h+1,K}$, which entails
    
    $$R_K\le H\text{Reg}(H,\Ss^\bullet,K)\le 2e\left(H^2|\Ss^\bullet|+\sqrt{H^5|\Ss^\bullet|K\log(1/\delta)}\right).$$
    
    This ends the proof.
\end{proof}

\subsection{Proof of the Lower Bound}\label{app:lowerbound}
In this section, we prove the lower bound.

\begin{figure}[t]
    \centering
\tikzset{every picture/.style={line width=0.75pt}} 

\begin{tikzpicture}[x=0.75pt,y=0.75pt,yscale=-1,xscale=1]

\draw [fill=black!10]  (295,977.9) .. controls (295,965.92) and (305.75,956.2) .. (319,956.2) .. controls (332.25,956.2) and (343,965.92) .. (343,977.9) .. controls (343,989.88) and (332.25,999.6) .. (319,999.6) .. controls (305.75,999.6) and (295,989.88) .. (295,977.9) -- cycle ;
\draw [fill=black!10]  (421,843.7) .. controls (421,831.72) and (431.75,822) .. (445,822) .. controls (458.25,822) and (469,831.72) .. (469,843.7) .. controls (469,855.68) and (458.25,865.4) .. (445,865.4) .. controls (431.75,865.4) and (421,855.68) .. (421,843.7) -- cycle ;
\draw [fill=black!10]  (154,839.7) .. controls (154,827.72) and (164.75,818) .. (178,818) .. controls (191.25,818) and (202,827.72) .. (202,839.7) .. controls (202,851.68) and (191.25,861.4) .. (178,861.4) .. controls (164.75,861.4) and (154,851.68) .. (154,839.7) -- cycle ;
\draw [fill=black!10]  (107,739.7) .. controls (107,727.72) and (117.75,718) .. (131,718) .. controls (144.25,718) and (155,727.72) .. (155,739.7) .. controls (155,751.68) and (144.25,761.4) .. (131,761.4) .. controls (117.75,761.4) and (107,751.68) .. (107,739.7) -- cycle ;
\draw [fill=black!10]  (188,741.7) .. controls (188,729.72) and (198.75,720) .. (212,720) .. controls (225.25,720) and (236,729.72) .. (236,741.7) .. controls (236,753.68) and (225.25,763.4) .. (212,763.4) .. controls (198.75,763.4) and (188,753.68) .. (188,741.7) -- cycle ;
\draw    (319,956.2) .. controls (319,924.52) and (196.49,911.46) .. (178.51,862.88) ;
\draw [shift={(178,861.4)}, rotate = 72.19] [color={rgb, 255:red, 0; green, 0; blue, 0 }  ][line width=0.75]    (10.93,-3.29) .. controls (6.95,-1.4) and (3.31,-0.3) .. (0,0) .. controls (3.31,0.3) and (6.95,1.4) .. (10.93,3.29)   ;
\draw    (319,956.2) .. controls (318.02,924.68) and (442.19,901.89) .. (444.96,867) ;
\draw [shift={(445,865.4)}, rotate = 88.4] [color={rgb, 255:red, 0; green, 0; blue, 0 }  ][line width=0.75]    (10.93,-3.29) .. controls (6.95,-1.4) and (3.31,-0.3) .. (0,0) .. controls (3.31,0.3) and (6.95,1.4) .. (10.93,3.29)   ;
\draw    (178,818) .. controls (178,784.54) and (214.26,809.69) .. (212.08,764.78) ;
\draw [shift={(212,763.4)}, rotate = 86.33] [color={rgb, 255:red, 0; green, 0; blue, 0 }  ][line width=0.75]    (10.93,-3.29) .. controls (6.95,-1.4) and (3.31,-0.3) .. (0,0) .. controls (3.31,0.3) and (6.95,1.4) .. (10.93,3.29)   ;
\draw    (178,818) .. controls (178,792.72) and (132.86,794.13) .. (131.06,763.33) ;
\draw [shift={(131,761.4)}, rotate = 90] [color={rgb, 255:red, 0; green, 0; blue, 0 }  ][line width=0.75]    (10.93,-3.29) .. controls (6.95,-1.4) and (3.31,-0.3) .. (0,0) .. controls (3.31,0.3) and (6.95,1.4) .. (10.93,3.29)   ;
\draw    (445,822) .. controls (445,788.54) and (481.26,813.69) .. (479.08,768.78) ;
\draw [shift={(479,767.4)}, rotate = 86.33] [color={rgb, 255:red, 0; green, 0; blue, 0 }  ][line width=0.75]    (10.93,-3.29) .. controls (6.95,-1.4) and (3.31,-0.3) .. (0,0) .. controls (3.31,0.3) and (6.95,1.4) .. (10.93,3.29)   ;
\draw    (445,822) .. controls (445,796.72) and (399.86,798.13) .. (398.06,767.33) ;
\draw [shift={(398,765.4)}, rotate = 90] [color={rgb, 255:red, 0; green, 0; blue, 0 }  ][line width=0.75]    (10.93,-3.29) .. controls (6.95,-1.4) and (3.31,-0.3) .. (0,0) .. controls (3.31,0.3) and (6.95,1.4) .. (10.93,3.29)   ;

\draw (310,968.6) node [anchor=north west][inner sep=0.75pt]   [align=left] {$\displaystyle z_{0}^\bullet$};
\draw (437,835.4) node [anchor=north west][inner sep=0.75pt]   [align=left] {$\displaystyle z_{N}^\bullet$};
\draw (169,830.4) node [anchor=north west][inner sep=0.75pt]   [align=left] {$\displaystyle z_{1}^\bullet$};
\draw (303,831.6) node [anchor=north west][inner sep=0.75pt]   [align=left] {...};
\draw (119,731.4) node [anchor=north west][inner sep=0.75pt]   [align=left] {$\displaystyle z_{1,1}^{\bullet }$};
\draw (200,731.4) node [anchor=north west][inner sep=0.75pt]   [align=left] {$\displaystyle z_{1,2}^{\bullet }$};
\draw (436,748.6) node [anchor=north west][inner sep=0.75pt]   [align=left] {...};
\draw (207,920.6) node [anchor=north west][inner sep=0.75pt]   [align=left] {$\displaystyle 1/N$};
\draw (391,922.6) node [anchor=north west][inner sep=0.75pt]   [align=left] {$\displaystyle 1/N$};
\draw (130,790.6) node [anchor=north west][inner sep=0.75pt]   [align=left] {$\displaystyle p_1$};
\draw (205,790.6) node [anchor=north west][inner sep=0.75pt]   [align=left] {$\displaystyle 1-p_1$};
\end{tikzpicture}
    \caption{The uncontroallable part of the hard MDP family}    \label{fig:lower}
\end{figure}
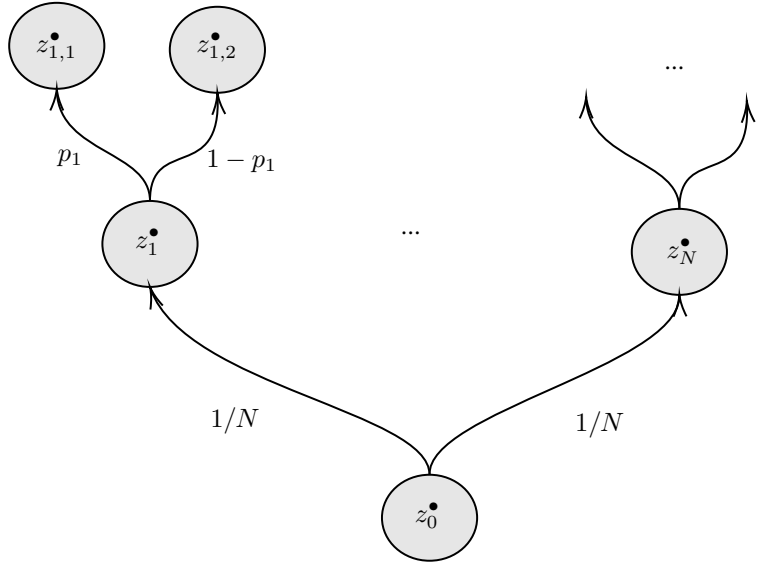

\lowerbound*
\begin{proof}
    Fix $N\in \mathbb N$ and consider the following family of PC-MDP instances:
    \begin{itemize}
        \item The uncontrollable state space $\Ss^\bullet$ has $3N+1$ states: $z_0^\bullet$, the original state, which brings to $z_1^\bullet,\dots z_N^\bullet$, each with probability $1/N$, which in turn bring to two states (denoted as $z_{i,1}^\bullet$ and $z_{i,2}^\bullet$) with probability $p_i$ and $1-p_i$ respectively.

        \item The action space is $\As=\{-1,+1\}$.

        \item The space is $\Ss^\diamond=\{0,-1,+1\}$. In the first two steps of the MDP, the agent always has $z^\diamond=0$. At the third step, the agent chooses between ending in $\{-1,+1\}$ according to the action at step $1$.

        \item The reward is identically zero except for the last step. There,
        $$r_3(z^\bullet,z^\diamond,a)=\begin{cases}
        z^\diamond \qquad &\text {if we are in a state }z_{i,1}^\bullet\\
        -z^\diamond \qquad &\text {if we are in a state }z_{i,2}^\bullet
        \end{cases}$$
    \end{itemize}

    The instance is completely identified by the vector $\bm p=[p_i]_{i=1}^n$. In figure \ref{fig:lower}, we provide a visual representation of the part of these MDPs relative to state $\Ss^\bullet$.

    By design of the problem, each instance reduces to learning with the different values of $z^\bullet_i$, for $i=1,\dots n$ being completely disentangled. The total regret on any algorithm thus reduces to

    \begin{align*}
        R_K&=\sum_{k=1}^K \left(V_1^\star(s_1^k) - V_1^{\pi_k}(s_1^k) \right)\\
        &=\sum_{i=1}^N\sum_{k=1}^K \left(V_1^\star(s_1^k) - V_1^{\pi_k}(s_1^k) \right)\bm 1(s_2^{k,\bullet}=i)\\
        &=\sum_{i=1}^N\sum_{k=1}^K \left(V_1^\star(z_{i,2}^\bullet) - V_1^{\pi_k}(z_{i,2}^\bullet) \right)\bm 1(s_2^{k,\bullet}=z_{i,2}^\bullet)\\
        &=:\sum_{i=1}^N R_{K_i,i}.
    \end{align*}

    Where $K_i$ is the number of episodes such that $s_2^{k,\bullet}=i$ and $R_{K_i,i}$ corresponds to the regret incurred while learning from state $z_i^\bullet$ for $K_i$ episodes. This decomposition holds since the reward and state $s^\diamond$ are null before step $2$ and, for any $i$, $p_i$, which is the only thing that the agent can learn, only influences the state-value function of $z_i^\bullet$.

    Now, let us fix $i$ and consider the problem of learning from state $z_i^\bullet$. The reward is determined in the following way:

    \begin{itemize}
        \item If the agent chooses action $a=1$, then $z^\diamond=1$, and the reward is $+1$ w.p. $p_i$ and $-1$ w.p. $1-p_i$, so a Dirchlet random variable with $2p_i-1$ as expected value
        \item Vice-versa, if $a=-1$,the reward corresponds to the  previous random variable times $-1$, that is, a Dirchlet of parameter $1-2p_i$. 
    \end{itemize}

    Let us call $X_{t}$, for $t=1,\dots K_i$ the random variable $\bm 1(s_2^{t,\bullet})$ (here $t$ only counts the episodes where the agent visits $z_i^\bullet$). For what we said, the agent's reward after action $a$ is $aX_t$, and the regret writes as

    \begin{equation}
    R_{K_i,i}=\underbrace{\max\{4p_i-2,0\}T_1}_{\text{when action }1\text{ is optimal}} + \underbrace{\max\{2-4p_i,0\}(K_i-T_1)}_{\text{when action }2\text{ is optimal}}\label{eq:neoregdef}
    \end{equation}

    Consider now two instances: (I) when $p_i=1/2+\Delta$ and (II) $p_i=1/2-\Delta$, for $\Delta=\frac{1}{4}K_i^{-1/2}$. 
    These two events introduces two probabilities $\Prob^{(I)}$ and $\Prob^{(II)}$ in the sequence of random variables $X_1,\dots X_{K_i}$. By independence, and the fact that $X_1,\dots X_{K_i}$ have Rademacher distribution, the KL divergence of the two probabilities is the following:
    $$\text{D}(\Prob^{(I)}|\Prob^{(II)})=2K_i\Delta\log\left(\frac{1/2+\Delta}{1/2-\Delta}\right)\approx 8K_i\Delta^2=1/2.$$    
    Now, for every learning algorithm, the random variable $T_1$ is measurable w.r.t. the sequence $X_1,\dots X_{K_i}$.  
    Therefore, the Bretagnolle-Huber inequality ensures

    \begin{equation}
        \Prob^{(I)}(T_1\le K_i/2)+\Prob^{(II)}(T_1> K_i/2)\ge \exp(-\text{D}(\Prob^{(I)}|\Prob^{(II)}))/2\approx e^{-1/2}/2\label{eq:uber}
    \end{equation}

    Therefore, we can use \eqref{eq:neoregdef} to compute the regret in the two cases

    \begin{enumerate}
        \item In instance $(I)$ the regret satisfies
        $$\E^{(I)}[R_{K_i,i}]\ge K_i/2\times K_i^{-1/2}\Prob^{(I)}(T_1\le K_i/2)$$
        \item In instance $(II)$ the regret satisfies
        $$\E^{(II)}[R_{K_i,i}]\ge K_i/2\times K_i^{-1/2}\Prob^{(II)}(T_1> K_i/2)$$
    \end{enumerate}
    Putting the two equations together, thanks to \eqref{eq:uber}, we have for any algorithm

    $$\E^{(I)}[R_{K_i,i}]+\E^{(II)}[R_{K_i,i}]\ge \frac{e^{-1/2}}{4} \sqrt{K_i},$$

    so that in particular $\E^{(I)}[R_{K_i,i}]\ge \frac{e^{-1/2}}{8} \sqrt{K_i}$ in at least one of the two instances.
    Coming back to our full regret bound, we note that $K_i$ follows a binomial distribution $\text{Bin}(K,N^{-1})$.
    Thus, 
    \begin{align*}
        \E\left[\sqrt K_i\right] &\ge \sqrt{\frac{K}{N}}\Prob(K_i\ge K/N)=\sqrt{\frac{K}{4N}}. 
    \end{align*}
    Therefore, taking the worst instances for every $i$,

    \begin{align*}
        \E[R_K]&=\E\left[\sum_{i=1}^N R_{K_i,i}\right]\ge \frac{e^{-1/2}}{8}\sum_{i=1}^N \E\left[\sqrt{K_i}\right]\\
        & = \frac{e^{-1/2}}{8}N\sqrt{\frac{K}{4N}}\ge \frac{1}{30}\sqrt{NK}\ge \frac{1}{90}\sqrt{S^\bullet K}.
    \end{align*}
\end{proof}
\section{Complexity Analysis}\label{app:complexity}
We analyze the space and time complexity of our approaches compared to standard baseline algorithms in the tabular setting. We consider both model-based methods (\textsc{UCBVI} vs.\ \textsc{ExAVI}) and model-free methods (\textsc{QL} vs.\ \textsc{ExAQ}).
Let $S^\diamond = |\mathcal{S}^\diamond|$ and $S^\bullet = |\mathcal{S}^\bullet|$ denote the cardinalities of the controllable and exogenous state spaces, respectively. The size of the global state space is $S = S^\diamond \cdot S^\bullet$, and $A = |\mathcal{A}|$ denotes the size of the action space.

\paragraph{Space Complexity}~~Standard \textsc{UCBVI} requires storing a value table ($Q_h$) for each step $h$ of the horizon $H$, with size $\Os(HSA)$. Crucially, it must also estimate the full transition matrix $\hat{P}(s' \mid s, a)$, which requires storing counts for every state-action-state tuple for each step of the episode. In the worst case (dense dynamics), this incurs a space complexity of $\Os(H S^2 A)$.
In contrast, \textsc{ExAVI} exploits the PCMDP structure by decomposing the transition dynamics. It assumes knowledge of the controllable kernel $p^\diamond$ (size $\Os(HS^{\diamond 2} A)$) and learns only the exogenous transition matrix $\hat{p}^\bullet(s^{\bullet_{h+1}} \mid s^\bullet_h)$, which has a size of $\Os(HS^{\bullet 2})$. If $S^\bullet \ll S$ and the components are decoupled, \textsc{ExAVI} achieves a significant reduction in model storage requirements: from multiplicative $(S^\diamond S^\bullet)^2$ to additive $S^{\diamond 2} + S^{\bullet 2}$.

Both standard \textsc{QL} and \textsc{ExAQ} store a Q-table of size $\Os(HSA)$. However, \textsc{ExAQ} additionally stores the controllable transition kernel $p^\diamond$, requiring $\Os(HS^{\diamond 2} A)$ space. In many practical domains (e.g., grid worlds, kinematics), these dynamics are localized or can be computed on-the-fly (as in our experiments), reducing the effective storage requirement. Consequently, when $S^\bullet \gg S^\diamond$ (as in our trading task), the memory overhead of \exaq is negligible compared to the size of the Q-table itself.

\paragraph{Time Complexity}~~For model-based methods, the computational bottleneck is the planning phase (backward induction). \textsc{UCBVI} performs a summation over the full state space $S$ for every state-action pair, scaling with $\Os(S^2 A)$. \textsc{ExAVI} mitigates this by computing expectations over $S^\diamond$ and $S^\bullet$ separately, reducing the complexity to $\Os((S^\diamond + S^\bullet) S A)$ in the best case.

For model-free methods, standard \textsc{QL} performs a single scalar update per step, $\Os(1)$. In contrast, \textsc{ExAQ} executes a \emph{counterfactual update} of the Q-table's controllable region. Upon observing an exogenous transition $s^\bullet_h \to s^\bullet_{h+1}$, the agent updates the Q-values for the current exogenous context across all reachable controllable states and actions. This results in a per-step complexity of $\Os(S^\diamond A)$.

\begin{table}[h]
    \centering
    \caption{Summary table with complexity analysis.}
    \label{tab:complexity}
    \vspace{0.2cm}
    \setlength{\tabcolsep}{5pt}
    \begin{tabular}{llcc}
        \toprule
        \textbf{Type} & \textbf{Algorithm} & \textbf{Space (Model + Value)} & \textbf{Time (Per Step / Plan)} \\
        \midrule
        \multirow{2}{*}{Model-Based} 
        & \textsc{UCBVI} & $\Os(HS^2 A + HSA)$ & $\Os(S^2 A)$ (Planning) \\
        & \textsc{ExAVI} & $\Os(HS^{\bullet 2} + HS^{\diamond 2}A + HSA)$ & $\Os((S^\diamond \!+\! S^\bullet)SA)$ (Planning) \\
        \midrule
        \multirow{2}{*}{Model-Free} 
        & \textsc{QL} & $\Os(HSA)$ & $\Os(1)$ \\
        & \textsc{ExAQ} & $\Os(HSA + HS^{\diamond 2}A)$ & $\Os(S^\diamond A)$ \\
        \bottomrule
    \end{tabular}
\end{table}

\section{Additional Results and Settings}\label{app:additional-results}
In this part of the appendix, we provide additional results and details about previous presented experiment. Moreover, we introduce the \texttt{ElevatorEnv} task on which we evaluate both \emph{model-based} and \emph{model-free} approaches. 

\subsection{Additional Experiment} \label{app:elevator-details}
In the following, we present the remaining experiment, excluded from the main paper for the sake of space. We start by providing the description of the task.

\paragraph{Elevator Dispatching Environment.}~~This task is a simplified adaptation of the elevator scheduling problem introduced by~\citet{crites1995elevator} and provided within the Gym4ReaL library~\citep{salaorni2025gym4real}. It simulates a \emph{peak-down traffic} scenario, typical of office buildings at the end of a workday. A single elevator serves a building with $F$ floors, transporting employees to the ground floor ($f=0$).
New passengers arrive at each floor $f \in \{1, \dots, F\}$ according to an independent \emph{Poisson process} with rate $\lambda_f$. Arriving passengers join a queue at their respective floor, provided the queue length is below a threshold $W_{\max}$; otherwise, they act as lost demand (e.g., taking the stairs).

The objective is to minimize the total cumulative waiting time of all passengers. Formally, we define the \emph{controllable state} at step $h$ as $s^\diamond_h = (\nu_h, \psi_h, \mathbf{w}_h)$, where $\nu_h \in \{0, \dots, F\}$ denotes the elevator's current floor, $\psi_h$ indicates the number of passengers currently inside the elevator (up to capacity $\Psi_{\max}$), and $\mathbf{w}_h \in \mathbb{N}^{F}$ represents the vector of queue lengths at each floor.
The \emph{exogenous state} is defined as the vector of new arrivals $s^\bullet_h = \boldsymbol{\kappa}_h \in \mathbb{N}^{F}$, where each component $\kappa_{f,h} \sim \text{Poisson}(\lambda_f)$ represents the number of users arriving at floor $f$ at step $h$. Note that while the arrival process $\boldsymbol{\kappa}_h$ is exogenous, the evolution of the queue lengths $\mathbf{w}_h$ is endogenous, as it depends on both the stochastic arrivals and the agent's pick-up actions.

The action space is discrete, $a_h \in \{\text{\textit{up}}, \text{\textit{down}}, \text{\textit{open}}\}$, allowing the elevator to move between floors or open doors to board/alight passengers.
The reward function is designed to penalize delays while incentivizing successful transport:
\begin{equation}
    r_h = -\left(\sum_{f=1}^{F} w_{f,h} + \psi_h\right) + \mathbbm{1}_{(\psi_h=0)} \cdot \beta \, \psi_{h-1},
\end{equation}
where the first term penalizes the total waiting time (users in queues plus users in the elevator), and the second term grants a bonus $\beta > 0$ for every passenger successfully offloaded at the ground floor.

\paragraph{Results on \texttt{ElevatorEnv}.}~~Figure~\ref{fig:elev-comparison} illustrates the learning performance on the \texttt{ElevatorEnv}. In the model-based setting (Figure~\ref{fig:elev-model-based}), \exavi demonstrates immediate convergence to the optimal policy, maintaining a steady return of approximately $350$. In contrast, \textsc{UCBVI} converges to a significantly suboptimal plateau ($\approx 250$). This suggests that the standard exploration bonuses were insufficient to guide the baseline out of local optima, whereas \exavi's structured updates allowed it to identify the optimal dispatching strategy instantly.
A similar trend is observed in the model-free comparison (Figure~\ref{fig:elev-model-free}). \exaq solves the task almost instantaneously, leveraging the known controllable dynamics to propagate value information across unvisited states. Conversely, standard \textsc{QL} exhibits a slow, asymptotic climb, requiring nearly $3,000$ episodes to match the performance that \textsc{ExAQ} achieved in the first few trials. This massive gap in sample efficiency highlights the effectiveness of counterfactual updates, made possible by isolating the exogenous state components.

\begin{figure}[ht]
    \centering
    \begin{subfigure}[b]{0.48\textwidth}
        \centering
        \includegraphics[width=\textwidth]{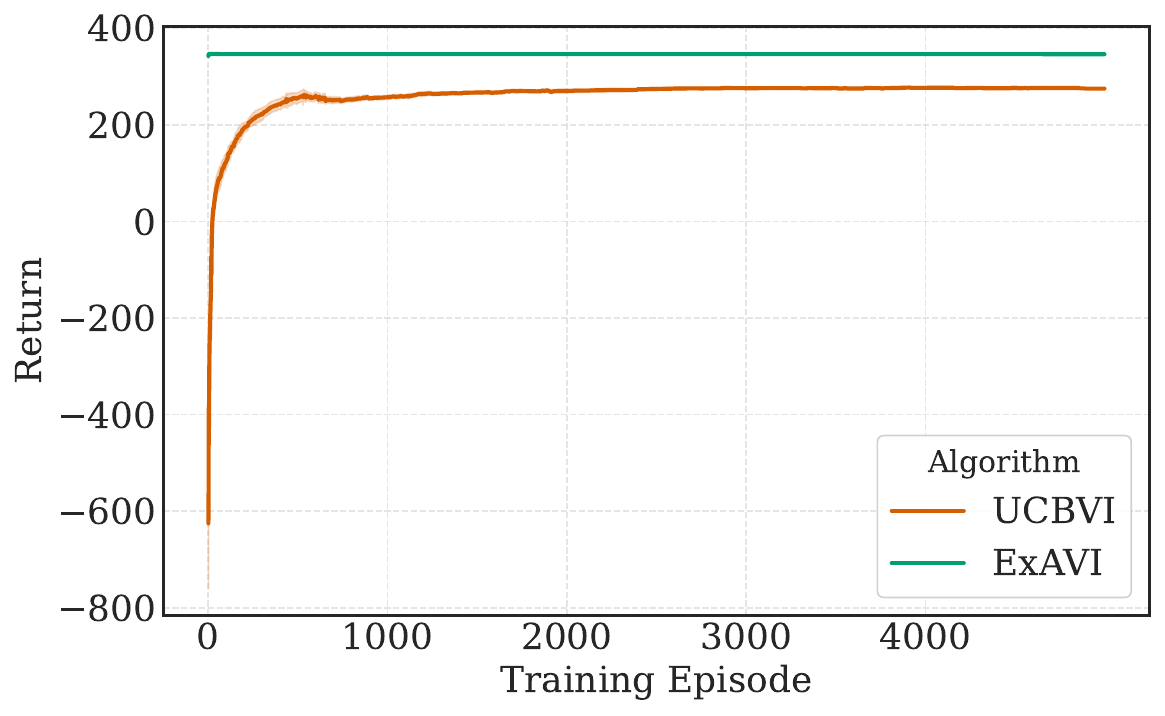}
        \caption{Model-based (\exavi vs.\ \textsc{UCBVI})}
        \label{fig:elev-model-based}
    \end{subfigure}
    \hfill
    \begin{subfigure}[b]{0.48\textwidth}
        \centering
        \includegraphics[width=\textwidth]{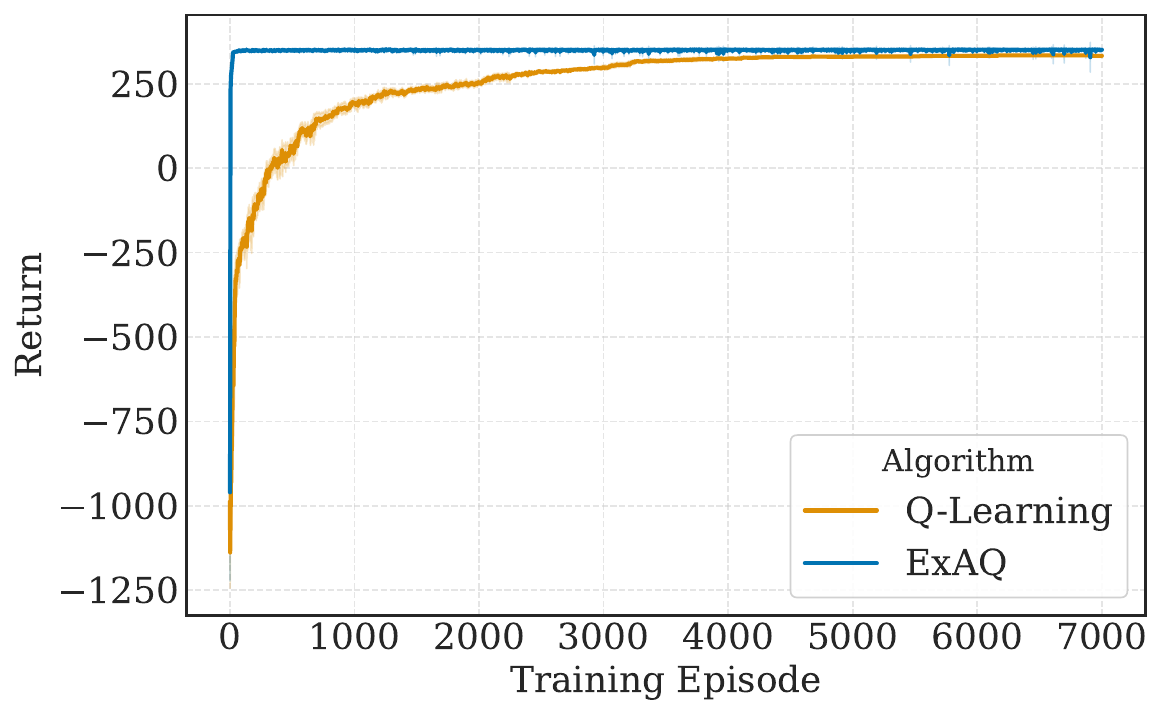}
        \caption{Model-free (\exaq vs.\ \textsc{QL})}
        \label{fig:elev-model-free}
    \end{subfigure}
    \caption{Comparative learning curves for the tiny version of \texttt{ElevatorEnv}, averaged over $10$ random seeds. Shaded regions denote $95\%$ confidence intervals. Figure (a) Model-based performance. Figure (b) Model-free performance.}
    \label{fig:elev-comparison}
\end{figure}

\paragraph{Qualitative Analysis on \texttt{TradingEnv}.}~~Complementing the main quantitative results, we qualitatively analyze the execution profiles learned by the different agents to gain intuition into their strategies. Figure~\ref{fig:trading-strategies} depicts the average inventory depletion over time for the optimal execution task. We compare our approach against standard RL baselines (\textsc{PPO}, \textsc{QL}) and the industry-standard \textsc{TWAP} (Time-Weighted Average Price) strategy.

From the plot, distinct behavioral patterns emerge. The \textsc{TWAP} strategy follows a deterministic linear liquidation path, splitting the order evenly across the entire horizon regardless of market conditions. In contrast, the standard model-free baselines (\textsc{QL} and \textsc{PPO}) exhibit extremely aggressive behavior, dumping nearly the entire inventory within the first $25$ steps. This suggests a failure to effectively balance risk and execution cost; these agents likely converged to a ``panic selling'' policy to avoid the variance of future price movements.
Conversely, \exaq demonstrates a sophisticated, adaptive strategy. It liquidates faster than \textsc{TWAP} to mitigate exposure to price volatility but maintains a much smoother profile than the other RL agents, fully clearing the position around step 90. The convex shape of the \exaq curve resembles theoretical optimal execution trajectories (e.g., Almgren-Chriss solutions), indicating that the agent successfully learned to balance the urgency of liquidation with the opportunity cost of aggressive selling.

\begin{figure}[t]
    \centering
    \includegraphics[width=0.6\linewidth]{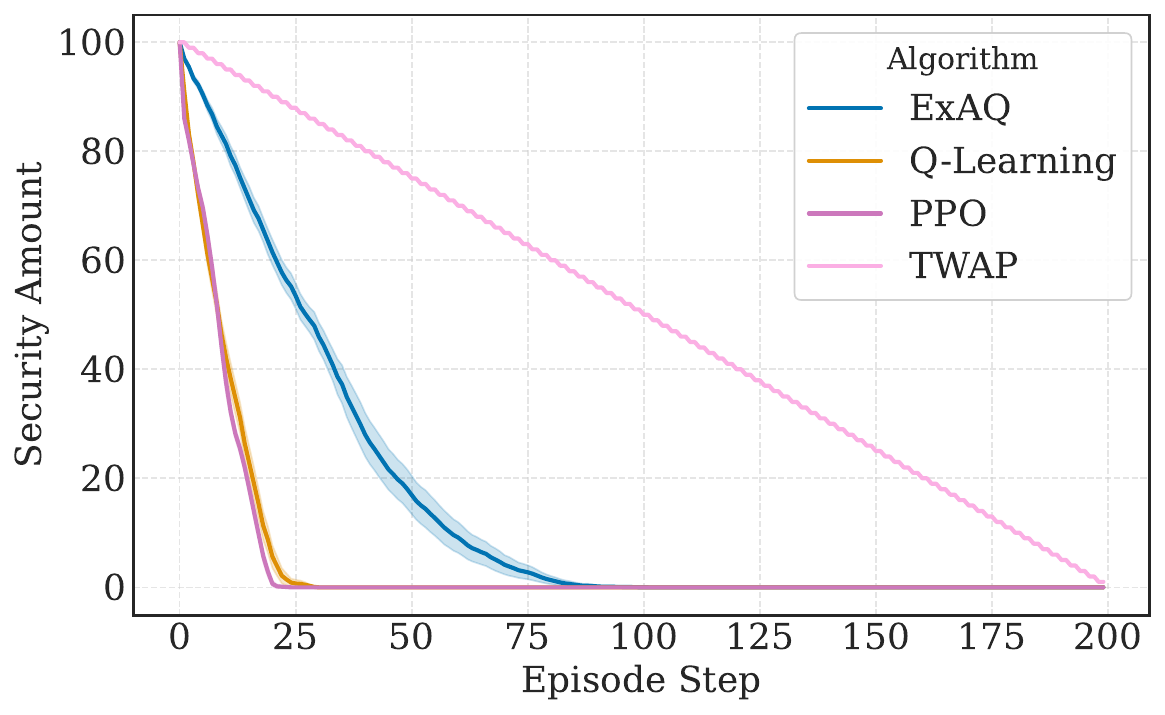}
    \caption{Inventory liquidation profiles for the optimal execution problem. \textsc{ExAQ} (blue) discovers a balanced strategy between the passive \textsc{TWAP} (pink) and the aggressive dumping of \textsc{PPO}/\textsc{QL}.}
    \label{fig:trading-strategies}
\end{figure}

\subsection{Environment Parametrization}\label{app:env-details}
In this section, we detail the configuration parameters to ensure the reproducibility of the experiments. Figure~\ref{fig:taxi-screenshot} illustrates the modified taxi environment, highlighting the three central cells (purple) where stochastic traffic congestion occurs, alongside the specific hyperparameters used for the simulation (in Table~\ref{tab:taxi-params}). We report the configuration of the trading and the elevator environments in Table~\ref{tab:trading-params} and Table~\ref{tab:elevator-params}, respectively.

\begin{figure}[ht]
    \centering
    \begin{minipage}[t]{0.45\textwidth}
        \centering
        \vspace{0pt} 
        \includegraphics[width=0.8\linewidth]{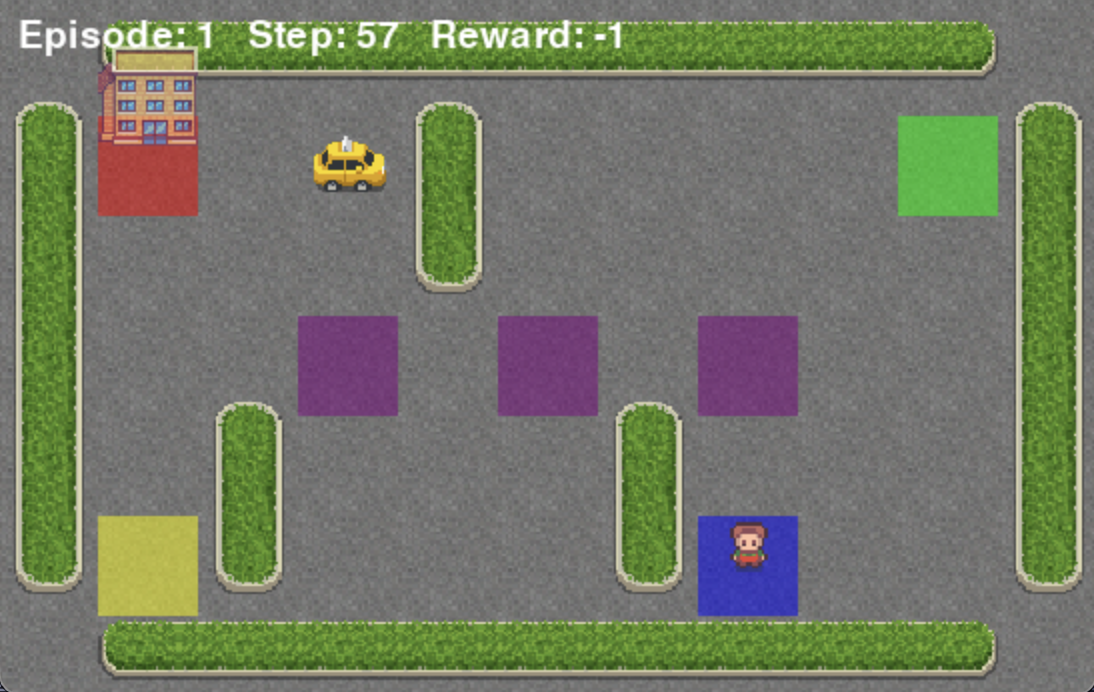}
        \captionof{figure}{Visual rendering of the environment.} 
        \label{fig:taxi-screenshot}
    \end{minipage}
    \hfill
    \begin{minipage}[t]{0.50\textwidth}
        \centering
        \vspace{0pt} 
        \captionof{table}{\texttt{TaxiEnv} parameters.}
        \label{tab:taxi-params} 
        \begin{tabular}{lr}
            \toprule
            \textbf{Parameter} & \textbf{Value} \\
            \midrule
            Horizon ($H$) & $200$ \\
            Grid Size & $5 \times 5$ \\
            Traffic Locations & $(2,1), (2,2), (2,3)$ \\
            Traffic Prob. ($p_{\text{tr}}$) & $0.3$ \\
            Passenger Locations & 4 Corners + Taxi \\
            \bottomrule
        \end{tabular}
    \end{minipage}
\end{figure}

\begin{table}[ht]
    \centering
    \begin{minipage}[t]{0.48\textwidth}
        \centering
        \caption{\texttt{TradingEnv} parameters.}
        \label{tab:trading-params}
        \begin{tabular}{lr}
            \toprule
            \textbf{Parameter} & \textbf{Value} \\
            \midrule
            \multicolumn{2}{l}{\textit{General Specs}} \\
            \midrule
            Horizon ($H$) & $200$ \\
            Price Range & $[90, 110]$ \\
            Initial Price ($\omega_0$) & $100.0$ \\
            Volatility ($\sigma$) & $0.3$ \\
            Drift ($\mu$) & $0.0$ \\
            Price Granularity & $0.02$ \\
            \midrule
            \multicolumn{2}{l}{\textit{Execution \& Impact}} \\
            \midrule
            Initial Inventory ($u_0$) & $100$ \\
            Risk Aversion ($\lambda$) & $100.0$ \\
            Transaction Cost ($\epsilon$) & $0.0625$ \\
            Adjusted Temp. Impact ($\tilde \eta$) & $2 \times 10^{-5}$ \\
            \bottomrule
        \end{tabular}
    \end{minipage}
    \hfill 
    \begin{minipage}[t]{0.48\textwidth}
        \centering
        \caption{\texttt{ElevatorEnv} parameters.}
        \label{tab:elevator-params}
        \begin{tabular}{lr}
            \toprule
            \textbf{Parameter} & \textbf{Value} \\
            \midrule
            \multicolumn{2}{l}{\textit{General Specs}} \\
            \midrule
            Horizon ($H$) & $300$ \\
            Floors ($F$) & $3$ (Indices $0,1,2$) \\
            Capacity ($\Psi_{\max}$) & $2$ \\
            Goal Floor & $0$ (Ground) \\
            \midrule
            \multicolumn{2}{l}{\textit{Passenger Dynamics}} \\
            \midrule
            Arrival Rate ($\lambda$) & $[0.01, 0.2]$ \\
            Max Queue ($W_{\max}$) & $2$ \\
            Max Batch ($\kappa_{\max}$) & $2$ \\
            \midrule
            \multicolumn{2}{l}{\textit{Rewards}} \\
            \midrule
            Delivery Bonus ($\beta$) & $+10.0$ \\
            Waiting Penalty & $-1.0$ \\
            \bottomrule
        \end{tabular}
    \end{minipage}
\end{table}

\section{Reproducibility}
The experiments have been conducted on a MacBook Pro 2023 equipped with an Apple M2 Pro CPU and 16 GB of RAM. The repository containing the source code will be disclosed in the camera-ready version.

\subsection{Hyperparameters}
Hereafter, to foster reproducibility, we report the hyperparameters utilized to obtain the presented results for every specific environment. 

\begin{table}[h]
    \centering
    \caption{Hyperparameter of model-based methods across \texttt{TaxiEnv} and \texttt{ElevatorEnv} tasks. Shared parameters are aligned, while specific tuning differences are highlighted.}
    \label{tab:hyperparameters_comparison}
    \vspace{0.2cm}
    \setlength{\tabcolsep}{3.5pt}
    \begin{tabular}{lcccc}
        \toprule
         & \multicolumn{2}{c}{\textbf{Taxi}} & \multicolumn{2}{c}{\textbf{Elevator}}\\
        \cmidrule(lr){2-3} \cmidrule(lr){4-5}
        \textbf{Parameter} & \textsc{UCBVI} & \textsc{ExAVI} & \textsc{UCBVI} & \textsc{ExAVI} \\
        \midrule
        \multicolumn{5}{l}{\textit{General Settings}} \\
        \midrule
        Discount Factor ($\gamma$) & $1.0$ & $1.0$ & $1.0$ & $1.0$ \\
        Training Seeds & \multicolumn{2}{c}{$[1 \dots 10]$} & \multicolumn{2}{c}{$[1 \dots 10]$} \\
        \midrule
        \multicolumn{5}{l}{\textit{Optimization}} \\
        \midrule
        Bonus ($C$) & $0.5$ & $-$ & $0.5$ & $-$ \\
        Delta ($\delta$) & $10^{-6}$ & $-$ & $10^{-6}$ & $-$ \\
        \midrule
        \multicolumn{5}{l}{\textit{Execution}} \\
        \midrule
        Total Episodes & \multicolumn{2}{c}{$5,000$} & \multicolumn{2}{c}{$5,000$} \\
        Eval Length & \multicolumn{2}{c}{50 eps} & \multicolumn{2}{c}{50 eps}\\
        \bottomrule
    \end{tabular}
\end{table} 

\begin{table}[h]
    \centering
    \caption{Hyperparameter of model-free methods across \texttt{TaxiEnv},  \texttt{ElevatorEnv}, and \texttt{TradingEnv} tasks. Shared parameters are aligned, while specific tuning differences are highlighted.}
    \label{tab:hyperparameters_comparison}
    \vspace{0.2cm}
    \setlength{\tabcolsep}{3.5pt}
    \begin{tabular}{lccccccccc}
        \toprule
         & \multicolumn{2}{c}{\textbf{ Taxi}} & \multicolumn{2}{c}{\textbf{ Elevator}} & \multicolumn{2}{c}{\textbf{Trading}} \\
        \cmidrule(lr){2-3} \cmidrule(lr){4-5} \cmidrule(lr){6-7} 
        \textbf{Parameter} & \textsc{QL} & \textsc{ExAQ} & \textsc{QL} & \textsc{ExAQ} & \textsc{QL} & \textsc{ExAQ} \\
        \midrule
        \multicolumn{7}{l}{\textit{General Settings}} \\
        \midrule
        Discount Factor ($\gamma$) & $1.0$ & $1.0$ & $1.0$ & $1.0$ & $1.0$ & $1.0$ \\
        Training Seeds & \multicolumn{2}{c}{$[1 \dots 10]$} & \multicolumn{2}{c}{$[1 \dots 10]$} & \multicolumn{2}{c}{$[1 \dots 10]$} \\
        \midrule
        \multicolumn{6}{l}{\textit{Execution}} \\
        \midrule
        Learning Rate ($\alpha$) & $0.05$ & $0.01$ & $0.01$ & $0.001$ & $1.0$ & $0.9$ \\
        Total Episodes & \multicolumn{2}{c}{$15,000$} & \multicolumn{2}{c}{$7,000$} & \multicolumn{2}{c}{$20,000$} \\
        Eval Length & \multicolumn{2}{c}{50 eps} & \multicolumn{2}{c}{50 eps} & \multicolumn{2}{c}{50 eps} \\
        \midrule
        \multicolumn{6}{l}{\textit{Exploration ($\epsilon$-Greedy)}} \\
        \midrule
        Initial Epsilon ($\epsilon_{\text{start}}$) & $1.0$ & $-$ & $1.0$ & $-$ & $1.0$ & $-$  \\
        Minimum Epsilon ($\epsilon_{\text{min}}$) & $0$ & $-$ & $0.05$ & $-$ & $0.05$ & $-$ \\
        Decay Rate & $0.99985$ & $-$ & $0.9995$ & $-$ & $0.9998$ & $-$  \\
        Decay Type & Exp & $-$ & Exp & $-$ & Mixed & $-$  \\
        \bottomrule
    \end{tabular}
\end{table}

\begin{table}[h]
    \centering
    \caption{Hyperparameter settings for \textsc{PPO} in the \texttt{TradingEnv} environment.}
    \label{tab:ppo_hyperparameters}
    \vspace{0.2cm}
    \setlength{\tabcolsep}{8pt} 
    \begin{tabular}{lc}
        \toprule
        \textbf{Parameter} & \textbf{Value} \\
        \midrule
        \multicolumn{2}{l}{\textit{General Settings}} \\
        \midrule
        Discount Factor ($\gamma$) & $0.999$ \\
        Number of Parallel Envs & $16$ \\
        Training Seeds & $[1 \dots 10]$ \\
        \midrule
        \multicolumn{2}{l}{\textit{Optimization}} \\
        \midrule
        Learning Rate ($\alpha$) & $10^{-4}$ \\
        LR Schedule & Linear Annealing \\
        Clipping Parameter ($\epsilon$) & $0.2$ \\
        GAE Parameter ($\lambda$) & $0.95$ \\
        Max Gradient Norm & $0.5$ \\
        Minibatch Size & $512$ \\
        Update Epochs & $4$ \\
        Advantage Normalization & True \\
        \midrule
        \multicolumn{2}{l}{\textit{Loss Coefficients}} \\
        \midrule
        Value Function Coef. ($c_{vf}$) & $0.5$ \\
        Entropy Coef. ($c_{ent}$) & $0.0$ \\
        \midrule
        \multicolumn{2}{l}{\textit{Execution}} \\
        \midrule
        Iterations & $5,000$ \\
        Steps per Iteration & $1,000$ \\
        \bottomrule
    \end{tabular}
\end{table}


\end{document}